\documentclass[lettersize,journal]{IEEEtran}
\usepackage{amsmath,amsfonts}
\usepackage{array}
\usepackage{textcomp}
\usepackage{stfloats}
\usepackage{verbatim}

\usepackage{mathtools}
\usepackage{amssymb}
\usepackage{listings}
\usepackage{xcolor}
\definecolor{codegreen}{rgb}{0,0.6,0}
\definecolor{codegray}{rgb}{0.5,0.5,0.5}
\definecolor{codepurple}{rgb}{0.58,0,0.82}
\definecolor{backcolour}{rgb}{0.95,0.95,0.92}

\lstdefinestyle{mystyle}{
    backgroundcolor=\color{backcolour},   
    commentstyle=\color{codegreen},
    keywordstyle=\color{magenta},
    numberstyle=\tiny\color{codegray},
    stringstyle=\color{codepurple},
    basicstyle=\ttfamily\footnotesize,
    breakatwhitespace=false,         
    breaklines=true,                 
    captionpos=b,                    
    keepspaces=true,                 
    numbers=left,                    
    numbersep=5pt,                  
    showspaces=false,                
    showstringspaces=false,
    showtabs=false,                  
    tabsize=2
}

\lstset{style=mystyle}
\usepackage{wrapfig, blindtext}
\definecolor{dodgerblue}{rgb}{0.12, 0.56, 1.0}
\definecolor{gold}{rgb}{1.0, 0.84, 0.0}

\usepackage[utf8]{inputenc} 
\usepackage[T1]{fontenc}    
\usepackage{hyperref}       
\usepackage{url}            
\usepackage{booktabs}       
\usepackage{amsfonts}       
\usepackage{nicefrac}       
\usepackage{microtype}      
\usepackage[ruled,vlined]{algorithm2e}
\usepackage{graphicx}
\usepackage{multirow}
\usepackage{subfigure}
\usepackage{amsmath}

\usepackage{enumitem}
\usepackage{amsthm}
\newtheorem{theorem}{Theorem}

\newtheorem{lemma}{Lemma}
\newtheorem{definition}{Definition}
\newtheorem{assumption}{Assumption}
\newtheorem{remark}{Remark}

\newtheorem{hypothesis}{Hypothesis}
\usepackage{bm}
\usepackage[numbers,sort]{natbib}

\hyphenation{op-tical net-works semi-conduc-tor IEEE-Xplore}

\begin{document}

\title{Understanding deep learning via decision boundary}

\author{Shiye Lei, Fengxiang He, Yancheng Yuan, Dacheng Tao,~\IEEEmembership{Fellow,~IEEE}
\thanks{S. Lei and D. Tao are with School of Computer Science, Faculty of Engineering, the University of Sydney. They were with JD Explore Academy, JD.com Inc. Email: slei5230@uni.sydney.edu.au and dacheng.tao@sydney.edu.au.}%
\thanks{F. He is with Artificial Intelligence and its Applications Institute, School of Informatics, University of Edinburgh. He was with JD Explore Academy, JD.com, Inc. E-mail: F.He@ed.ac.uk.}%
\thanks{Y. Yuan is with Department of Applied Mathematics, The Hong Kong Polytechnic University. Email: yancheng.yuan@polyu.edu.hk.}%
}

\markboth{Journal of \LaTeX\ Class Files}
{Lei \MakeLowercase{\textit{et al.}}: Understanding deep learning via decision boundary}


\maketitle

\begin{abstract}
This paper discovers that the neural network with lower decision boundary (DB) variability has better generalizability. Two new notions, {\it algorithm DB variability} and {\it $(\epsilon, \eta)$-data DB variability}, are proposed to measure the decision boundary variability from the algorithm and data perspectives. Extensive experiments show significant negative correlations between the decision boundary variability and the generalizability. From the theoretical view, two lower bounds based on algorithm DB variability are proposed and do not explicitly depend on the sample size. We also prove an upper bound of order $\mathcal{O}\left(\frac{1}{\sqrt{m}}+\epsilon+\eta\log\frac{1}{\eta}\right)$ based on data DB variability. The bound is convenient to estimate without the requirement of labels, and does not explicitly depend on the network size which is usually prohibitively large in deep learning.
\end{abstract}

\begin{IEEEkeywords}
decision boundary, neural network, explanability of deep learning, deep learning theory.
\end{IEEEkeywords}

\section{Introduction}
\IEEEPARstart{N}{eural} networks (NNs) have achieved significant success in vast applications \citep{krizhevsky2012imagenet,vaswani2017attention}, including computer vision \cite{he2016deep}, natural language processing \cite{brown2020language}, and data mining \cite{wang2015collaborative}. However, the advance of NNs is arduous to be characterized by conventional statistical learning theory based on hypothesis complexity \cite{mohri2018foundations}, such as VC-dimension \citep{vapnik1994measuring} and Rademacher complexity \citep{bartlett2002rademacher}. According to the conventional theory, models of larger hypothesis complexity possess worse generalizability, while neural networks are usually over-paramterized but have excellent generalizability. 

In this paper, we attempt to explain the excellent generalizability of deep learning from the perspective of decision boundary (DB) variability. 
Intuitively, the decision boundary variability of a neural networks is largely determined by two means: (1) the randomness introduced by the optimization algorithm, and (2) the fluctuations of the training data when they are sampled from the data generating distribution. Following this intuition, we design two terms, {\it algorithm DB variability} and {\it $(\epsilon,\eta)$-data DB variability}, to measure the DB variability. 

\begin{figure}[t]
\centering
\subfigure[Algorithm DB variability]{
\begin{minipage}[b]{0.22\textwidth}
\centering
    		\includegraphics[width=1.\columnwidth]{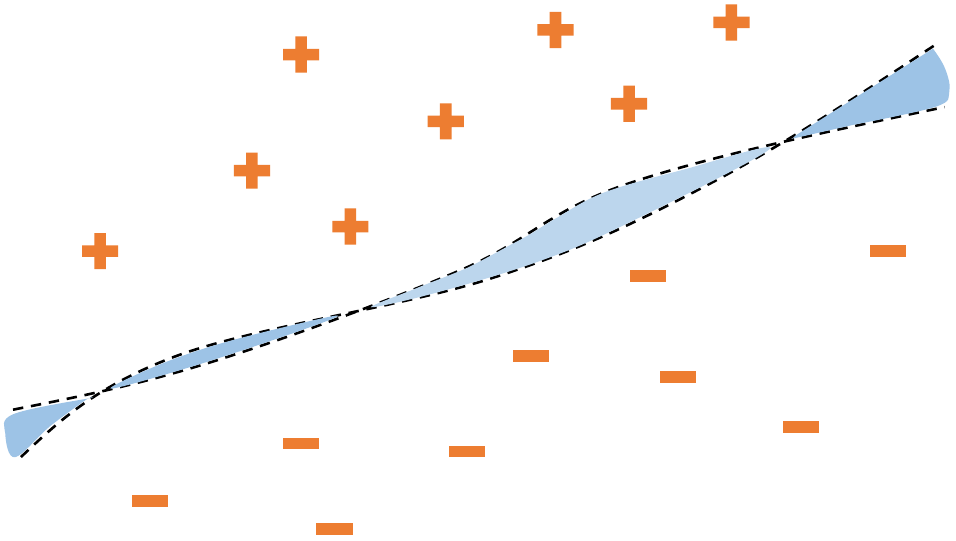}
    		\end{minipage}
		\label{figure:illustration algorithm DB}   
    	}
\subfigure[Data DB variablity]{
\begin{minipage}[b]{0.22\textwidth}
\centering
    		\includegraphics[width=1.\columnwidth]{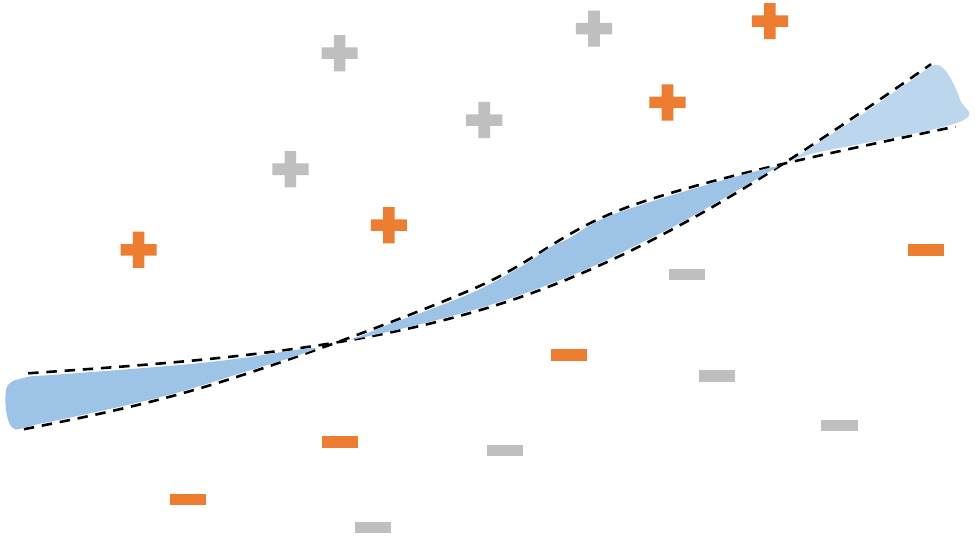}
    		\end{minipage}
		\label{figure:illustration data DB}   
    	}
\caption{An illustration of decision boundary variability. (a) Two dashed curves denote the decision boundaries of the model trained on the orange points with different repeats, respectively, and the blue mismatch region is connected to the algorithm DB variability; (b) Two dashed curves denote the decision boundaries of the model trained on the orange points and all (orange and gray) points, respectively, and the blue mismatch region is connected to the data DB variability.}
\label{figure:illustration}
\end{figure}

\textbf{Algorithm DB variability} measures the variability of DBs in different training repeats. We conduct extensive experiments on the CIFAR-10/100 datasets \citep{krizhevsky2009learning} to explore which factors would determine algorithm DB variability. We visualize the trend of the algorithm DB variability with respect to ({\it w.r.t.}) training strategies, training time, sample sizes, and label noise ratios. The empirical results demonstrate that algorithm DB variability has (1) negative correlations with the training time and the sample size, (2) a positive correlation with the label noise, and (3) a negative correlation with the generalizability (or, test accuracy, in experiments). 
{Benefiting from this significant correlation, algorithm DB variability can be employed to do model selection without the requirement of test labels, and shows superior performance than conventional marginal likelihood measurement.} From the theoretical view, we prove two lower bounds based on the algorithm DB variability, which fully support our experiments. 


\textbf{$(\epsilon,\eta)$-data DB variability} is proposed to characterize the decision boundary from the view of the randomness in training data. 
Given a neural network, if its decision boundary can be ``reconstructed'' by training a network with the same architecture from the scratch on a smaller $\eta$-subset (which contain $\eta\%$ examples of the source training set), while the ``error'' of the reconstruction is not larger than $\epsilon$, we call the model has $(\epsilon,\eta)$-data DB variability. Specifically, we may define the reconstruction error as the approximation error of the reconstructed decision boundary on the whole training set. Moreover, an $\eta$-$\epsilon$ curve can be drawn numerically. 
The area under the $\eta$-$\epsilon$ curve could be an informative indicator for characterizing the generalization of NNs. 
An $\mathcal{O}\left(\frac{1}{\sqrt{m}}+\epsilon+\eta\log\frac{1}{\eta}\right)$ generalization bound based on the $(\epsilon, \eta)$-data DB variability is proved, which demonstrates the relationship between the generalization of NNs and DB variability. 
In contrast to many existing generalization bounds based on hypothesis complexity that require the access to the weight norm, our bounds only need the predictions. This brings significant advantages in empirically approximating the generalization bound in (1) black-model settings, where model parameters are unavailable, and (2) over-parameterized settings, where calculating the weight norm is of a prohibitively high computing burden.

To our best knowledge, this is the first work on explaining deep learning via the variability of decision boundary. Our research also sheds light to understanding a variety of interesting phenomenons, including the entropy and the complexity of decision boundary. 
Through the lens of decision boundary variability, we may also design novel algorithms via reducing the decision boundary variability.




\section{Related works}
\textbf{Deep learning theory.} 
In learning theory, generalizability refers to the capability of well-trained models predicting on unseen data. Conventional theory suggests that the generalizability has a negative correlation with the hypothesis complexity \citep{mohri2018foundations}, such as VC-dimension \citep{vapnik1994measuring} and Rademacher complexity \citep{bartlett2002rademacher}: models with larger complexity fit the training data better. This is usually summarised as the ``bias-variance trade-off''. This principle faces significant challenges in deep learning \citep{ma2020towards,he2020recent}. 
\citet{zhang2021understanding} demonstrate that neural networks can near-perfectly fit noisy labels (which suggests that deep learning has an extremely large Rademarcher complexity), but still have impressive generalization performance. This conflict draws attentions of numerous researchers \citep{belkin2019reconciling,nakkiran2019deep,9257392}. \citet{belkin2019reconciling} show the unusual double descent phenomena of the training error {\it w.r.t.} model size following by works \citep{nakkiran2019deep,li2020benign}, which further sheds shadows to the ``variance-bias trade-off''.

Many works attribute the success of neural networks to the effectiveness of the stochastic gradient descent (SGD) algorithm \citep{bottou2010large,hardt2016train,he2019control,9222567}. For example, {\citet{jin2017escape} show that} SGD can escape from the local minima. The loss landscape of the networks has also be extensively analysed and it has been proven that there is no spurious local minima for linear NNs \citep{kawaguchi2016deep,lu2017depth,zhou2017critical}. Nevertheless, this elegant property does not hold for general networks where non-linear activation functions are involved \citep{he2020piecewise,Goldblum2020Truth}. 
Recent study also attempts to explore the implicit bias of neural networks in the over-parameterized regime.  \citet{soudry2018implicit} show that the over-parameterized networks converge to the max-margin solution when the training data is linear-separable. Some other research has also been conducted along this line \citep{NEURIPS2020_c76e4b2f,Lyu2020Gradient,chizat2020implicit}. 

Empirical studies have also attempted to explain the decent performance of networks by uncovering their learning properties \citep{nakkiran2020deep,jiang2021assessing,9319542}. For instance, neural networks are shown tend to fit the low-frequency information first \citep{rahaman2019spectral,xu2019frequency} and then gradually learn fit more complex patterns \citep{kalimeris2019sgd} during the training procedure. \citet{he2020local} show that neural networks own the unique property of local elasticity that the predictions on the input data $\mathbf{x}'$ will not be significantly perturbed, when the neural net is updated via SGD at the training example $\mathbf{x}$ if $\mathbf{x}'$ is ``dissimilar'' to $\mathbf{x}$. Similar phenomena are also observed by \citet{fort2019stiffness}. Besides, \citet{papyan2020prevalence} uncover a novel phenomenon, neural collapse, which sheds light on interpreting the effectiveness of deep models \citep{fang2021exploring}.

\textbf{Decision boundaries in neural networks.} Decision boundary, which partitions the input space with different labels, is an important notion in machine learning. Recent studies attempt to understand neural networks from the aspect of decision boundaries \citep{he2018decision,karimi2019characterizing,karimi2020decision}. \citet{alfarra2020on} employ the tropical geometry to represent the decision boundary of neural networks. \citet{guan2020analysis} empirically show a negative correlation between the complexity of decision boundary and the generalization performance of neural networks. \citet{mickisch2020understanding} reveal an insightful phenomenon that the distance from data to the decision boundary continuously decreases during the training. More recently, researchers uncover that neural networks only rely on the most discriminative or the simplest features to construct the decision boundary \citep{ortiz2020hold,shah2020pitfalls}. Besides, \citet{samangouei2018explaingan} also explain the predictions of neural networks  via constructing examples crossing the decision boundary. 
To our best knowledge, this paper is the first work on (1) theoretically characterizing the complexity of decision boundary via the new measure, decision boundary variability, and (2) explaining  the negative correlation between the generalizability and decision boundary variability.

\begin{figure*}[t]
\centering
\subfigure[Fake CIFAR-10]{
\begin{minipage}[b]{0.22\textwidth}
\centering
    		\includegraphics[width=1.\columnwidth]{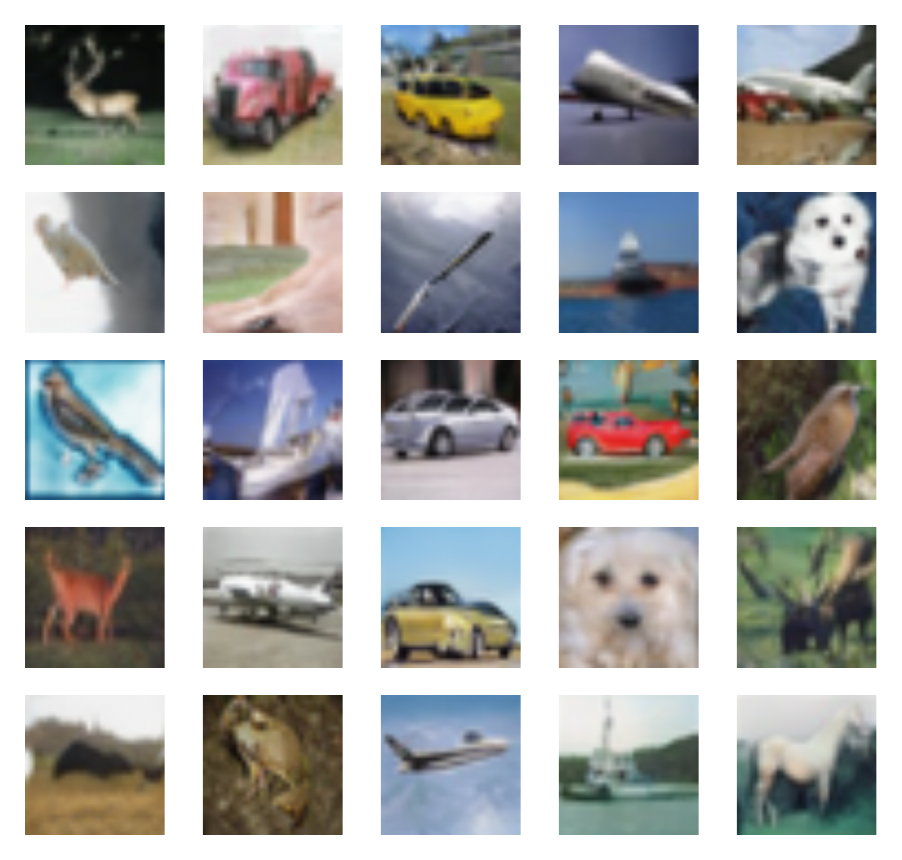}
    		\end{minipage}
		\label{figure:biggan_cifar10}
    	}
\subfigure[Fake CIFAR-100]{
\begin{minipage}[b]{0.22\textwidth}
\centering
    		\includegraphics[width=1.\columnwidth]{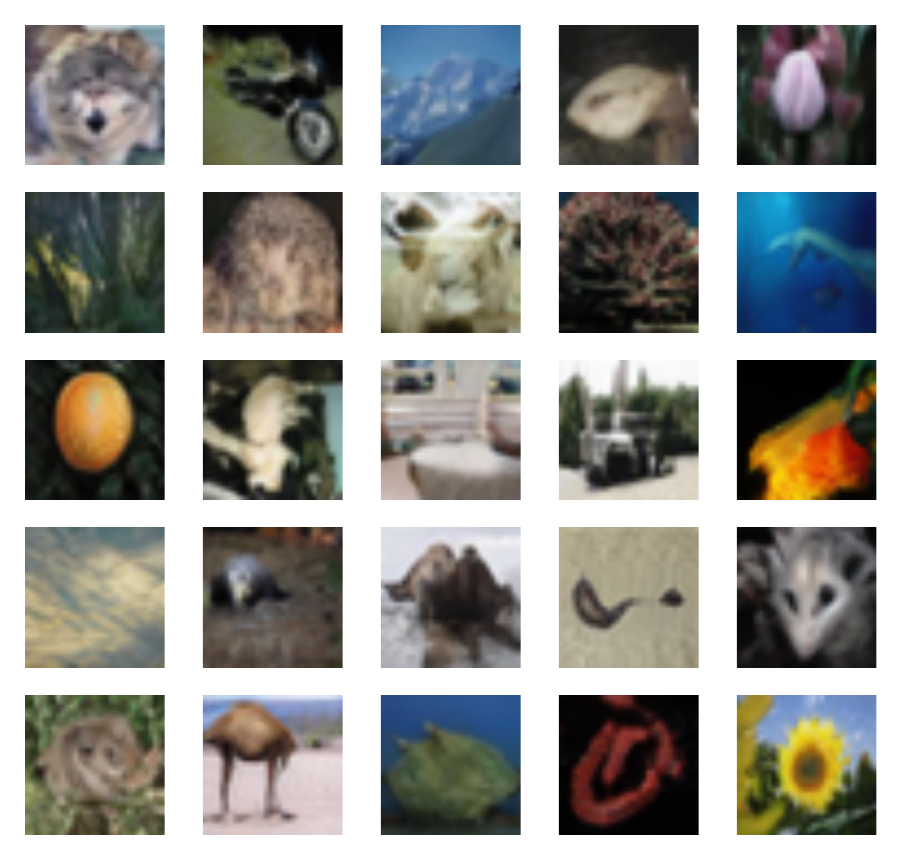}
    		\end{minipage}
		\label{figure:biggan_cifar100}
    	}
\subfigure[Algorithm DB variability vs. test accuracy]{
\begin{minipage}[b]{0.46\textwidth}
            \centering
    		\includegraphics[width=0.48\columnwidth]{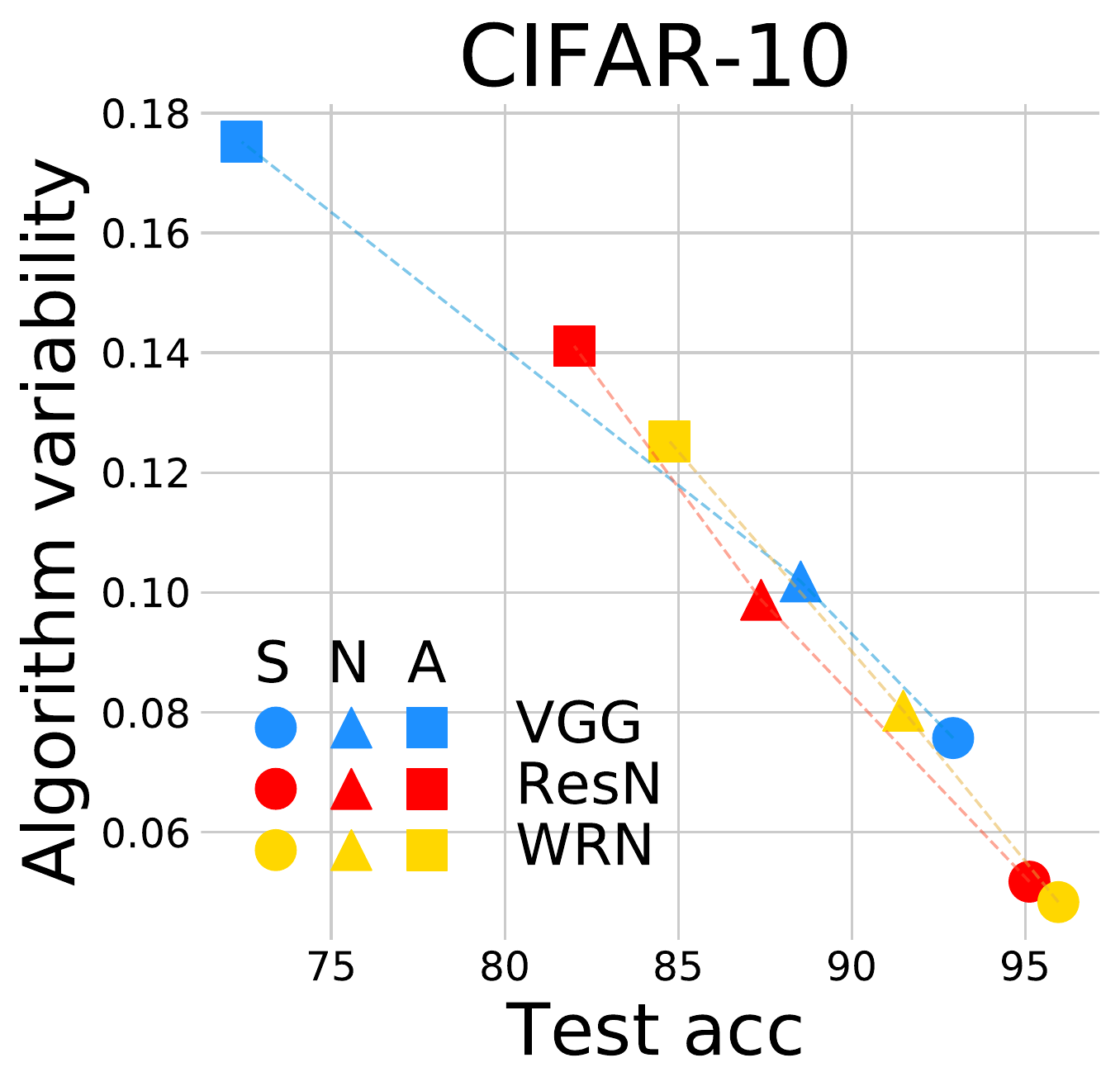}
    		\includegraphics[width=0.48\columnwidth]{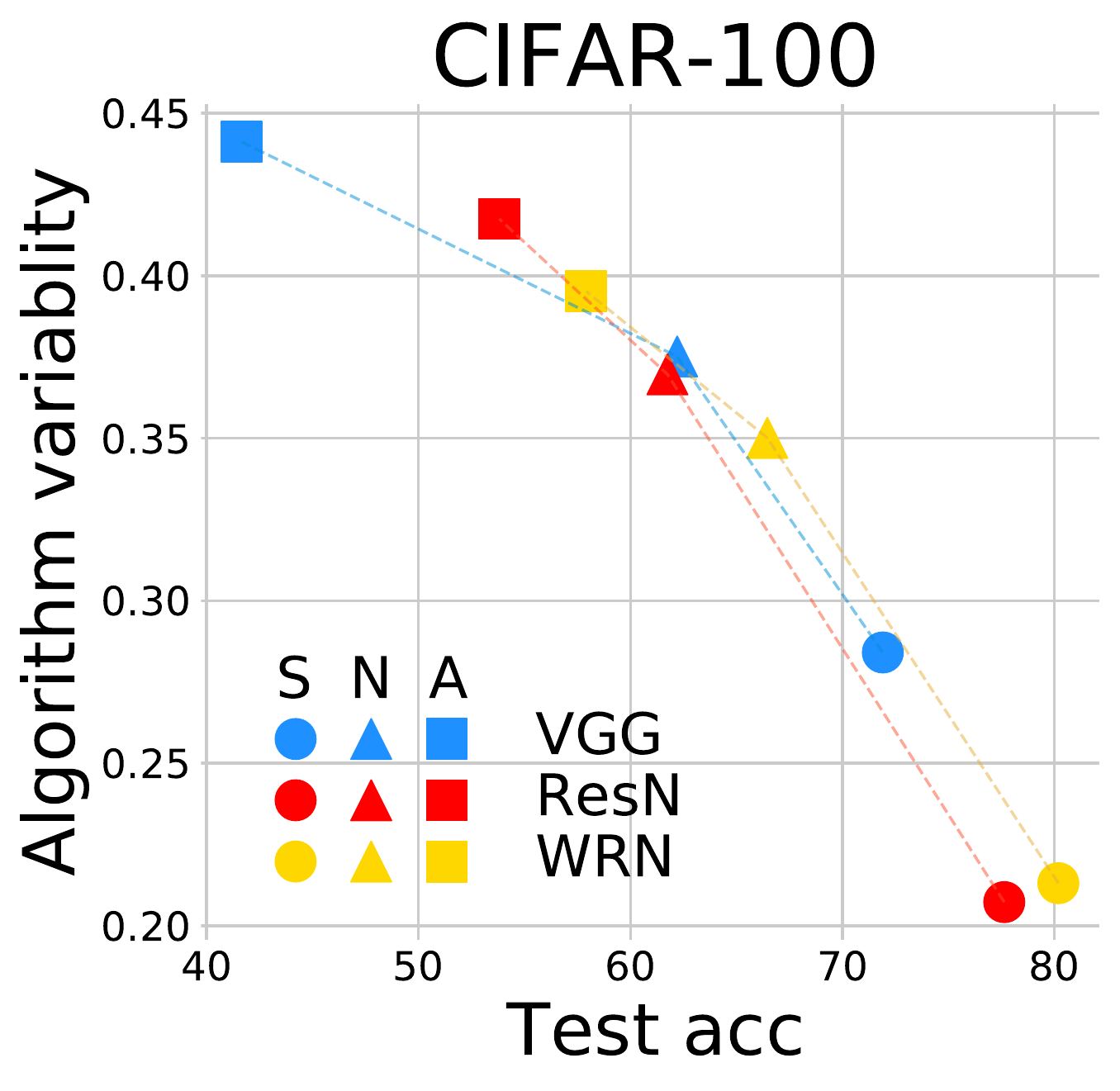}
    		\end{minipage}
		\label{figure:result_cifar}
    	}
\caption{Algorithm decision boundary variability on CIFAR-10 and CIFAR-100. (a) Examples of fake CIFAR-10 images generated by conditional BigGAN. (b) Examples of fake CIFAR-100 images generated by conditional BigGAN. 
(c) Scatter plots of algorithm DB variability to accuracy on test set with different architectures and training strategies on CIFAR-10 and CIFAR-100. The colors of {\color{dodgerblue}blue}, {\color{red}red}, and {\color{gold}yellow} points denote the architectures of VGG-16 (VGG), ResNet-18 (ResN), and WideResNet-28 (WRN), respectively. The shapes of \raisebox{-0.7pt}{\Large $\bullet$}, {\normalsize $\blacktriangle$}, and {\small $\blacksquare$} designate the training strategies of standard training (S), non-data-augmentation training (N), and adversarial training (A), respectively. Each point is calculated and then averaged on $10$ trials.}
\label{figure:read-word}      
\end{figure*}

\section{Preliminaries}
We denote the training set by $\mathcal{S}=\{(\mathbf{x}_i, y_i), i=1,\dots,m\}$, where $\mathbf{x}_i\in \mathbb{R}^n$, $n$ is the dimension of input data, $y_i\in [k] = \{1,\dots,k\}$, $k$ is the number of classes, and $m=|\mathcal{S}|$ is the training sample size. 
We assume that $(\mathbf{x}_i, y_i)$ are independent and identically distributed (i.i.d.) random variables drawn from the data generating distribution $\mathcal{D}$. {Denote the classifier as} $f_{\boldsymbol{\theta}}(\mathbf{x}): \mathbb{R}^n \rightarrow \mathbb{R}^k$, {which} is a neural network parameterized by $\boldsymbol{\theta}$. The output of $f_{\boldsymbol{\theta}}(\mathbf{x})$ is a $k$-dimensional vector and {is assumed to be a discrete probability density function}. Let $f^{(i)}_{\boldsymbol{\theta}}(\mathbf{x})$ {be} the $i$-th component of $f_{\boldsymbol{\theta}}(\mathbf{x})$, hence $\sum^k_{i=1}f^{(i)}_{\boldsymbol{\theta}}(\mathbf{x}) = 1$. We define $T(f_{\boldsymbol{\theta}}, \mathbf{x})=\{i\in \{1,\cdots,k\}| f^{(i)}_{\boldsymbol{\theta}}(\mathbf{x})=\max_{j} f^{(j)}_{\boldsymbol{\theta}}(\mathbf{x}) \}$ to denote the set of predicted labels by $f_{\boldsymbol{\theta}}$ on $\mathbf{x}$.
Due to the randomness of the learning algorithm $\mathcal{A}$, let $\mathbb{Q}(\boldsymbol{\theta})=\mathcal{A}(\mathcal{S})$ denote the posteriori distribution returned by the learning algorithm $\mathcal{A}$ leveraged on the training set $\mathcal{S}$. Hence, we focus on the {\it Gibbs classifier} (a.k.a. random classifier) $f_\mathbb{Q}=\{f_{\boldsymbol{\theta}} \vert \boldsymbol{\theta}\sim \mathbb{Q}\}$. $0-1$ loss is employed in this paper, and the expected risks in terms of $\boldsymbol{\theta}$ and $\mathbb{Q}$ are defined as:
\begin{equation}
    \mathcal{R}_\mathcal{D}(\boldsymbol{\theta}) = \mathbb{E}_{(\mathbf{x},y)\sim \mathcal{D}} \left[\mathbb{I}\left(y\notin T\left(f_{\boldsymbol{\theta}}, \mathbf{x}\right)\right)\right]
\end{equation}
and
\begin{equation}
    \mathcal{R}_\mathcal{D}(\mathbb{Q}) = \mathbb{E}_{(\mathbf{x},y)\sim \mathcal{D}}\mathbb{E}_{\boldsymbol{\theta}\sim \mathbb{Q}} \left[\mathbb{I}\left(y\notin T\left(f_{\boldsymbol{\theta}}, \mathbf{x}\right)\right)\right],
\end{equation}
respectively. {Here} $\mathbb{I}(\cdot)$ is the indicator function. {Since the data generating distribution $\mathcal{D}$ is unknown}, {evaluating} the expected risk $\mathcal{R}_\mathcal{D}$ is not practical. Therefore, it is a practical way to estimate the expected risk by the empirical risk $\mathcal{R}_\mathcal{S}$, which is defined as:
\begin{align}
    \mathcal{R}_\mathcal{S}(\boldsymbol{\theta}) = & \mathbb{E}_{(\mathbf{x},y)\sim \mathcal{S}}\left[\mathbb{I}\left(y\notin T\left(f_{\boldsymbol{\theta}}, \mathbf{x}\right)\right)\right] \nonumber \\
    = & \frac{1}{m}\sum_{i=1}^m \mathbb{I}\left(y_i\notin T\left(f_{\boldsymbol{\theta}}, \mathbf{x}_i\right)\right)
\end{align}
\begin{align}
    \mathcal{R}_\mathcal{S}(\mathbb{Q}) = & \mathbb{E}_{(\mathbf{x},y)\sim \mathcal{S}}\mathbb{E}_{\boldsymbol{\theta} \sim \mathbb{Q}}\left[\mathbb{I}\left(y\notin T\left(f_{\boldsymbol{\theta}}, \mathbf{x}\right)\right)\right] \nonumber \\
    = &  \frac{1}{m}\sum_{i=1}^m\mathbb{E}_{\boldsymbol{\theta}\sim \mathbb{Q}} \left[ \mathbb{I}\left(y_i\notin T\left(f_{\boldsymbol{\theta}}, \mathbf{x}_i\right)\right)\right],
\end{align}
where $(\mathbf{x}_i,y_i)\in \mathcal{S}$ and $m=|\mathcal{S}|$.

\subsection{Decision boundary}
Intuitively, if the output $k$-dimensional vector $f_{\boldsymbol{\theta}}(\mathbf{x})$ on the input example $\mathbf{x}$ has a tie, {\it i.e.}, more than one {entry} of the vector have the maximum value, then $\mathbf{x}$ is considered to locate on the decision boundary of $f_{\boldsymbol{\theta}}$. Formally, the decision boundary can be formally defined as below.

\begin{definition}[decision boundary]
\label{def:decision boundary}
Let $f_{\boldsymbol{\theta}}(\mathbf{x}): \mathbb{R}^n \rightarrow \mathbb{R}^k$ be a neural network for classification parameterized by $\boldsymbol{\theta}$, where $n$ and $k$ are the dimensions of input and output, respectively. Then, the \textit{decision boundary} of $f_{\boldsymbol{\theta}}$ is defined by
\begin{equation}
\{\mathbf{x}\in \mathbb{R}^n| \exists i\neq j \in [k], f_{\boldsymbol{\theta}}^{(i)}(\mathbf{x})=f_{\boldsymbol{\theta}}^{(j)}(\mathbf{x}) =\max_q f^{(q)}_{\boldsymbol{\theta}}(\mathbf{x})\}.
\end{equation}
\end{definition}

Consequently, we have the following remark.

\begin{remark}
(1) If an input example $(\mathbf{x},y)$ is not located on the decision boundary of $f_{\boldsymbol{\theta}}$, $T(f_{\boldsymbol{\theta}},\mathbf{x})$ is a singleton, and we have,
\begin{equation}
    \mathbb{I}\left(y\notin T\left(f_{\boldsymbol{\theta}}, \mathbf{x}\right)\right) = \mathbb{I}\left(y \neq T\left(f_{\boldsymbol{\theta}}, \mathbf{x}\right)\right).
\end{equation}

(2) If the input $\mathbf{x}$ is a boundary point, in practice, we randomly draw a label from the set $T(f_{\boldsymbol{\theta}},\mathbf{x})$ as the prediction of $f_{\boldsymbol{\theta}}$ on $\mathbf{x}$. 
\end{remark}

\subsection{Adversarial training}

Adversarial training (AT) 
can enhance the adversarial robustness of neural networks against adversarial examples, which is generated through a popular approach projected gradient descent (PGD) \citep{madry2018towards} in our empirical studies as an example. More specifically, adversarial training can be formulated as solving the minimax-loss problem as follows,
\begin{equation}
\min _{\boldsymbol{\theta}} \frac{1}{m} \sum_{i=1}^{m} \max _{\left\|\mathbf{x}_{i}^{\prime}-\mathbf{x}_{i}\right\| \leq \gamma} \ell\left(f_{\boldsymbol{\theta}}\left(\mathbf{x}_{i}^{\prime}\right), y_{i}\right), 
\end{equation}
where $\gamma$ is the {\it radius} to limit the distance between adversarial examples and original examples. Intuitively, adversarial training enlarges the distances from training examples to decision boundaries to at least $\gamma$, which is formerly very {close} to the decision boundary.

{
\subsection{Model selection}

For a neural network $f_{\boldsymbol{\theta}}$, it is specified by a model $\mathcal{M}$ which consists of a network architecture and hyperparameters. For a list of model candidates $\mathcal{M}$ and given the training data $\mathcal{S}$, {\it log marginal likelihood} (LML) is often employed to select the best model from these candidates \citep{mackay2003information}, as shown as below:
\begin{equation}
    \mathcal{M}^\ast = \arg\max_{\mathcal{M}}\log p(\mathcal{S}|\mathcal{M}),
\end{equation}
and
\begin{equation}
    p(\mathcal{S}|\mathcal{M}) = \int_{\boldsymbol{\theta}} p(\mathcal{S}|\boldsymbol{\theta}, \mathcal{M}) p(\boldsymbol{\theta}|\mathcal{M}) \text{d}\boldsymbol{\theta}.
\end{equation}

Because the parameter $\boldsymbol{\theta}$ is hard to enumerate, the Laplace's method is often used to estimate the LML \citep{mackay1992practical}:
\begin{align}
\log p(\mathcal{S}|\mathcal{M}) & \approx \log q(\mathcal{S}|\mathcal{M}) \\
& \coloneqq \log p(\mathcal{S},\boldsymbol{\theta}^\ast|\mathcal{M}) - \frac{1}{2} \log \left |\frac{1}{2\pi} \mathbb{H}_{\boldsymbol{\theta}^\ast} \right|,
\end{align}
where $q(\mathcal{S}|\mathcal{M})$ is a Gaussian approximation to $p(\mathcal{S}|\mathcal{M})$, $\boldsymbol{\theta}^\ast$ is the optimum to local minimum, and the Hessian $\mathbb{H}_{\boldsymbol{\theta}^\ast} = -\nabla_{\boldsymbol{\theta}^\ast}^2 \log p(\mathcal{S},\boldsymbol{\theta}^\ast|\mathcal{M}) $. In this paper, we follow the implementation in \citet{laplace2021} to calculate the LML of neural networks.

}

\section{Algorithm decision boundary variability}
Due to the randomness of learning algorithms, there is no doubt that the parameters have a substantial variation in training repeats. However, {\it do the decision boundaries in the different training repeats have a large discrepancy?} 
Quantitatively, we define the algorithm decision boundary variability (AV) to measure the variability of DBs caused by the randomness of algorithms in different training repeats.

\begin{definition}[algorithm decision boundary variability]
\label{def: consistencey}
Let $f_{\boldsymbol{\theta}}(\mathbf{x}): \mathbb{R}^n \rightarrow \mathbb{R}^k$ be a neural network for classification parameterized by $\boldsymbol{\theta}$. Suppose $\mathbb{Q}(\boldsymbol{\theta})$ is the distribution over $\boldsymbol{\theta}$. Then, the algorithm decision boundary variability for $f_\mathbb{Q}$ on $\mathcal{D}$ is defined as below,
\begin{equation}
    AV(f_\mathbb{Q},\mathcal{D})
    = \mathbb{E}_{(\mathbf{x},y)\sim \mathcal{D}}\mathbb{E}_{\boldsymbol{\theta},\boldsymbol{\theta}^\prime \sim \mathbb{Q}}\left[\mathbb{I}\left(T(f_{\boldsymbol{\theta}}, \mathbf{x}) \neq T(f_{\boldsymbol{\theta}^\prime}, \mathbf{x})\right)\right], 
\end{equation}
where $T(f_{\boldsymbol{\theta}}, \mathbf{x})=\{i\in [k]| f^{(i)}_{\boldsymbol{\theta}}(\mathbf{x})=\max_{j} f^{(j)}_{\boldsymbol{\theta}}(\mathbf{x}) \}$.
\end{definition}

According to the definition, algorithm DB variability reflects the similarity of decision boundaries during different training repeats. An illustration of algorithm DB variability is provided in Figure \ref{figure:illustration algorithm DB}.


\begin{figure*}[t]
\centering
\subfigure[Algorithm DB variability and test error vs. training time]{
\begin{minipage}[b]{\textwidth}
\centering
    		\includegraphics[width=0.235\columnwidth]{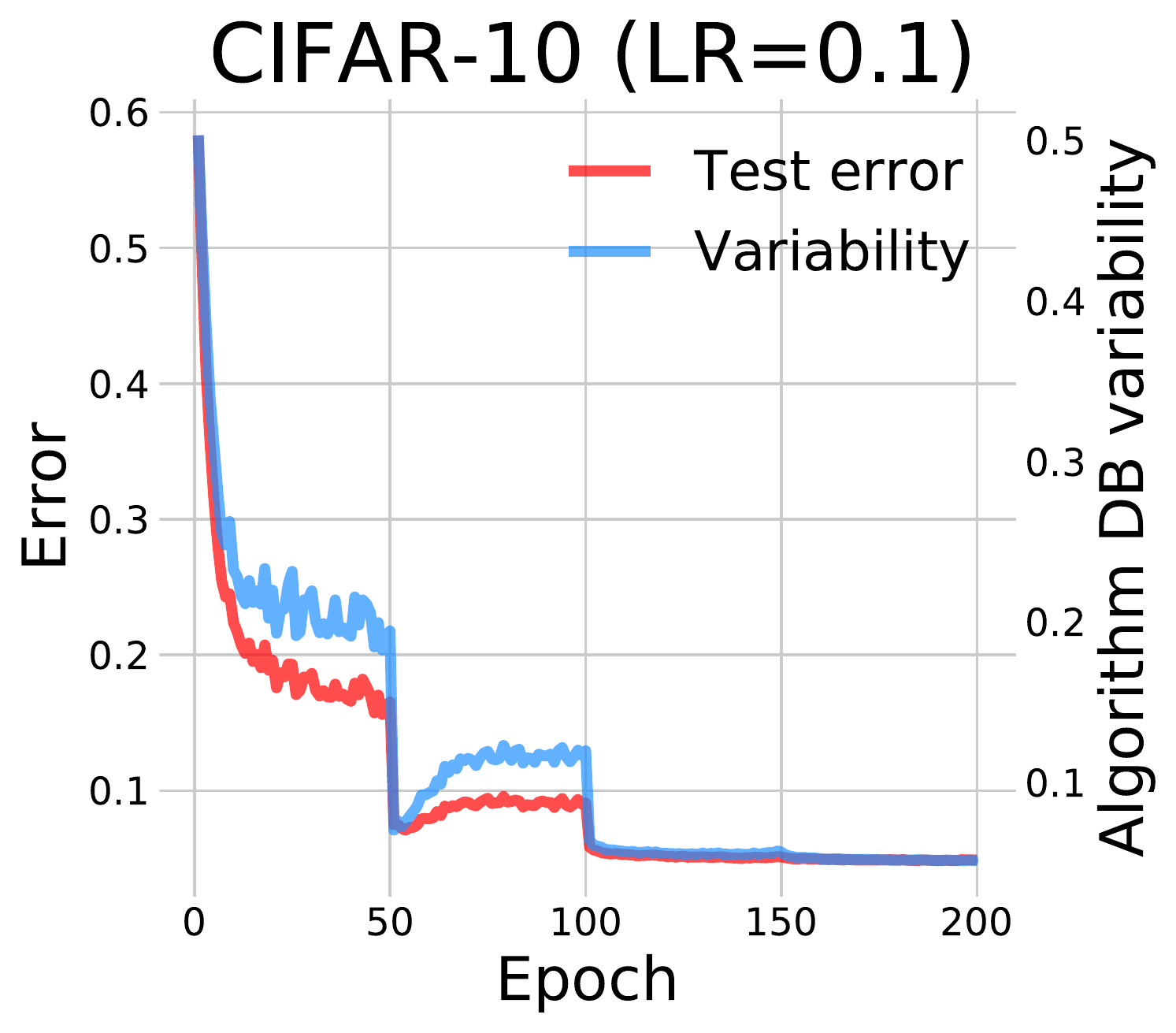}
    		\includegraphics[width=0.24\columnwidth]{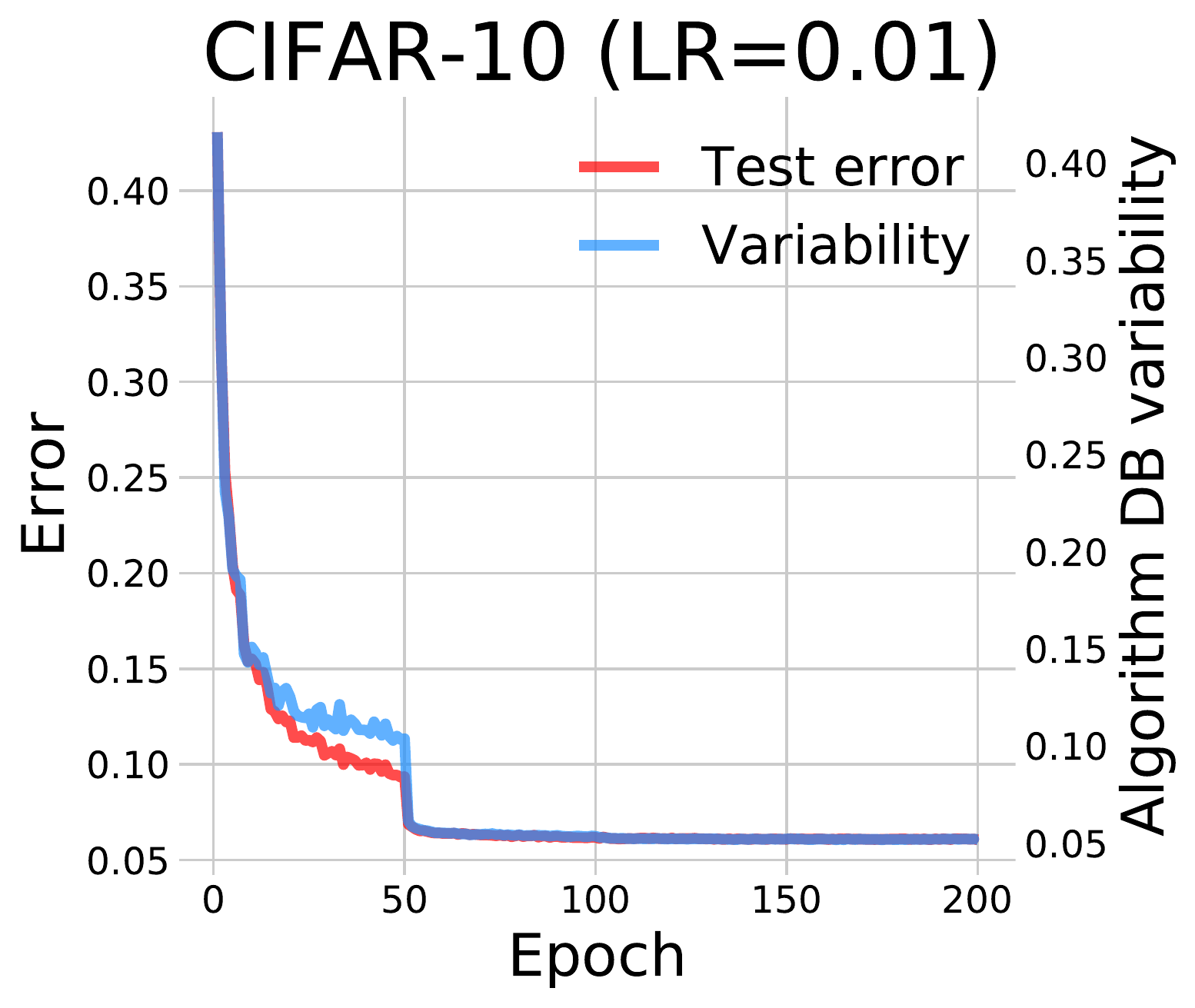}
    		\includegraphics[width=0.23\columnwidth]{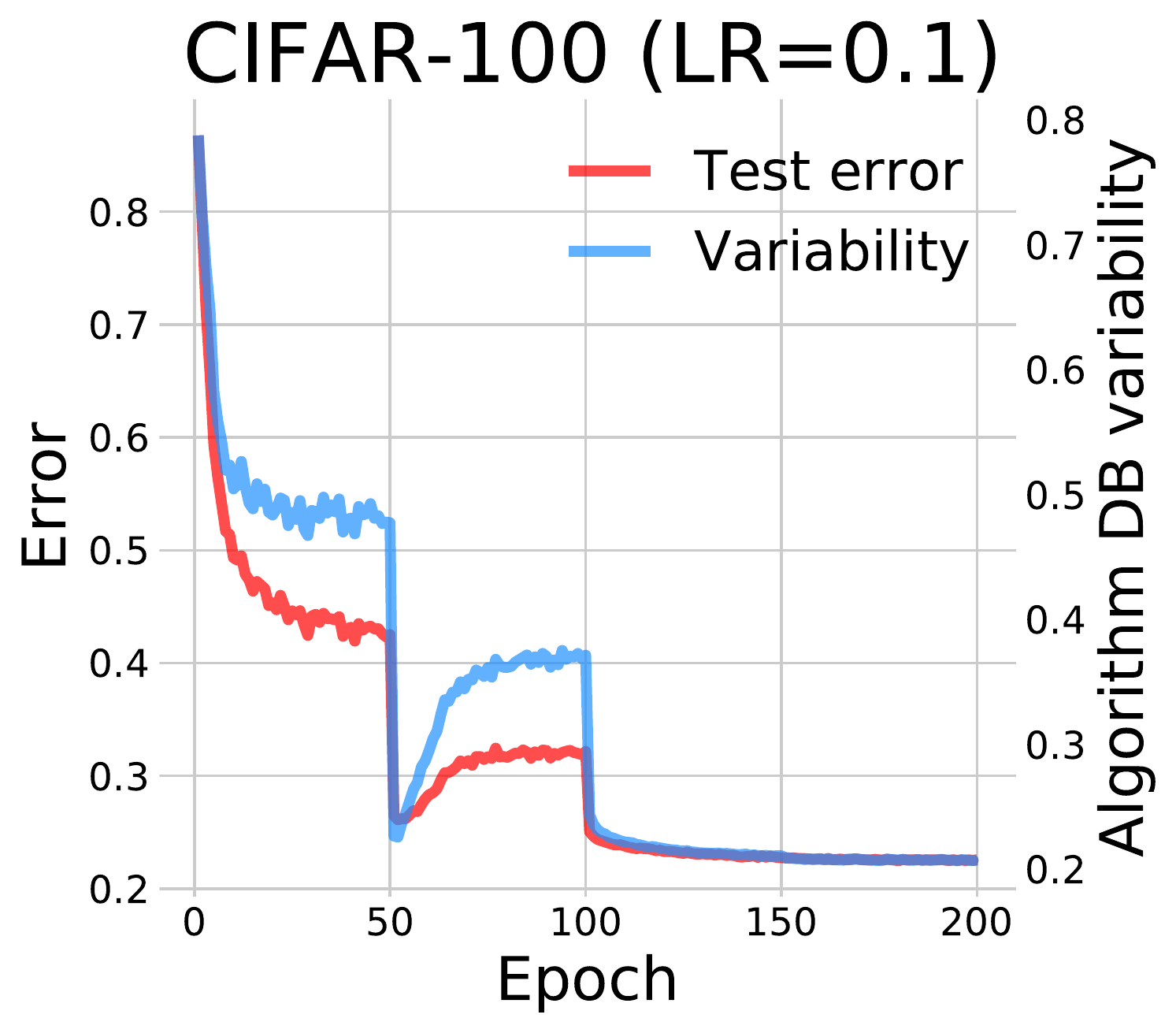}
    		\includegraphics[width=0.23\columnwidth]{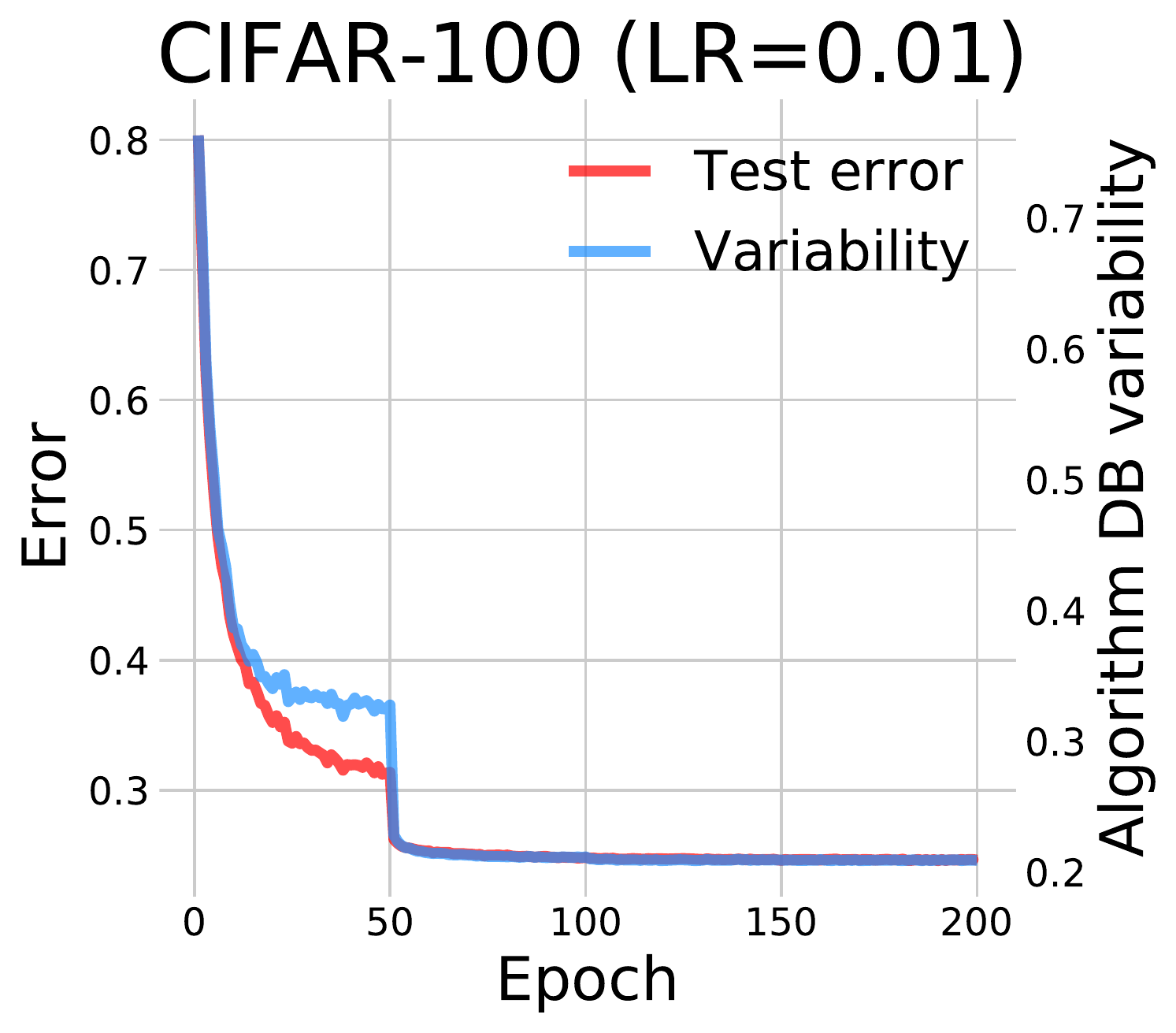}
    		\end{minipage}
		\label{figure:training process plot}   
    	}
\subfigure[Test error vs. algorithm DB variability]{
\begin{minipage}[b]{\textwidth}
\centering
    		\includegraphics[width=0.23\columnwidth]{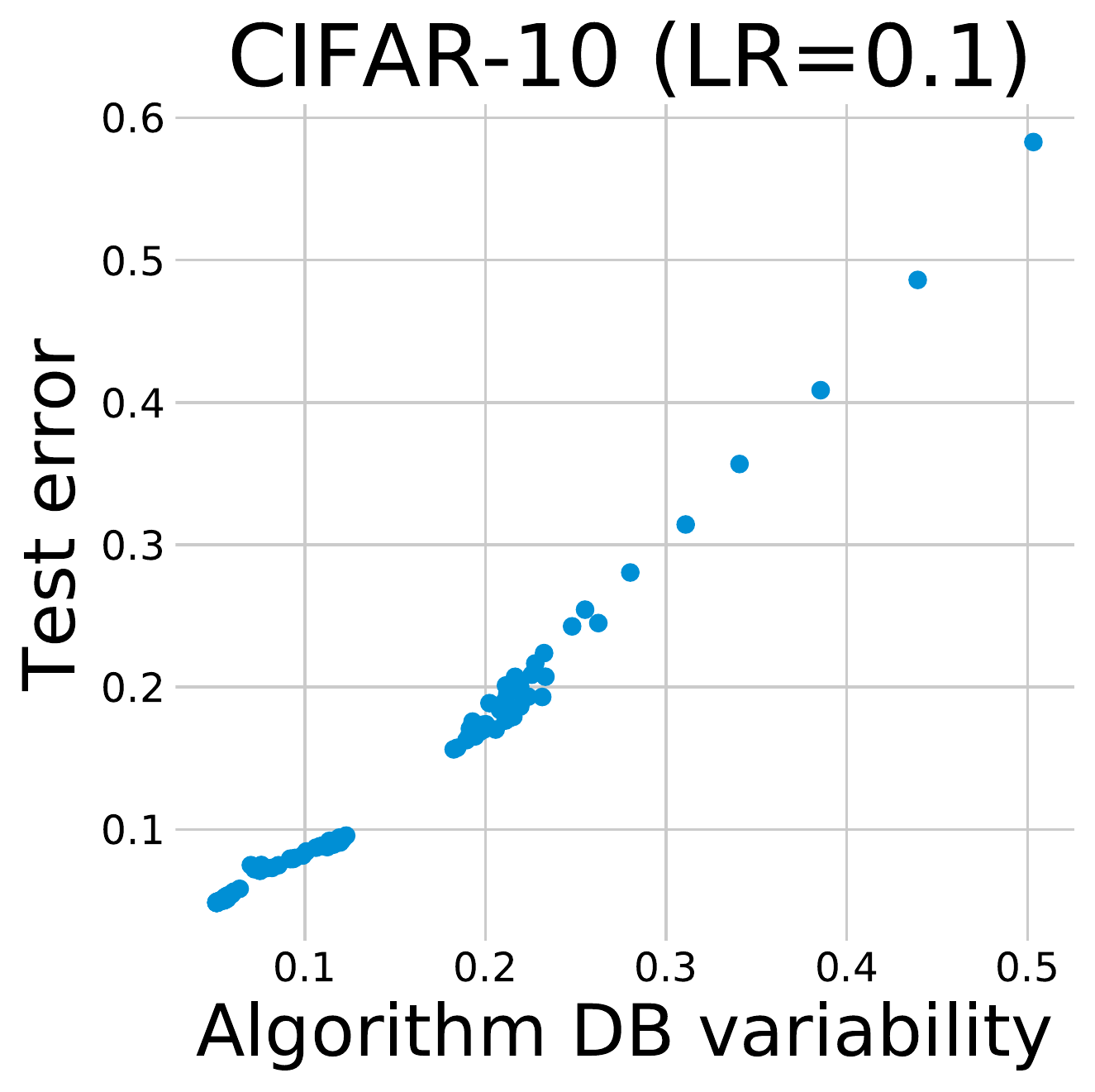}
    		\includegraphics[width=0.233\columnwidth]{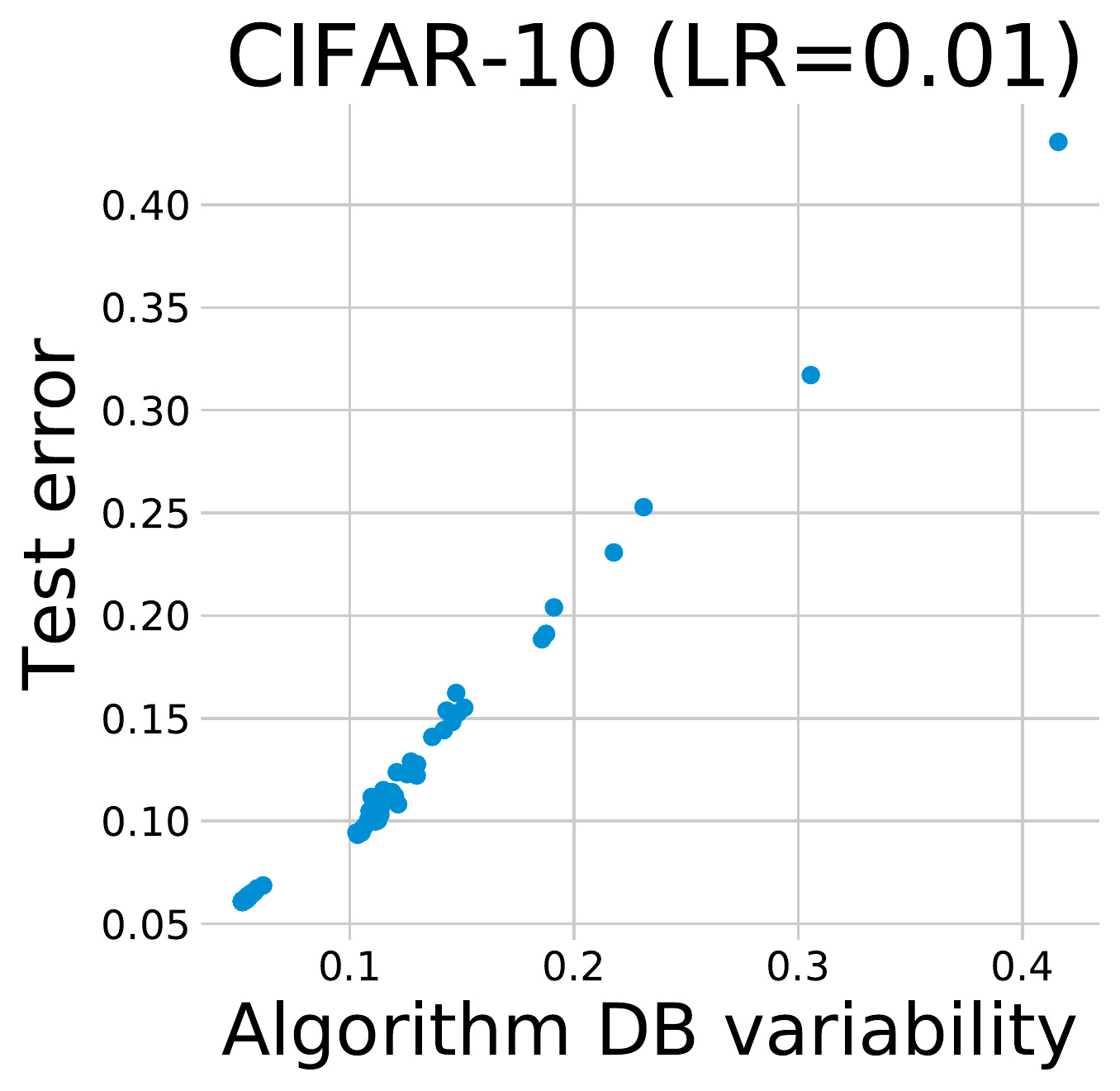}
    		\includegraphics[width=0.23\columnwidth]{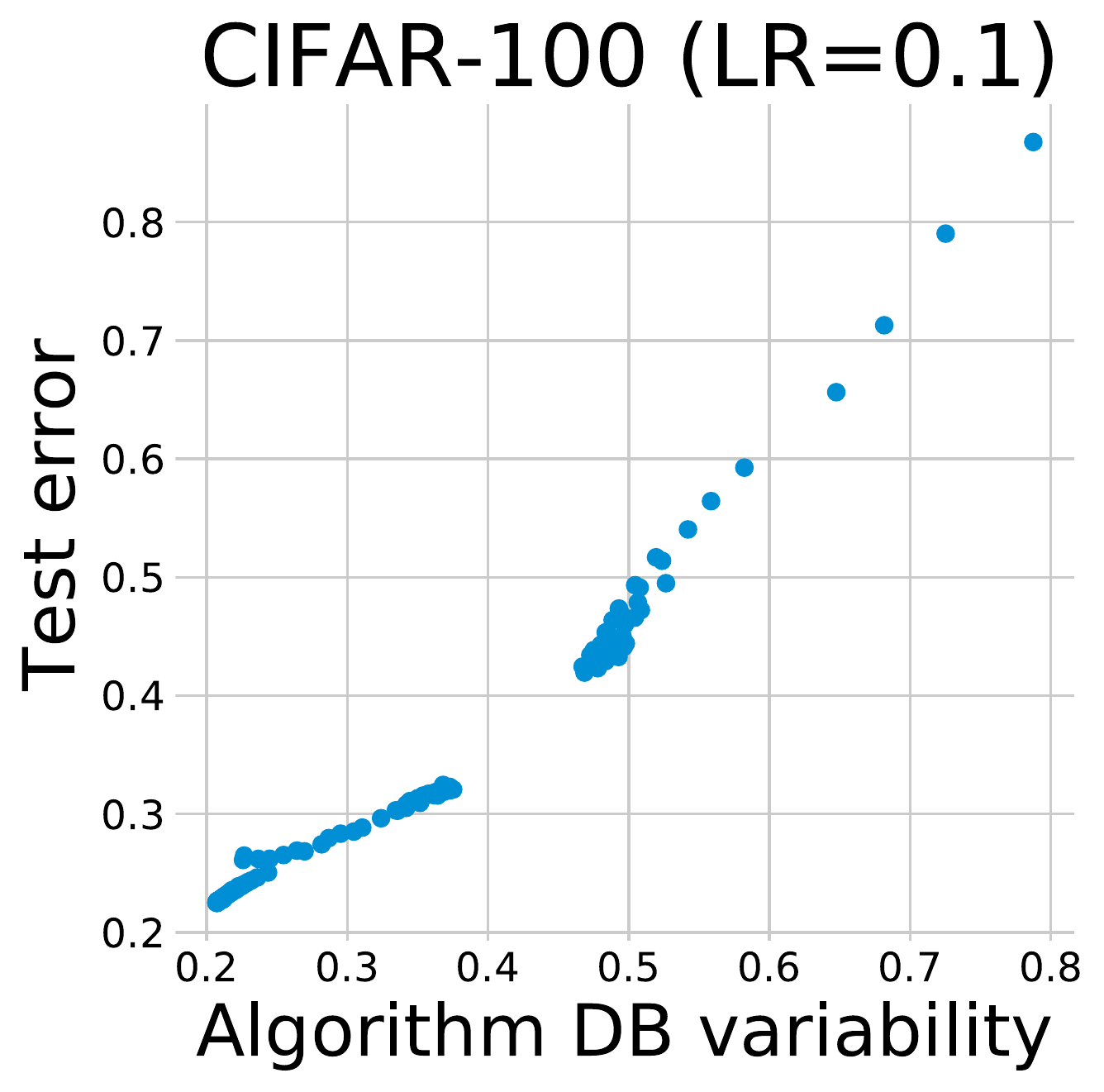}
    		\includegraphics[width=0.23\columnwidth]{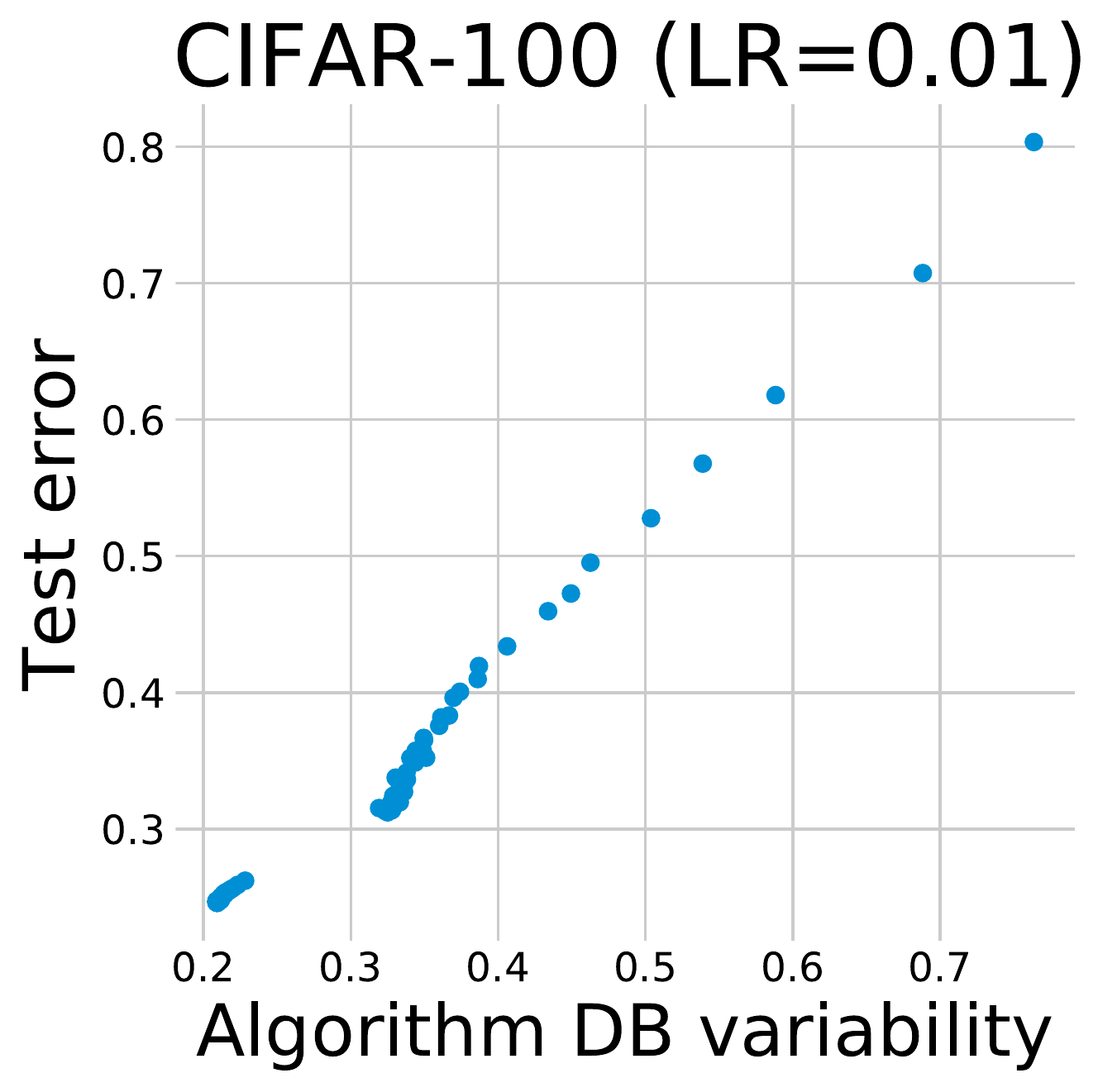}
    		\end{minipage}
		\label{figure:training process scatter plot}   
    	}
\caption{(a) Plots of algorithm DB variability and test error as functions of training time (LR is learning rate). (b) Scatter plots of test error to algorithm DB variability (LR is learning rate). The points are collected from different epochs. Each curve and point is calculated and then averaged on $10$ trials.}
\label{figure:training process}
\end{figure*}

\subsection{Algorithm DB variability and generalization}
\label{sec:real data}

To explore the relationship between the algorithm DB variability and generalization in neural networks, we conduct experiments with various popular network architectures, VGG-16 \citep{simonyan2014very}, ResNet-18 \citep{he2016deep}, and WideResNet-28 \citep{Zagoruyko2016WRN}, on standard datasets, CIFAR-10 and CIFAR-100. In detail, VGG-16 \citep{simonyan2014very}, ResNet-18 \citep{he2016deep}, and WideResNet-28 \citep{Zagoruyko2016WRN} are optimized by standard, non-data-augmentation and adversarial training, respectively, until the training procedure converges. 
Each training setting (determined by different datasets, architectures, and/or training strategies) is repeated for $10$ trials with different random seeds to estimate the parameter distribution $\mathbb{Q}(\boldsymbol{\theta})$. In order to simulate the data generating distribution, we trained two conditional BigGANs \citep{zhao2020differentiable} 
to produce $100,000$ fake (or, synthetic) images for CIFAR-10 and CIFAR-100, respectively. Examples of fake images are shown in Figure \ref{figure:biggan_cifar10} and \ref{figure:biggan_cifar100}.

These generative fake images enables estimating the algorithm DB variability. In every training setting, we plot the average test accuracy vs. the algorithm DB variability; as shown in Figure \ref{figure:result_cifar}. From the plots, we obtain the following four observations: (1) adversarial training dramatically decreases the test accuracy and promotes the algorithm DB variability compared to the standard training. 
(2) data augmentation decreases the algorithm decision boundary variability. 
Intuitively, the images augmented by cropping or flipping are still located on the data generating distribution, so data augmentation can expand the training set. Hence, the expanded training set can characterize wider decision boundaries on the data generating distribution; (3) WideResNet has better test accuracy and lower algorithm DB variability than ResNet and VGG; and (4) a negative correlation exists between the test accuracy and the algorithm DB variability. Based on these observations, we propose the following conjecture.

\begin{hypothesis}
\label{hypothesis:consistency}
{\it Neural networks with smaller algorithm decision boundary variability on the data generating distributions possess better generalization performance.}
\end{hypothesis}

We then conduct experiments on the algorithm DB variability {\it w.r.t.} training time, sample size, and label noise {to concrete this hypothesis}.

\begin{figure*}[t]
\centering
\subfigure[Algorithm DB variability vs. sample size]{
\begin{minipage}[b]{0.46\textwidth}
\centering
    		\includegraphics[width=0.485\columnwidth]{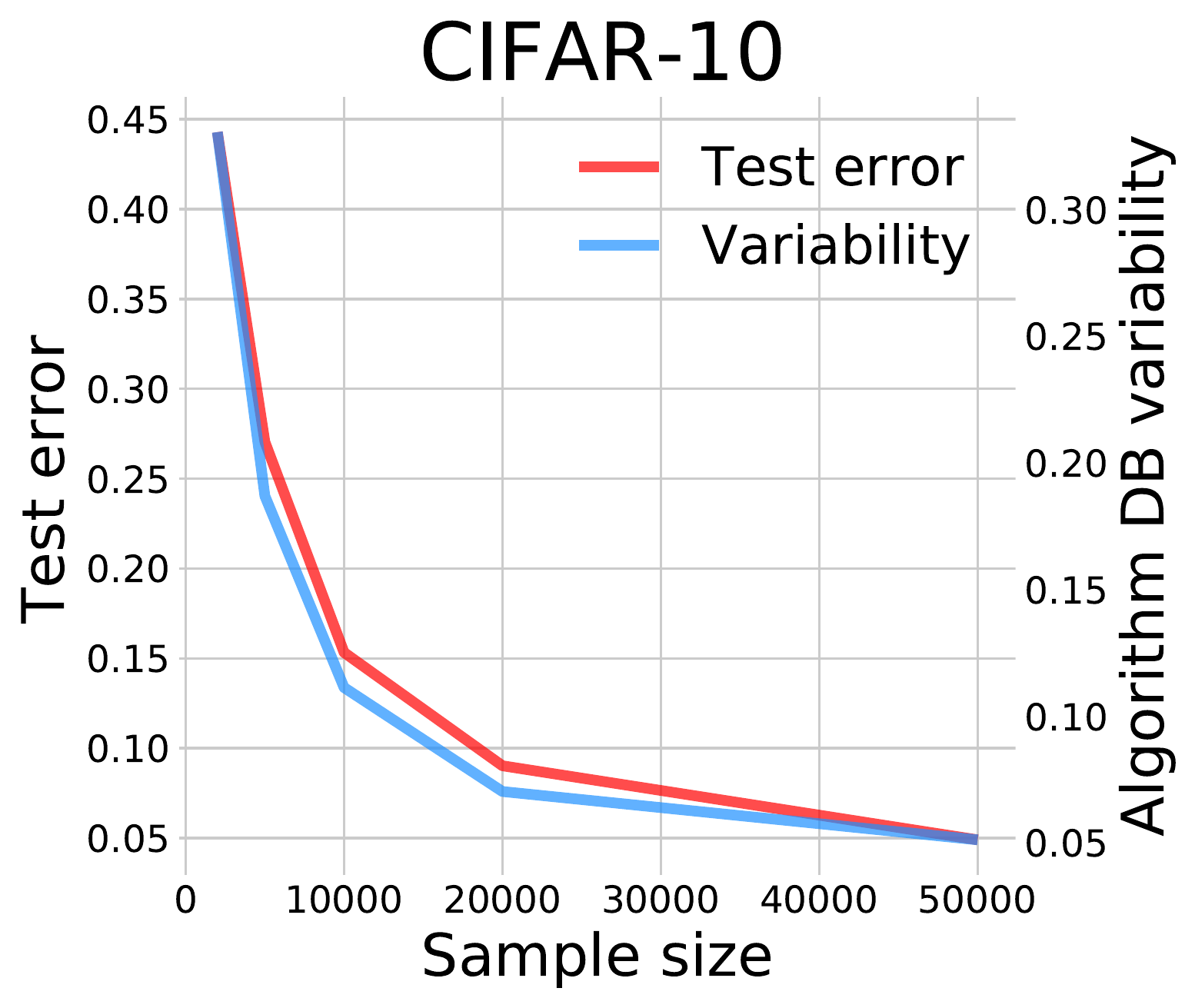}
    		\includegraphics[width=0.48\columnwidth]{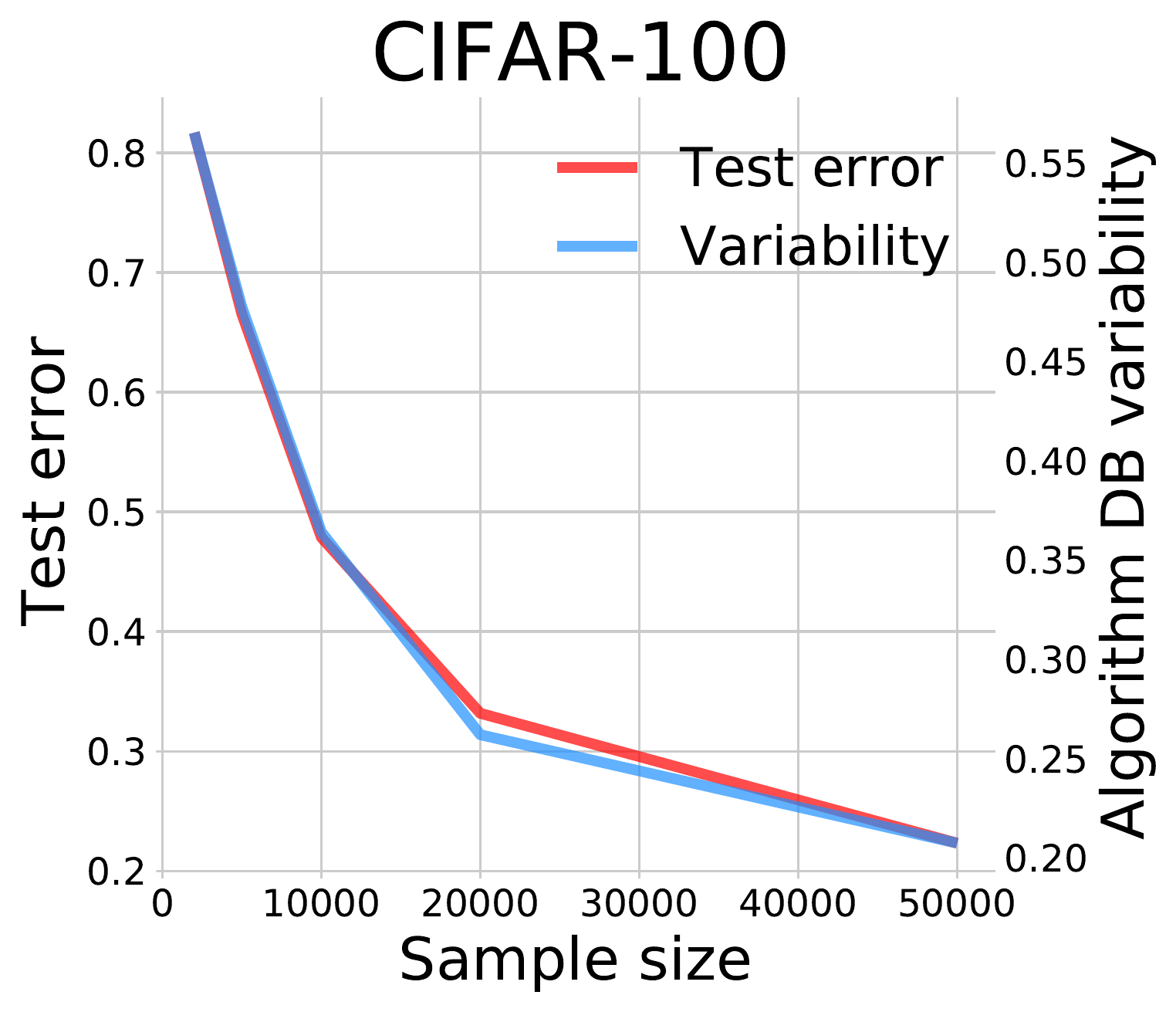}
    		\end{minipage}
		\label{figure:sample size plot}   
    	}
\subfigure[Algorithm DB variability vs. time (label noise)]{
\begin{minipage}[b]{0.46\textwidth}
\centering
    		\includegraphics[width=0.485\columnwidth]{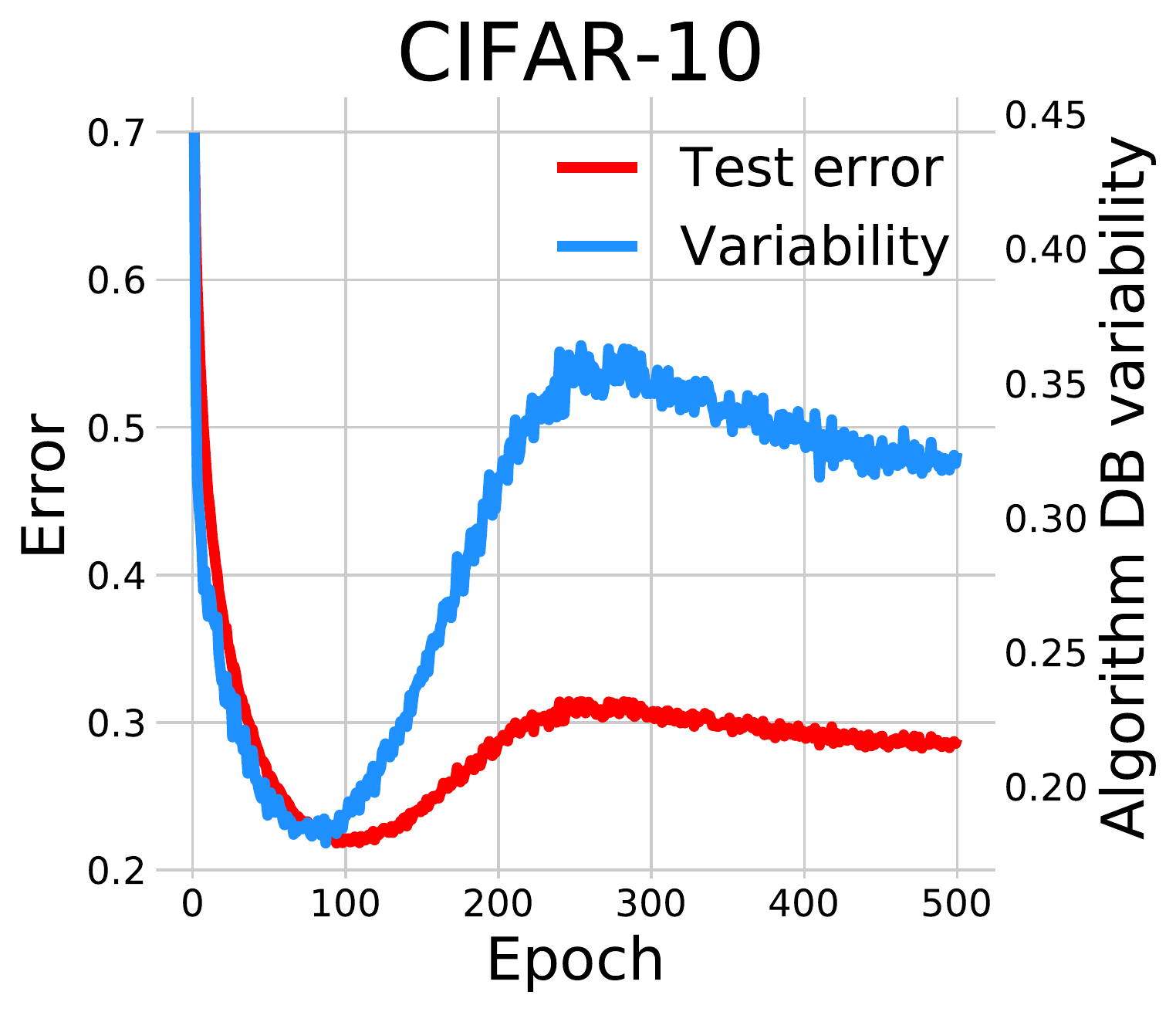}
    		\includegraphics[width=0.48\columnwidth]{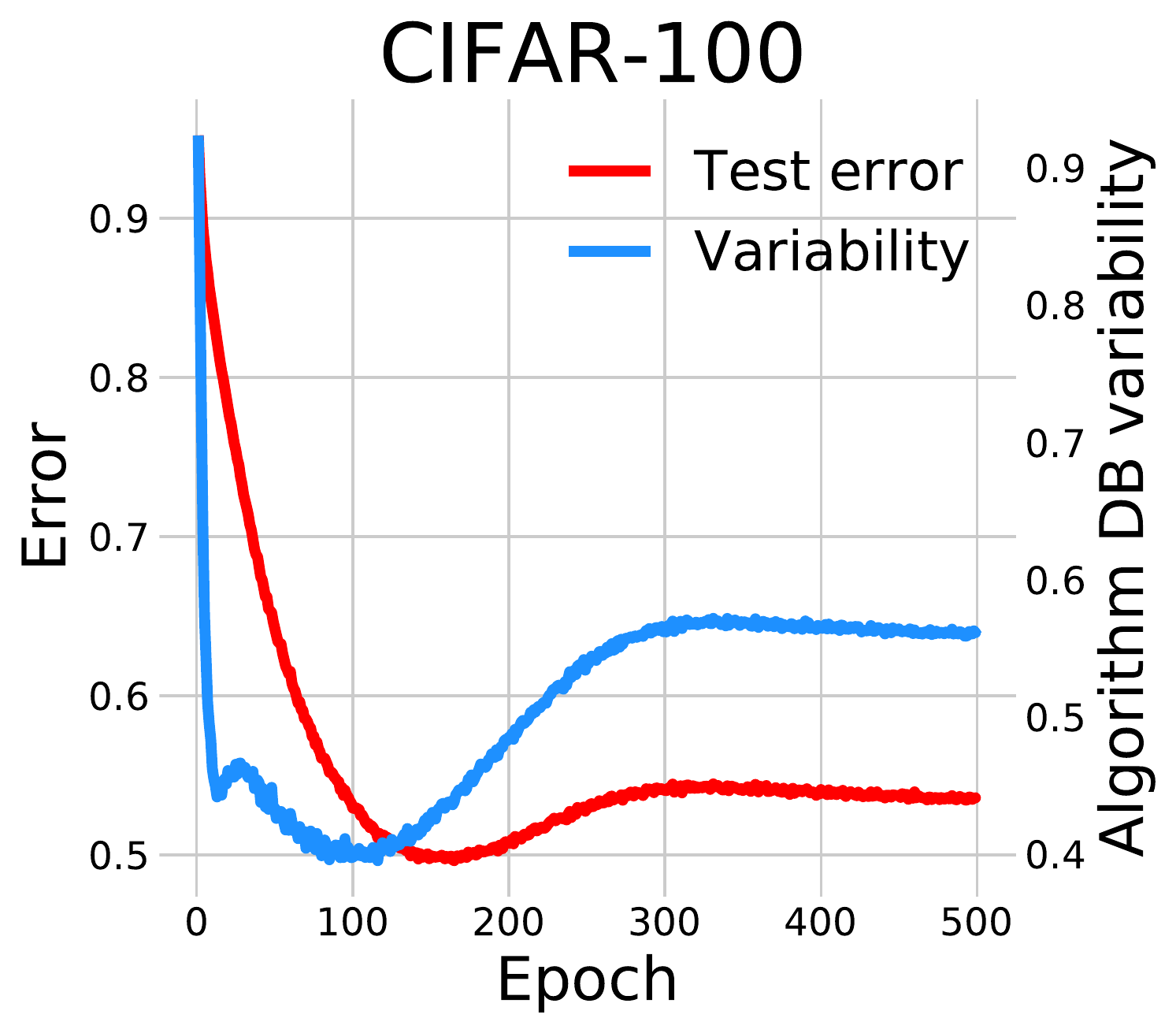}
    		\end{minipage}
		\label{figure:label noise}
    	}
\caption{(a) Plots of algorithm DB variability and test error as functions of training sample size on CIFAR-10 and CIFAR-100. (b) Plots of algorithm DB variability and test error as functions of training time with the existence of $20\%$ label noise on CIFAR-10 and CIFAR-100. Each curve is calculated and then averaged on $10$ trials.}
\label{figure:sample size}
\end{figure*}

\subsection{Algorithm DB variability and training time}
\label{sec:training process}
To investigate the relationship between algorithm DB variability and training time, we train $40$ ResNet-18 with different initial learning rates of $0.1$ and $0.01$ on CIFAR-10 and CIFAR-100. Then, the algorithm DB variability and test error are calculated at each epoch; see Figure \ref{figure:training process plot}. From the plots, two observations can be obtained: (1) algorithm DB variability and test error share a very similar curve {\it w.r.t.} the training time; and (2) algorithm DB variability decreases during the training process. The decline of algorithm DB variability shows that the interpolation on examples reduces the variability of decision boundaries on data generating distribution. As shown in Figure \ref{figure:training process scatter plot}, we collect the points of (algorithm DB variability, test error) from different epochs, and the scatter plots present a significant positive correlation between test error and the algorithm DB variability, and thus supports Hypothesis \ref{hypothesis:consistency}.

\subsection{Algorithm DB variability and sample size}
\label{sec:sample size}
We next investigate how sample size influences the algorithm DB variability. $100$ ResNet-18 are trained on five training sample sets of different sizes randomly drawn from CIFAR-10 and CIFAR-100, while all irrelevant variables are strictly controlled. Then, the algorithm DB variability and test error are calculated in all cases; see Figure \ref{figure:sample size plot}. From the plots, we have the following two observations: (1) test error and algorithm DB variability share a very similar curve {\it w.r.t.} the training sample size; (2) larger sample size, which intuitively helps obtain a more smooth estimation of the decision boundary, also contributes to smaller algorithm DB variability; and (3) there is a significant positive correlation between test error and algorithm DB variability, which fully supports our argument of Hypothesis \ref{hypothesis:consistency}.

\subsection{Algorithm DB variability and label noise}
\label{sec:label noise}
\citet{belkin2019reconciling,nakkiran2019deep} show a surprising epoch-wise double descent of test error, especially with the existence of label noise. We explore in this section the trend of algorithm DB variability when the label noise exists. $20$ ResNet-10 are trained for $500$ epochs with a constant learning rate of $0.0001$ on CIFAR-10 and CIFAR-100 with $20\%$ label noise. We clarify that the noise labels remain constant in different training repeats, which is necessary to estimate the algorithm DB variability. Then, the average test error and algorithm DB variability are calculated at every training epoch, as shown in Figure \ref{figure:label noise}. From the plots, two observations can be derived: (1) the algorithm DB variability also undergoes an epoch-wise double descent during the training process, especially in the left panel of Figure \ref{figure:label noise}; and (2) test error and algorithm DB variability still share a very similar curve {\it w.r.t.} the training time with the existence of label noise, which implies that factors influence the generalization of networks can also have an influence on the algorithm DB variability. Hence, algorithm DB variability is an excellent indicator for the generalization ability of networks. 

Here, we propose an insightful explanation about the epoch-wise double descent phenomenon of the data DB variability {\it w.r.t.} the training time: the increase of algorithm DB variability shows that fitting the noisy examples has a considerable effect on the formation of decision boundaries on data generating distribution. However, the algorithm DB variability decreases when the training proceeds further. This indicates that the negative effects brought from fitting the noisy training examples is gradually weakened. 
In other words, as the training proceeds, neural networks can automatically reduce the impact brought from interpolating hard-to-fit examples, 
which is insightful in explaining the decent generalizability of neural networks. 

{
\subsection{DB variability and model selection}
\label{sec:model selection}

In this section, we investigate the algorithm DB variability in model selection. We employ $25$ CNN and $25$ ResNet with different widths and depths to consist the model candidate pool. Each network is trained on CIFAR-10/100 with the initial learning rate of $0.01$ for $50$ epochs and $0.02$ for the next $50$ epochs. We repeat the training process $5$ times to compute the algorithm DB variability. Then, the algorithm DB variability and average test accuracy are calculated for each model; see Figure \ref{figure:model selection}, and we also plot the correlation between average LML and test accuracy for comparison. For each plot, we also calculate Spearman's rank-order correlation coefficients (SCCs) and the corresponding $p$ value of the collected data to investigate the statistical significance of the correlations; please see the bottom of the plots. For the plots two observation can be obtained: (1) there is a positive correlation between test accuracy and algorithm DB variability ($SCCs>0.9$) and the correlation are statistically significant ($p<0.005$)\footnote{The definition of ``statistically significant'' has various versions, such as $p<0.05$ and $p<0.1$. This paper uses a more rigorous one ($p<0.005$).}; (2) compared to the correlation between LML and test accuracy, the correlation between algorithm DB variability and test accuracy is more significant. Therefore, algorithm DB variability is a better measurement for model selection than LML, while both of them do not require a test set for validation.

\begin{figure*}[t]
\centering
\subfigure[CIFAR-10]{
\begin{minipage}[b]{0.46\textwidth}
            \centering
    		\includegraphics[width=0.45\columnwidth]{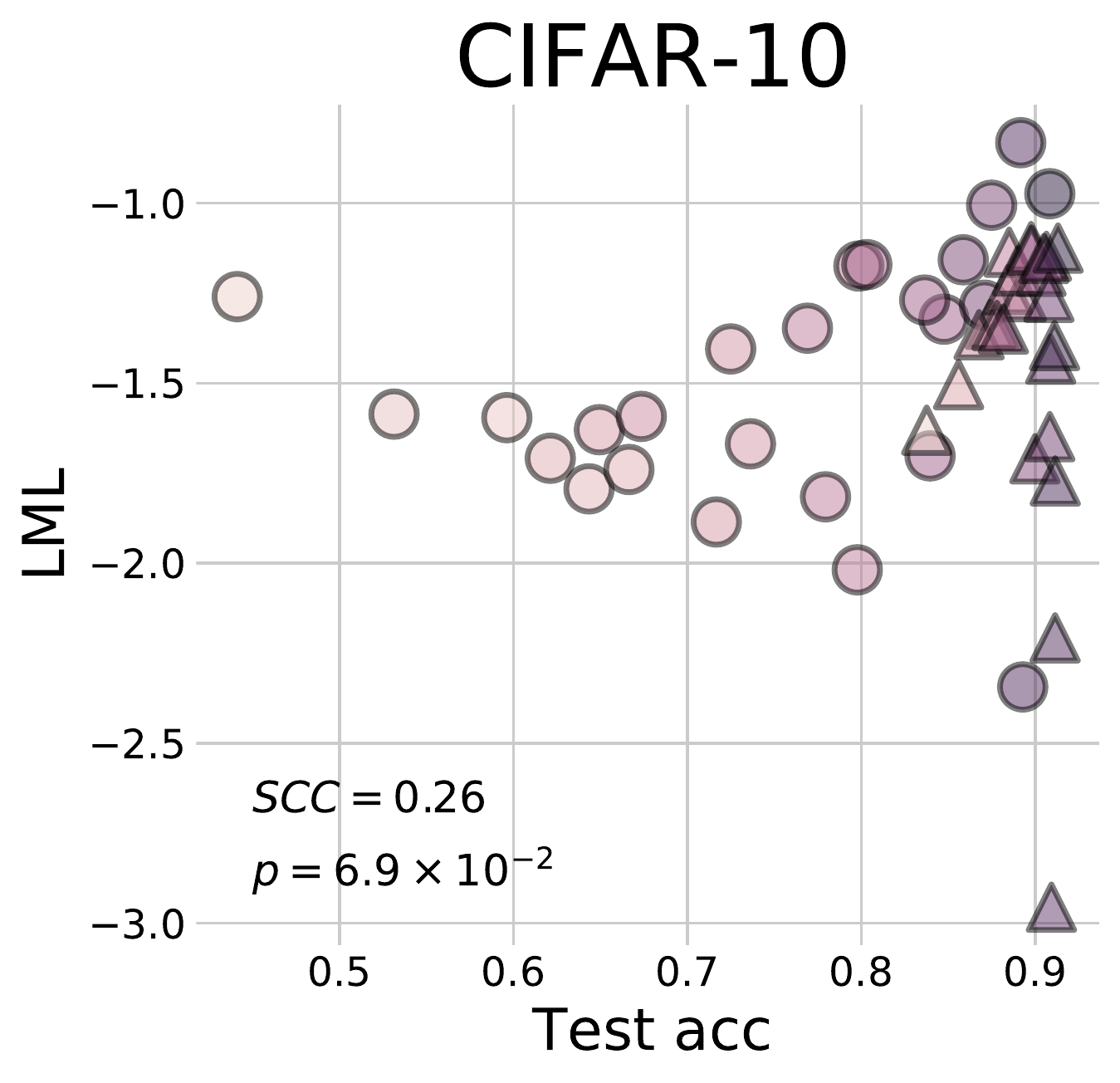}
    		\includegraphics[width=0.45\columnwidth]{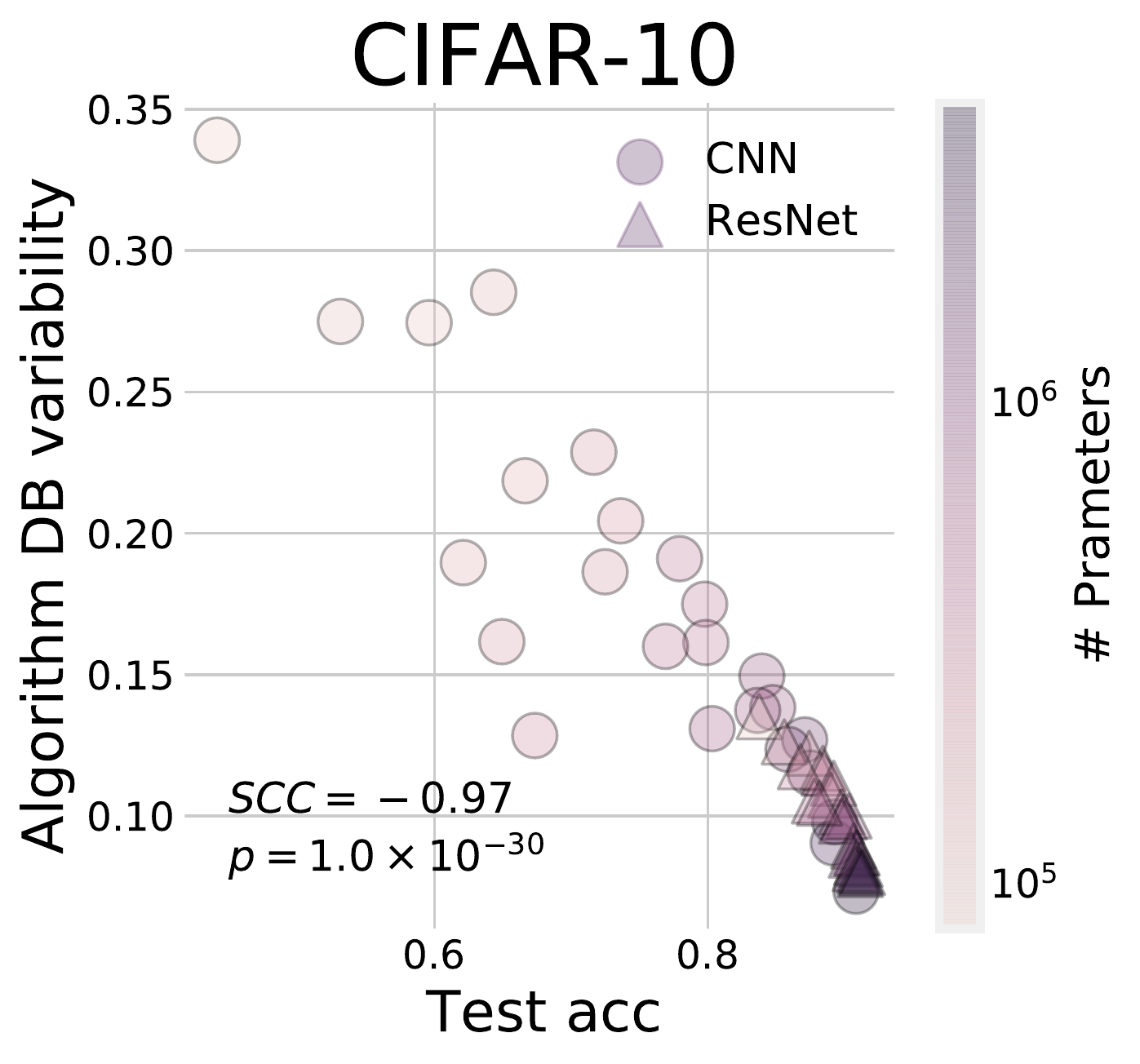}
    		\end{minipage}
		\label{figure:model selection cifar10}
    	}
\subfigure[CIFAR-100]{
\begin{minipage}[b]{0.46\textwidth}
            \centering
    		\includegraphics[width=0.45\columnwidth]{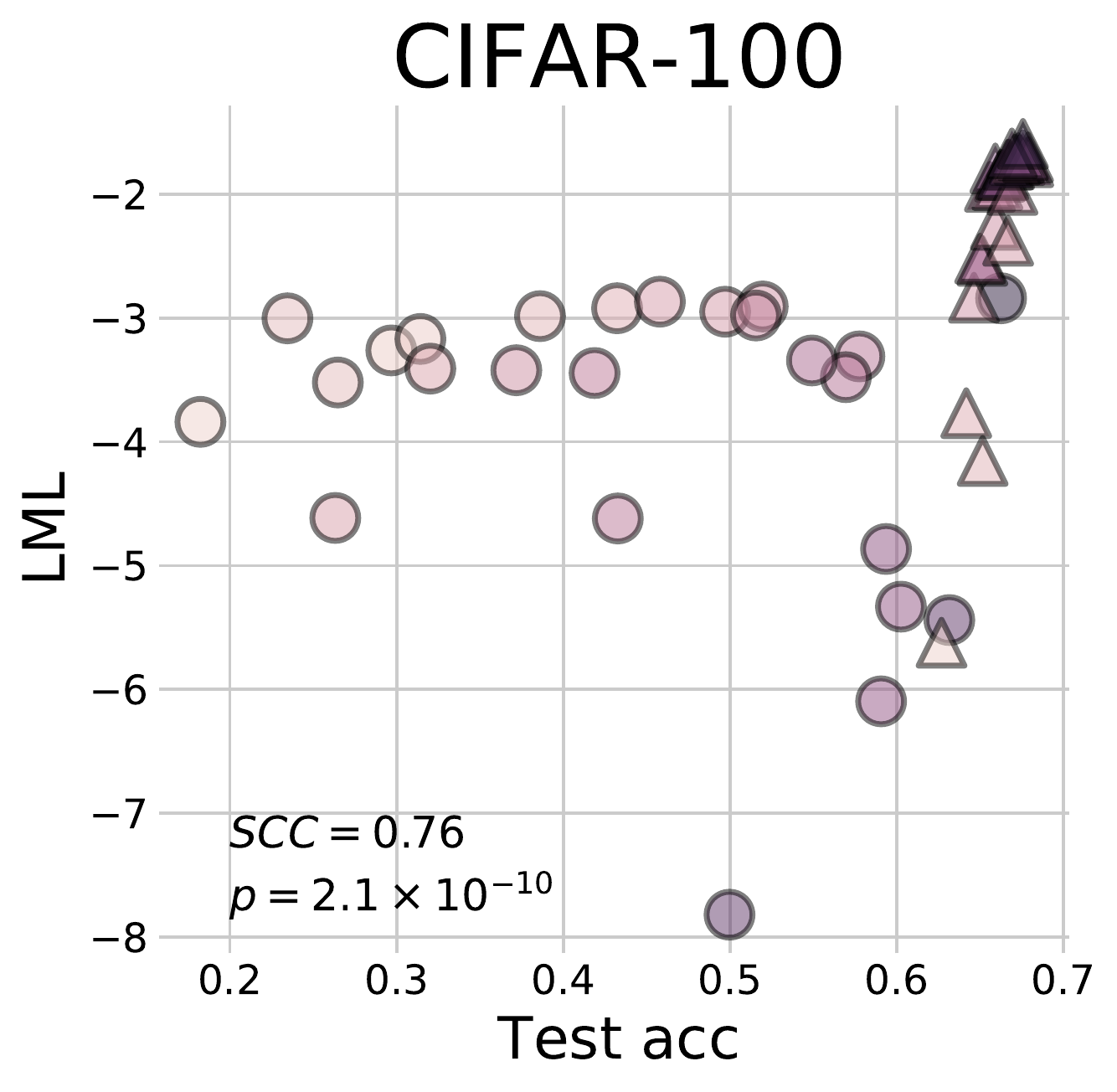}
    		\includegraphics[width=0.45\columnwidth]{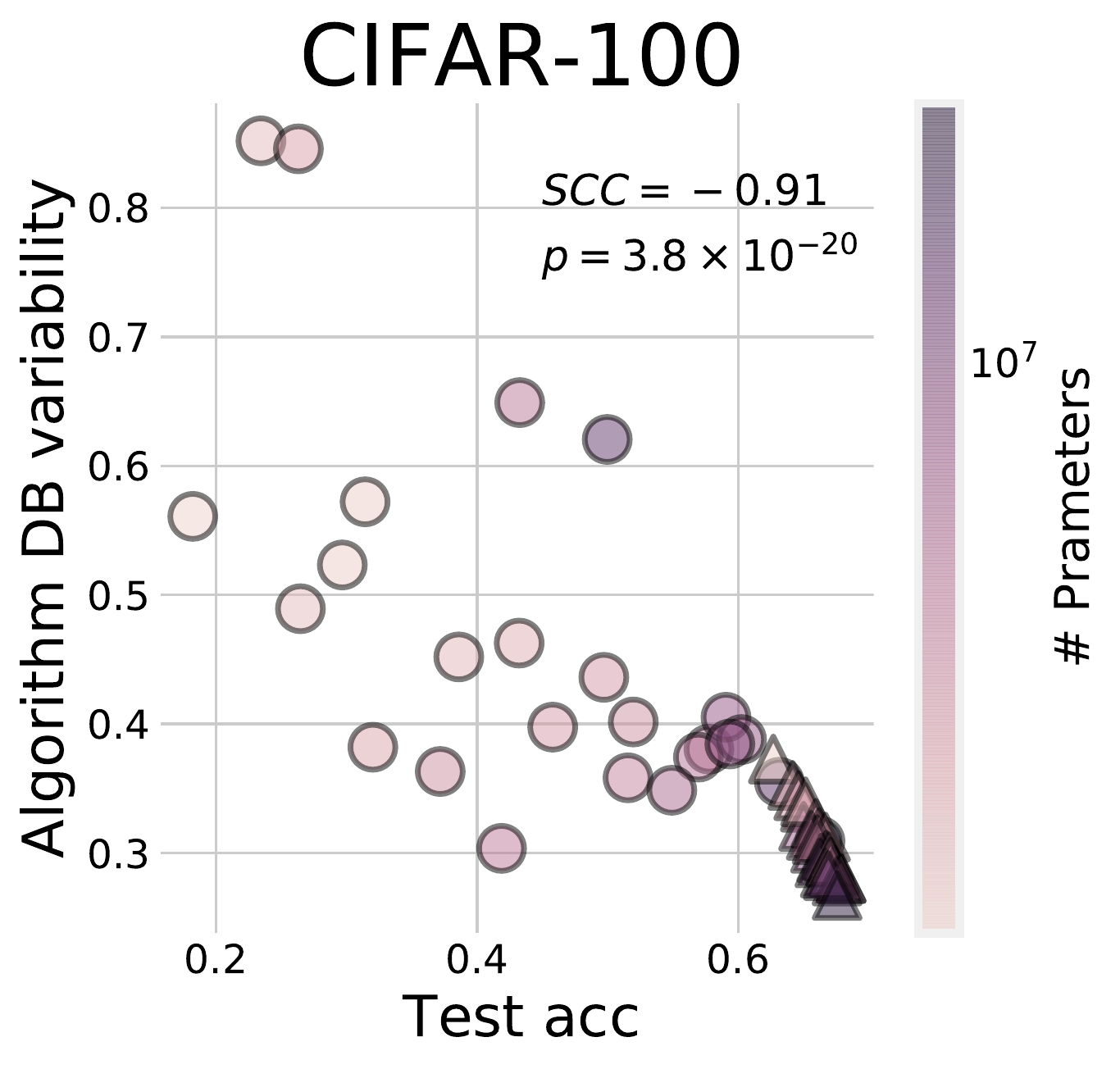}
    		\end{minipage}
		\label{figure:model selection cifar100}
    	}
\caption{Scatter plots of LML and algorithm DB variability to test accuracy with different architectures on CIFAR-10 and CIFAR-100.}
\label{figure:model selection}      
\end{figure*}

}

\subsection{Theoretical Evidence}
\label{sec:theoretical evidence about algorithm variability}

In this section, we explore and develop the theoretical foundations for the algorithm decision boundary variability on data generating distributions. 

In most practical cases, the dimension of decision boundaries is smaller than the data space. For example, the decision boundary in a three-dimensional data space is usually two. Thus, we may make the following mild assumption.

\begin{assumption}
\label{assumption:null boundary}
The decision boundary of the classifier network $f_{\boldsymbol{\theta}}$ on data generating distribution $\mathcal{D}$ is a set with measure zero.
\end{assumption}

We then have the following lemma.

\begin{lemma}
Let $f_{\boldsymbol{\theta}}(\mathbf{x}): \mathbb{R}^n \rightarrow \mathbb{R}^k$ be a classifier network parameterized by $\boldsymbol{\theta}$. If Assumption \ref{assumption:null boundary} holds for all $\boldsymbol{\theta}\sim\mathbb{Q}$, then, for all $i\in \{1,\cdots,k\}$, we have
\begin{equation}
    \mathbb{E}_{(\mathbf{x},y)\sim \mathcal{D}}\left[\mathbb{I}(i\in T(f_{\boldsymbol{\theta}},\mathbf{x}))\right] = \mathbb{E}_{(\mathbf{x},y)\sim \mathcal{D}}\left[\mathbb{I}(T(f_{\boldsymbol{\theta}},\mathbf{x})=i)\right] 
\end{equation}
and
\begin{equation}
    \mathbb{E}_{(\mathbf{x},y)\sim \mathcal{D}}\left[\mathbb{I}(i\notin T(f_{\boldsymbol{\theta}},\mathbf{x}))\right] = \mathbb{E}_{(\mathbf{x},y)\sim \mathcal{D}}\left[\mathbb{I}(T(f_{\boldsymbol{\theta}},\mathbf{x})\neq i)\right]. 
\end{equation}
\end{lemma}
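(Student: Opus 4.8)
The plan is to compare the two indicator random variables pointwise and to show that they differ only on the decision boundary, which carries no mass under $\mathcal{D}$. Throughout, abbreviate $T := T(f_{\boldsymbol{\theta}},\mathbf{x})$, and recall that the event ``$T=i$'' means $T=\{i\}$, i.e. that $i$ is the \emph{unique} maximiser of $f_{\boldsymbol{\theta}}(\mathbf{x})$. Let $B=\{\mathbf{x}\in\mathbb{R}^n : |T(f_{\boldsymbol{\theta}},\mathbf{x})|\ge 2\}$ denote the set of tie points, which is exactly the decision boundary of Definition \ref{def:decision boundary}. By Assumption \ref{assumption:null boundary}, $B$ has measure zero under $\mathcal{D}$, so $\mathbb{E}_{(\mathbf{x},y)\sim\mathcal{D}}[\mathbb{I}(\mathbf{x}\in B)]=0$.

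For the first identity, I would use the pointwise decomposition
\begin{equation}
\mathbb{I}(i\in T)=\mathbb{I}(T=\{i\})+\mathbb{I}\bigl(i\in T,\ T\neq\{i\}\bigr),
\end{equation}
which holds because the events $\{T=\{i\}\}$ and $\{i\in T,\ T\neq\{i\}\}$ are disjoint and their union is $\{i\in T\}$. The crucial observation is that $i\in T$ together with $T\neq\{i\}$ forces $|T|\ge 2$, hence $\mathbb{I}(i\in T,\ T\neq\{i\})\le \mathbb{I}(\mathbf{x}\in B)$ pointwise. Taking $\mathbb{E}_{(\mathbf{x},y)\sim\mathcal{D}}$ of the displayed identity and bounding the last term by $\mathbb{E}[\mathbb{I}(\mathbf{x}\in B)]=0$ (it is also nonnegative, so it equals $0$) yields the first claimed equality.

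The second identity then follows immediately by complementation: since $\mathbb{I}(i\notin T)=1-\mathbb{I}(i\in T)$ and $\mathbb{I}(T\neq\{i\})=1-\mathbb{I}(T=\{i\})$, taking expectations and substituting the first identity gives the result. Alternatively, one may repeat the same argument using the disjoint union $\{T\neq\{i\}\}=\{i\notin T\}\cup\{i\in T,\ T\neq\{i\}\}$, whose extra term is again supported on $B$.

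The only genuinely delicate point is measure-theoretic rather than combinatorial, and it is where I expect the main (mild) obstacle to lie. First, one must read Assumption \ref{assumption:null boundary} as asserting that $\mathcal{D}$ assigns zero probability to $B$ (equivalently, that $\mathcal{D}$ is absolutely continuous with respect to Lebesgue measure and $B$ is Lebesgue-null); this is exactly what makes the error term vanish. Second, one should verify that $B$ and the events above are measurable, which holds as soon as $\mathbf{x}\mapsto f_{\boldsymbol{\theta}}(\mathbf{x})$ is measurable, so that $\{f^{(i)}_{\boldsymbol{\theta}}(\mathbf{x})=f^{(j)}_{\boldsymbol{\theta}}(\mathbf{x})=\max_q f^{(q)}_{\boldsymbol{\theta}}(\mathbf{x})\}$ is measurable for each pair $i\neq j$ and $B$ is the finite union of these sets. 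Both conditions are guaranteed by the standing assumptions, so no real difficulty remains once the decomposition is in place.
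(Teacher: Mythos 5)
Your proof is correct: the disjoint decomposition $\mathbb{I}(i\in T)=\mathbb{I}(T=\{i\})+\mathbb{I}(i\in T,\ T\neq\{i\})$, with the second term supported on the tie set $B$ that Assumption \ref{assumption:null boundary} makes $\mathcal{D}$-null, together with complementation for the second identity, is exactly the intended argument. The paper itself gives no explicit proof of this lemma (it is treated as immediate from Assumption \ref{assumption:null boundary} and used implicitly in the proof of Theorem \ref{thm:lower bound}), so your write-up simply supplies the omitted, standard reasoning; the only nitpick is that reading the assumption as ``$\mathcal{D}(B)=0$'' is all that is needed, and your parenthetical gloss via absolute continuity with respect to Lebesgue measure is sufficient but not an equivalent restatement.
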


Then, can we prove the following theorem.

\begin{theorem}[lower bound on expected risk]
\label{thm:lower bound}
Let $f_{\boldsymbol{\theta}}(\mathbf{x}): \mathbb{R}^n \rightarrow \mathbb{R}^k$ be a neural network for classification parameterized by $\boldsymbol{\theta}$. Suppose $\mathbb{Q}(\boldsymbol{\theta})$ is the distribution over $\boldsymbol{\theta}$. Then, if Assumption \ref{assumption:null boundary} holds for all $\boldsymbol{\theta}\sim\mathbb{Q}$, we have
\begin{equation}
    \mathcal{R}_\mathcal{D}(\mathbb{Q}) \geq 1- \sqrt{1-AV(f_\mathbb{Q},\mathcal{D})}, 
\end{equation}
where $AV(f_\mathbb{Q},\mathcal{D})$ is the algorithm DB variability for $f_\mathbb{Q}$ on data generating distribution $\mathcal{D}$.
\end{theorem}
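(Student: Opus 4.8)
The plan is to exploit Assumption~\ref{assumption:null boundary} (through the preceding Lemma) to collapse the set-valued predictor $T(f_{\boldsymbol{\theta}},\mathbf{x})$ into a genuine single label for almost every input, and then to rewrite both sides of the inequality in terms of the per-class prediction probabilities. Concretely, for a fixed input $\mathbf{x}$ and each class $i\in[k]$ I would define
\[
    p_i(\mathbf{x}) := \mathbb{E}_{\boldsymbol{\theta}\sim\mathbb{Q}}\left[\mathbb{I}\left(T(f_{\boldsymbol{\theta}},\mathbf{x})=i\right)\right].
\]
Because the decision boundary has measure zero for every $\boldsymbol{\theta}\sim\mathbb{Q}$, the Lemma guarantees that $T(f_{\boldsymbol{\theta}},\mathbf{x})$ is a singleton for $\mathcal{D}\times\mathbb{Q}$-almost every pair $(\mathbf{x},\boldsymbol{\theta})$, so that $\sum_{i=1}^k p_i(\mathbf{x})=1$ for $\mathcal{D}$-almost every $\mathbf{x}$, and the ``$\in$ versus $=$'' distinction appearing in the risk may be dropped under the expectations.

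With this notation both target quantities become pointwise transparent. First, the expected risk at $(\mathbf{x},y)$ equals $\mathbb{E}_{\boldsymbol{\theta}}[\mathbb{I}(y\neq T(f_{\boldsymbol{\theta}},\mathbf{x}))]=1-p_y(\mathbf{x})$, hence $\mathcal{R}_\mathcal{D}(\mathbb{Q})=1-\mathbb{E}_{(\mathbf{x},y)\sim\mathcal{D}}[p_y(\mathbf{x})]$. Second, since $\boldsymbol{\theta}$ and $\boldsymbol{\theta}^\prime$ are drawn independently from $\mathbb{Q}$ and both predictions are a.s.\ singletons, the probability that they agree is $\sum_{i} p_i(\mathbf{x})^2$, so the integrand of $AV$ equals $1-\sum_{i} p_i(\mathbf{x})^2$ and therefore $1-AV(f_\mathbb{Q},\mathcal{D})=\mathbb{E}_{\mathbf{x}}\left[\sum_{i=1}^k p_i(\mathbf{x})^2\right]$. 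Substituting both expressions, the claimed bound $\mathcal{R}_\mathcal{D}(\mathbb{Q})\geq 1-\sqrt{1-AV(f_\mathbb{Q},\mathcal{D})}$ is seen to be equivalent to
\[
    \left(\mathbb{E}_{(\mathbf{x},y)\sim\mathcal{D}}\left[p_y(\mathbf{x})\right]\right)^2 \leq \mathbb{E}_{\mathbf{x}}\left[\sum_{i=1}^k p_i(\mathbf{x})^2\right].
\]

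The remaining inequality I would prove in two steps. Jensen's inequality applied to the convex map $t\mapsto t^2$ gives $\left(\mathbb{E}[p_y(\mathbf{x})]\right)^2\leq \mathbb{E}[p_y(\mathbf{x})^2]$; and since $y\in[k]$ and every $p_i(\mathbf{x})\geq 0$, the single term $p_y(\mathbf{x})^2$ is dominated by the full sum, whence $\mathbb{E}[p_y(\mathbf{x})^2]\leq\mathbb{E}\left[\sum_{i} p_i(\mathbf{x})^2\right]$ (the right-hand side depending only on $\mathbf{x}$, so the label marginal integrates out harmlessly). Composing the two estimates closes the argument.

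I expect the only genuine subtlety, as opposed to the final algebra, to lie in the measure-theoretic bookkeeping of the first paragraph: one must invoke the Lemma to ensure the tie set contributes nothing under the relevant expectations, so that $\sum_{i} p_i(\mathbf{x})=1$ and the collision probability is \emph{exactly} $\sum_{i} p_i(\mathbf{x})^2$ rather than merely bounded by it. Once $T$ is legitimately single-valued almost surely, the proof reduces to Jensen's inequality followed by a nonnegativity comparison.
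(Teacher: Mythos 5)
Your proof is correct and is essentially the paper's own argument in different clothing: your bound $1-AV \geq \mathbb{E}\left[\sum_i p_i(\mathbf{x})^2\right] \geq \mathbb{E}\left[p_y(\mathbf{x})^2\right] \geq \left(\mathbb{E}\left[p_y(\mathbf{x})\right]\right)^2 = (1-\mathcal{R}_\mathcal{D}(\mathbb{Q}))^2$ matches the paper's chain exactly, where dropping the terms $p_i(\mathbf{x})^2$ for $i\neq y$ corresponds to the paper's bound $\mathbb{I}\left(y\notin T(f_{\boldsymbol{\theta}},\mathbf{x})\right)\mathbb{I}\left(T(f_{\boldsymbol{\theta}},\mathbf{x})\neq T(f_{\boldsymbol{\theta}^\prime},\mathbf{x})\right)\leq \mathbb{I}\left(y\notin T(f_{\boldsymbol{\theta}},\mathbf{x})\right)$, and your Jensen step is the paper's Jensen step applied to $1-p_y$ instead of $p_y$. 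The reformulation via per-class prediction probabilities and collision probability is a clean presentation, but the decomposition, the use of independence of $\boldsymbol{\theta},\boldsymbol{\theta}^\prime$, and the measure-zero handling of ties are all the same.
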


Theorem \ref{thm:lower bound} provides a lower bound on the expected risk $\mathcal{R}_\mathcal{D}(\mathbb{Q})$ based on the algorithm DB variability $AV(f_\mathbb{Q},\mathcal{D})$. Moreover, when we consider the binary classification, {\it i.e.}, $k=2$, there is a tighter lower bound.

\begin{theorem}[lower bound for binary case]
\label{thm:lower bound for binary case}
Let $f_{\boldsymbol{\theta}}(\mathbf{x}): \mathbb{R}^n \rightarrow \mathbb{R}^2$ be a binary classifier network parameterized by $\boldsymbol{\theta}$ and let $\mathbb{Q}(\boldsymbol{\theta})$ be the distribution over $\boldsymbol{\theta}$. Suppose the expected risk $\mathcal{R}_\mathcal{D}(\mathbb{Q})\leq \frac{1}{2}$ and Assumption \ref{assumption:null boundary} hold for all $\boldsymbol{\theta}\sim \mathbb{Q}$, then we have
\begin{equation}
    \mathcal{R}_{\mathcal{D}}(\mathbb{Q}) \geq \frac{1-\sqrt{1-2AV(f_\mathbb{Q},\mathcal{D})}}{2}. 
\end{equation}
\end{theorem}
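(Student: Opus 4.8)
The plan is to reduce the statement to a pointwise computation enabled by the binary structure, followed by a one-line variance argument. First I would invoke Assumption~\ref{assumption:null boundary} together with the preceding Lemma: since the decision boundary carries zero mass, for $\mathbb{Q}$-almost every $\boldsymbol{\theta}$ and $\mathcal{D}$-almost every $\mathbf{x}$ the set $T(f_{\boldsymbol{\theta}},\mathbf{x})$ is a singleton, so I may treat each prediction as a single label in $\{1,2\}$ and freely replace $\mathbb{I}(y\notin T)$ by $\mathbb{I}(y\neq T)$ inside every expectation.

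Next I would introduce the pointwise error probability $q(\mathbf{x},y):=\mathbb{P}_{\boldsymbol{\theta}\sim\mathbb{Q}}\left[y\notin T(f_{\boldsymbol{\theta}},\mathbf{x})\right]$, so that $\mathcal{R}_\mathcal{D}(\mathbb{Q})=\mathbb{E}_{(\mathbf{x},y)\sim\mathcal{D}}[q(\mathbf{x},y)]$. The crucial step, and the place where $k=2$ is really used, is rewriting the $AV$ integrand. For a fixed $\mathbf{x}$ there are only two admissible singleton predictions, so two independent draws $T(f_{\boldsymbol{\theta}},\mathbf{x})$ and $T(f_{\boldsymbol{\theta}'},\mathbf{x})$ disagree if and only if exactly one of them is wrong; hence $\mathbb{P}_{\boldsymbol{\theta},\boldsymbol{\theta}'}[T\neq T']=2q(\mathbf{x},y)(1-q(\mathbf{x},y))$ and therefore $AV(f_\mathbb{Q},\mathcal{D})=\mathbb{E}_{(\mathbf{x},y)}[2q(1-q)]$. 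In the general $k$-class case this identity fails, because two erroneous predictions may still differ from each other, which is exactly why the binary bound can be tighter.

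With these two representations in hand I would rewrite the target inequality. Writing $R:=\mathcal{R}_\mathcal{D}(\mathbb{Q})$ and $A:=AV(f_\mathbb{Q},\mathcal{D})$, the bound $R\ge\frac{1-\sqrt{1-2A}}{2}$ is equivalent to $1-2R\le\sqrt{1-2A}$; since the hypothesis $R\le\frac12$ forces $1-2R\ge 0$, both sides are nonnegative and I may square (this is precisely where $R\le\frac12$ enters), obtaining the equivalent statement $A\le 2R(1-R)$. Setting $Q:=q(\mathbf{x},y)$ as a random variable over $(\mathbf{x},y)\sim\mathcal{D}$, this reads $2\left(\mathbb{E}[Q]-\mathbb{E}[Q^2]\right)\le 2\left(\mathbb{E}[Q]-(\mathbb{E}[Q])^2\right)$, i.e. $(\mathbb{E}[Q])^2\le\mathbb{E}[Q^2]$, which is nothing but $\mathrm{Var}(Q)\ge 0$ (equivalently Jensen's inequality for $t\mapsto t^2$).

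The bulk of the work is therefore bookkeeping; the only genuine idea is the binary disagreement identity $\mathbb{P}[T\neq T']=2q(1-q)$. I expect the main obstacle to be stating the measure-zero reductions carefully enough that the passage from indicator sets to scalar error probabilities is rigorous, and the swap of expectations in defining $q$ is justified. I would also double-check the quadratic manipulation, confirming that $A\le\frac12$ (which follows from $2q(1-q)\le\frac12$) guarantees the square root is real and that the squaring step is indeed reversible under $R\le\frac12$.
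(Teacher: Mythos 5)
Your proof is correct and is essentially the paper's own argument in complementary form: your binary disagreement identity $\mathbb{P}[T\neq T']=2q(1-q)$ is exactly the complement of the paper's computation of the agreement probability $\mathbb{E}^2_{\boldsymbol{\theta}}[\mathbb{I}(y\in T)]+\mathbb{E}^2_{\boldsymbol{\theta}}[\mathbb{I}(y\notin T)]$, and your appeal to Jensen's inequality $(\mathbb{E}[Q])^2\leq\mathbb{E}[Q^2]$ is the same step the paper phrases as nonnegativity of $\operatorname{Var}_{\mathcal{D}}\left[\mathbb{E}_{\boldsymbol{\theta}}[\mathbb{I}(y\in T)]\right]$. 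Both routes reduce to $AV\leq 2R(1-R)$ and solve the quadratic using $R\leq\tfrac12$, so there is nothing to change.
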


\begin{remark}
    The combination of the empirical and theoretical results suggest a meaningful correlation between algorithm DB variability and expected risk in deep learning.
\end{remark}


\section{Data decision boundary variability}

In the previous sections, we introduced the algorithm DB variability, which measures the decision boundary variability caused by the randomness of learning algorithms. 
In this section, we define the data DB variability to characterize decision boundary variability caused by the randomness in training data.

\begin{definition}[data decision boundary variability]
\label{def:complexity of db}
Let $f_{\boldsymbol{\theta}}(\mathbf{x}): \mathbb{R}^n \rightarrow \mathbb{R}^k$ be a neural network for classification parameterized by $\boldsymbol{\theta}$, where $\boldsymbol{\theta}\sim\mathcal{A}(\mathcal{S})$ is returned by leveraging the stochastic learning algorithm $\mathcal{A}$ on the training set $\mathcal{S}$, which is sampled from the data generating distribution $\mathcal{D}$.
We term $\mathcal{S}_\eta \subset \mathcal{S}$ as a $\eta\textit{-subset}$ of $\mathcal{S}$ if $\frac{|\mathcal{S}_\eta|}{|\mathcal{S}|}=\eta$. Then, if we fixed $\eta$ and
\begin{equation}
    \inf_{\mathcal{S}_\eta \subset \mathcal{S}} \mathbb{E}_{\mathcal{D}} \mathbb{E}_{\boldsymbol{\theta}\sim \mathcal{A}(\mathcal{S}), \boldsymbol{\theta}^\prime \sim \mathcal{A}(\mathcal{S}_\eta)} \left[\mathbb{I}\left(T(f_{\boldsymbol{\theta}}, \mathbf{x}) \neq T(f_{\boldsymbol{\theta}^\prime}, \mathbf{x})\right)\right] = \epsilon, 
\end{equation}
the decision boundary of $f_{\mathcal{A}(\mathcal{S})}$ is said to possess a $(\epsilon, \eta)\textit{-data decision boundary variability}$.
\end{definition}

\begin{figure*}[t]
\centering
\subfigure[$\eta$-$\epsilon$ curves on CIFAR-10]{
\begin{minipage}[b]{0.3\textwidth}
    		\includegraphics[width=1.\columnwidth]{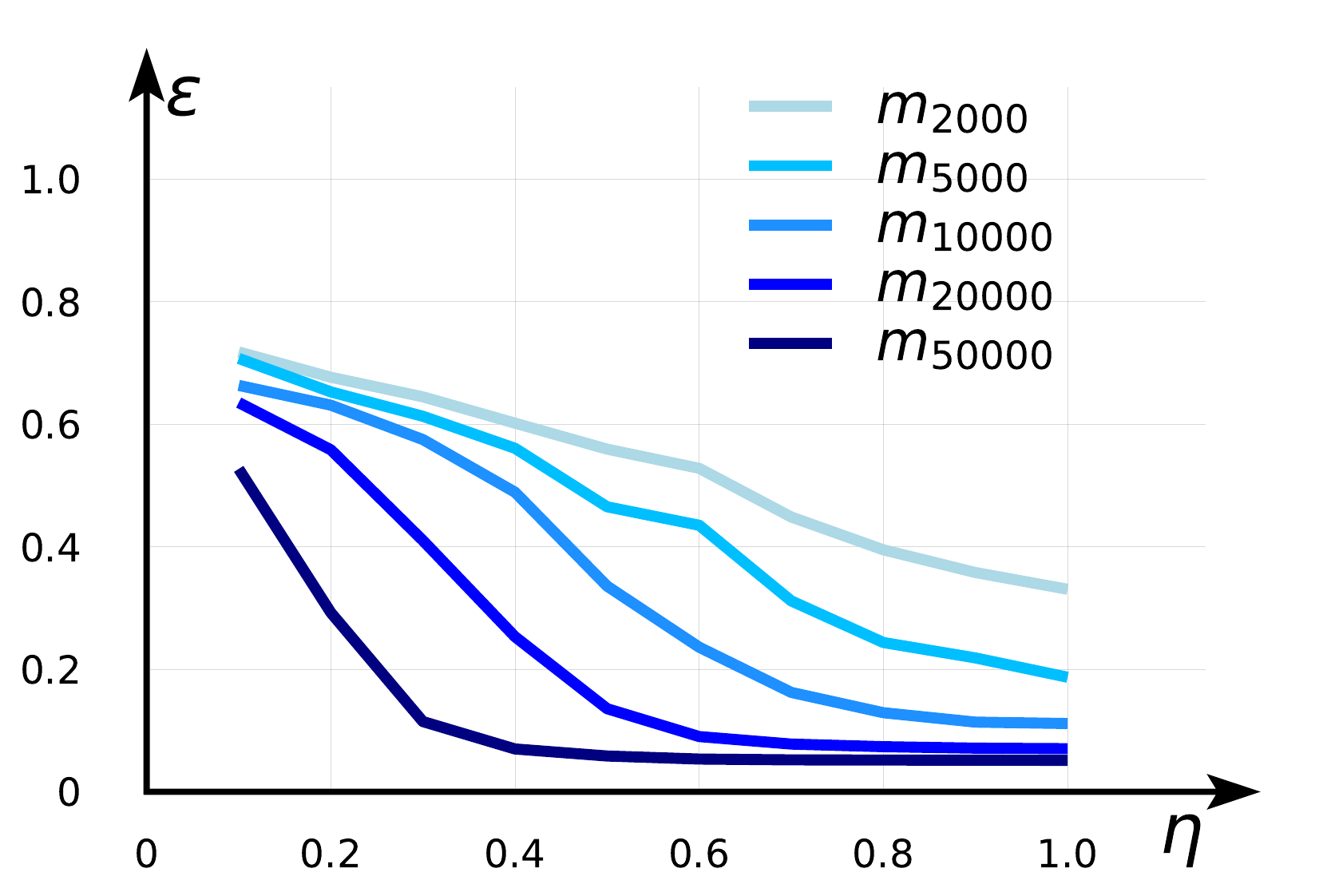}
    		\end{minipage}
		\label{figure:cifar10_complexity_curve}   
    	}
\subfigure[$\eta$-$\epsilon$ curves on CIFAR-100]{
\begin{minipage}[b]{0.3\textwidth}
    		\includegraphics[width=1.\columnwidth]{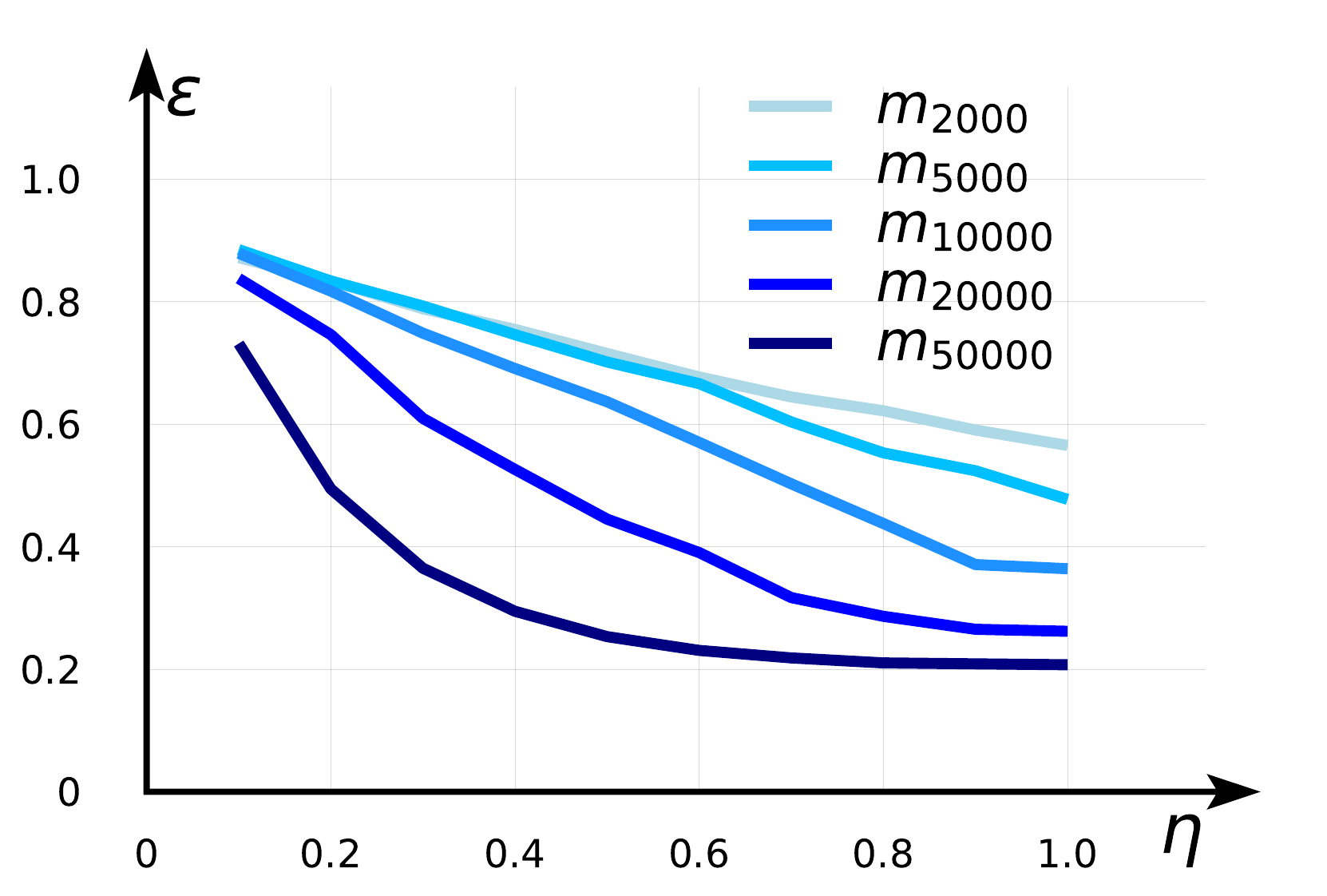}
    		\end{minipage}
		\label{figure:cifar100_complexity_curve}   
    	}
\subfigure[Schematic diagram]{
\begin{minipage}[b]{0.3\textwidth}
    		\includegraphics[width=1.\columnwidth]{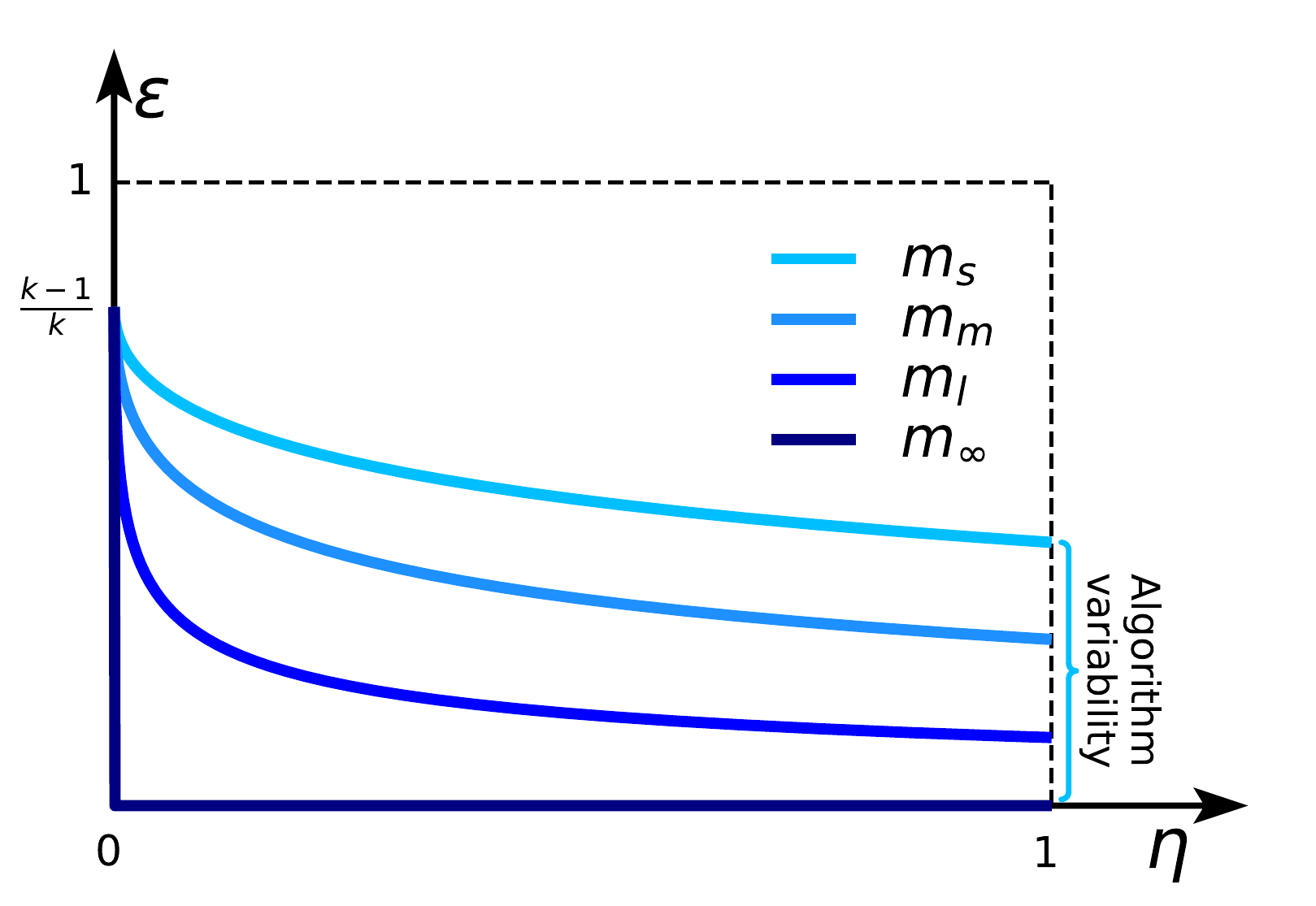}
    		\end{minipage}
		\label{figure:idea_complexity curve}   
    	}
\caption{(a) The $\eta$-$\epsilon$ curves on CIFAR-10 with different training sample sizes $2000$ ($m_{2000}$), $2000$ ($m_{2000}$), $10000$ ($m_{10000}$), $20000$ ($m_{20000}$), and $50000$ ($m_{50000}$), respectively. (b) The $\eta$-$\epsilon$ curves on CIFAR-100 with different training sample sizes. (c) The schematic diagram of the $\eta$-$\epsilon$ curves {\it w.r.t.} small ($m_s$), medium ($m_m$), large ($m_l$), and infinite ($m_\infty$) sample sizes, respectively.}
\label{figure:complexity_curve}
\end{figure*}

An illustration of data DB variability is presented in Figure \ref{figure:illustration data DB}. 
The data decision boundary variability contains two parameters of $\epsilon$ and $\eta$, respectively. That Gibbs classifier $f_{\mathcal{A}(\mathcal{S})}$ has a $(\epsilon,\eta)$-data DB variability means that only the proportion of $\eta$ of $\mathcal{S}$, {\it i.e.}, $\mathcal{S}_\eta$ (which can be considered as ``support vector set'') is enough to reconstruct a similar decision boundary with the reconstruction error $\epsilon$. The data DB variability can also be connected with the complexity of decision boundaries if we assume that simpler decision boundaries rely on a smaller number of ``support vectors''; we provide a detailed discussion in Section \ref{app:complexity of db}.

\subsection{\texorpdfstring{$\eta$-$\epsilon$ c}curves about data DB variability}
\label{sec:consistency and complexity}

According to Definition \ref{def:complexity of db}, the data DB variability degrades to the algorithm DB variability $AV(f_\mathbb{Q},\mathcal{D})$ when $\mathcal{S}_\eta=\mathcal{S}$. In other words, the algorithm DB variability is a special case of the data DB variability with $\eta=1$. 
Therefore, the data DB variability could present more detailed information on reflecting how the decision boundary variability depends on the training set, especially when we observe the variation of the reconstruction error $\epsilon$ {\it w.r.t.} different $\eta$.

To explore the relationship between the reconstruction error $\epsilon$ and the proportion of subset $\eta$,
we train $1,000$ networks of ResNet-18 on CIFAR-10 and CIFAR-100 of different sample sizes $m$. Albeit finding the most suitable $\eta$-subset is intractable, we adopt a coreset selection approach named, {\it selection via proxy} \citep{coleman2020selection}, which can rank the importance of training examples, to estimate the $\eta$-subset for a given training set $\mathcal{S}$ and proportion $\eta$. 
Then, through repeatedly training the network on $\mathcal{S}_\eta$, we can estimate the reconstruction error $\epsilon$. The $\eta$-$\epsilon$ curves of CIFAR-10 and CIFAR-100 are presented in Figure \ref{figure:cifar10_complexity_curve} and \ref{figure:cifar100_complexity_curve}, respectively. From the plots, we have an observation that {\it there is a more rapid decline in $\epsilon$ along with small $\eta$} and also a smaller algorithm DB variability when the training sample size $m$ is larger. Furthermore, we plot the schematic diagram of $\eta$-$\epsilon$ curves {\it w.r.t.} different sample size $m$, as shown in Figure \ref{figure:idea_complexity curve}. When $\eta=0$, $f_{\mathcal{A}(\mathcal{S}_n)}$ cannot be better than random guess, and hence $\epsilon=\frac{k-1}{k}$, where $k$ is the number of potential categories. It is worth noting that that $\epsilon$ has a sharper drop along with $\eta$ when the sample size $m$ is larger. Therefore, we rationally propose the following assumption, which is also shown by the right angle with $m_\infty$ in Figure \ref{figure:idea_complexity curve}.
\begin{assumption}
\label{assumption:coverge}
If $m\rightarrow \infty$, we have $\epsilon\rightarrow 0$ when $\eta\rightarrow 0$.
\end{assumption}

These plots indicate that the area under the $\eta$-$\epsilon$ curve could be a more meticulous predictor for the generalization ability of neural networks compared to the algorithm DB variability, which is only a point on the $\eta$-$\epsilon$ curve when $\eta=1$. Hence, the area under the $\eta$-$\epsilon$ curve can also be considered as an extension of the algorithm DB variability: if the Gibbs classifier $f_{\mathcal{A}(\mathcal{S})}$ possesses a smaller area under the $\eta$-$\epsilon$ curve, it produces more stable decision boundaries with varying training subsets. 

\subsection{Theoretical evidence}
\label{sec:theoretical evidence about dataset variability}

In this section, we develop the theoretical foundations for the data decision boundary variability. Our theory suggests that {\it neural networks with better data DB variability possess better generalization}, which fully supports our theory. 

According to the definition of data decision boundary variability, the $\eta$-subset $\mathcal{S}_\eta$ plays a similar role of ``support vector set'', and the complement set $\mathcal{S}\backslash\mathcal{S}_\eta = \mathcal{S} - \mathcal{S}_\eta$ is supposed to be sampled from simpler distributions than $\mathcal{D}$. Therefore, we make the following mild assumption.

\begin{assumption}
\label{assumption:data db variabilty}
Examples in $\mathcal{S}\backslash\mathcal{S}_\eta$ are assumed to be drawn from the distribution $\mathcal{D}_1$, where for all $(\mathbf{x},y)\sim \mathcal{D}_1$, $\mathbb{E}_{\boldsymbol{\theta}\sim \mathcal{A}(\mathcal{S}_\eta)}[\mathbb{I}(y\in T(f_{\boldsymbol{\theta}},\mathbf{x}))]=\max_{i\in [k]}\mathbb{E}_{\boldsymbol{\theta}\sim \mathcal{A}(\mathcal{S}_\eta)}[\mathbb{I}(i\in T(f_{\boldsymbol{\theta}},\mathbf{x}))]$ holds, and
\begin{align}
    &\mathbb{E}_{\mathcal{D}_1} \mathbb{E}_{\boldsymbol{\theta}\sim \mathcal{A}(\mathcal{S}), \boldsymbol{\theta}^\prime \sim \mathcal{A}(\mathcal{S}_\eta)} \left[\mathbb{I}\left(T(f_{\boldsymbol{\theta}}, \mathbf{x}) \neq T(f_{\boldsymbol{\theta}^\prime}, \mathbf{x})\right)\right] \nonumber \\
    & \leq  \mathbb{E}_{\mathcal{D}} \mathbb{E}_{\boldsymbol{\theta}\sim \mathcal{A}(\mathcal{S}), \boldsymbol{\theta}^\prime \sim \mathcal{A}(\mathcal{S}_\eta)} \left[\mathbb{I}\left(T(f_{\boldsymbol{\theta}}, \mathbf{x}) \neq T(f_{\boldsymbol{\theta}^\prime}, \mathbf{x})\right)\right]=\epsilon 
\end{align}

\end{assumption}
\begin{remark}
Assumption \ref{assumption:data db variabilty} can also be stated as follows: the data ($\mathcal{D}_1$) correctly classified by $f_{\mathcal{A}(\mathcal{S}_\eta)}$ possesses a lower data decision boundary variability than the average data decision boundary variability on $\mathcal{D}$.
\end{remark}

Given Assumption \ref{assumption:data db variabilty}, we can further derive a probably approximately correct bound {\it w.r.t.} $\mathcal{R}_{\mathcal{S}\backslash\mathcal{S}_\eta}(\mathcal{A}(\mathcal{S}_\eta))$, as shown in the following lemma.

\begin{lemma}
\label{lemma:complement set bound}
If the decision boundaries of $f_{\mathcal{A}(\mathcal{S})}$ possess a  $(\epsilon,\eta)$-data DB variability and Assumption \ref{assumption:data db variabilty} holds, then, with the probability of at least $1-\delta$ over a sample of size $m$, we have
\begin{equation}
    \mathcal{R}_{\mathcal{S}\backslash\mathcal{S}_\eta}(\mathcal{A}(\mathcal{S}_\eta)) \leq \epsilon + \sqrt{\frac{1}{2(1-\eta)m}\log\frac{1}{\delta}} 
\end{equation}
\end{lemma}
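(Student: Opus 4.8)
The plan is to split the claimed inequality into a concentration (generalization) step and a deterministic step. The concentration step replaces the empirical risk $\mathcal{R}_{\mathcal{S}\backslash\mathcal{S}_\eta}(\mathcal{A}(\mathcal{S}_\eta))$ by the population risk $\mathcal{R}_{\mathcal{D}_1}(\mathcal{A}(\mathcal{S}_\eta))$, and the deterministic step bounds that population risk by the data DB variability $\epsilon$. For the concentration step I would fix the subset $\mathcal{S}_\eta$ realizing the $(\epsilon,\eta)$-data DB variability, condition on it (hence on the classifier family $\mathcal{A}(\mathcal{S}_\eta)$), and treat the $(1-\eta)m$ examples of the complement $\mathcal{S}\backslash\mathcal{S}_\eta$ as an i.i.d. sample from $\mathcal{D}_1$, as posited in Assumption \ref{assumption:data db variabilty}. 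Each example $(\mathbf{x}_j,y_j)$ then contributes the bounded quantity $\mathbb{E}_{\boldsymbol{\theta}^\prime\sim\mathcal{A}(\mathcal{S}_\eta)}[\mathbb{I}(y_j\notin T(f_{\boldsymbol{\theta}^\prime},\mathbf{x}_j))]\in[0,1]$, whose common mean is exactly $\mathcal{R}_{\mathcal{D}_1}(\mathcal{A}(\mathcal{S}_\eta))$. Applying Hoeffding's inequality to the average of these $(1-\eta)m$ i.i.d. variables gives, with probability at least $1-\delta$,
\[
\mathcal{R}_{\mathcal{S}\backslash\mathcal{S}_\eta}(\mathcal{A}(\mathcal{S}_\eta)) \leq \mathcal{R}_{\mathcal{D}_1}(\mathcal{A}(\mathcal{S}_\eta)) + \sqrt{\frac{1}{2(1-\eta)m}\log\frac{1}{\delta}},
\]
which already supplies the second term on the right-hand side of the claim.

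For the deterministic step I would show $\mathcal{R}_{\mathcal{D}_1}(\mathcal{A}(\mathcal{S}_\eta)) \leq \epsilon$ by a pointwise bound that is then integrated. Fix $\mathbf{x}$ and, using Assumption \ref{assumption:null boundary} so that $T$ is almost everywhere a singleton, write $p_i(\mathbf{x}) = \mathbb{E}_{\boldsymbol{\theta}^\prime\sim\mathcal{A}(\mathcal{S}_\eta)}[\mathbb{I}(i\in T(f_{\boldsymbol{\theta}^\prime},\mathbf{x}))]$ and $q_i(\mathbf{x}) = \mathbb{E}_{\boldsymbol{\theta}\sim\mathcal{A}(\mathcal{S})}[\mathbb{I}(i\in T(f_{\boldsymbol{\theta}},\mathbf{x}))]$, both summing to one over $i\in[k]$. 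The correctness part of Assumption \ref{assumption:data db variabilty} gives $p_y(\mathbf{x}) = \max_i p_i(\mathbf{x})$ for $(\mathbf{x},y)\sim\mathcal{D}_1$, so the per-point Gibbs error equals $1 - \max_i p_i(\mathbf{x})$. Because $\boldsymbol{\theta}$ and $\boldsymbol{\theta}^\prime$ are independent, the per-point DB variability equals $1 - \sum_i p_i(\mathbf{x})q_i(\mathbf{x})$, and the elementary inequality $\sum_i p_i q_i \leq (\max_i p_i)\sum_i q_i = \max_i p_i$ shows that the per-point Gibbs error is no larger than the per-point variability. Integrating over $\mathcal{D}_1$ and invoking the variability inequality of Assumption \ref{assumption:data db variabilty}, whose right-hand side equals $\epsilon$, yields $\mathcal{R}_{\mathcal{D}_1}(\mathcal{A}(\mathcal{S}_\eta)) \leq \epsilon$. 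Chaining the two steps finishes the proof.

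The main obstacle I anticipate is the pointwise comparison in the deterministic step: establishing that the majority-vote Gibbs error $1-\max_i p_i$ is dominated by the cross-disagreement $1 - \sum_i p_i q_i$. This is where Assumption \ref{assumption:data db variabilty} (to convert the label-dependent risk into $1-\max_i p_i$) and the measure-zero boundary assumption (to legitimately treat $T$ as singleton-valued, so that the disagreement factorizes as $1-\sum_i p_i q_i$ under independence) must be used in tandem. A secondary care point is the independence required by Hoeffding: one must argue that, conditional on $\mathcal{S}_\eta$, the complement examples act as a fresh i.i.d. $\mathcal{D}_1$-sample independent of $\boldsymbol{\theta}^\prime\sim\mathcal{A}(\mathcal{S}_\eta)$, so that the conditional high-probability bound, holding for every realization of $\mathcal{S}_\eta$, upgrades to the stated unconditional statement.
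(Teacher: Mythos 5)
Your proposal is correct and follows essentially the same route as the paper's own proof: a Hoeffding concentration step over the $(1-\eta)m$ complement examples drawn from $\mathcal{D}_1$, combined with the deterministic bound $\mathcal{R}_{\mathcal{D}_1}(\mathcal{A}(\mathcal{S}_\eta)) \leq \epsilon$ obtained from the pointwise inequality $\sum_i p_i q_i \leq \max_i p_i$ together with both parts of Assumption \ref{assumption:data db variabilty}. The only differences are cosmetic (you run the two steps in the opposite order and are somewhat more explicit about conditioning on $\mathcal{S}_\eta$ and about where Assumption \ref{assumption:null boundary} is needed to treat $T$ as singleton-valued), so no further changes are needed.
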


\begin{figure}[t]
\centering
\subfigure{
\begin{minipage}[b]{0.22\textwidth}
    		\includegraphics[width=0.95\columnwidth]{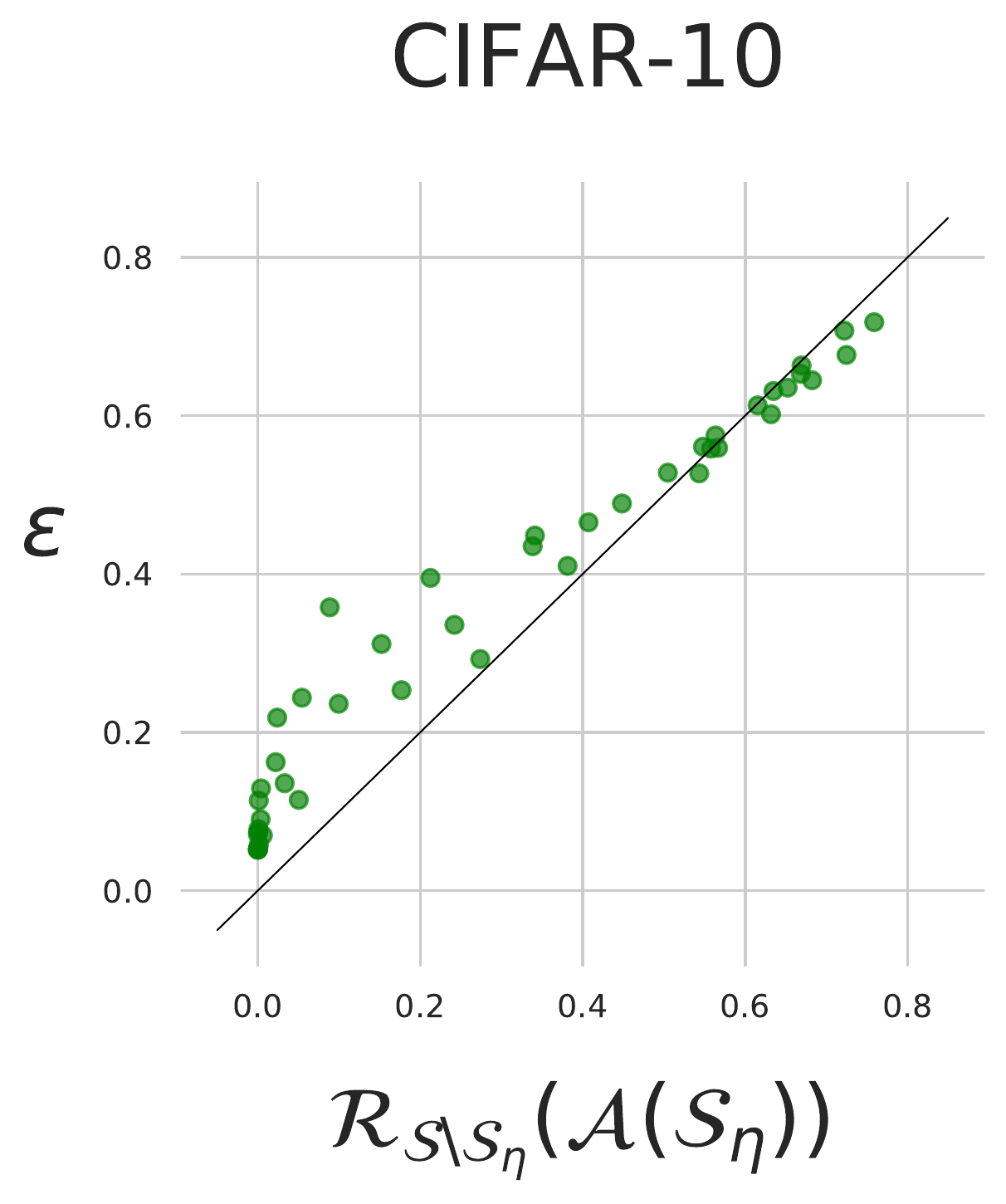}
    		\end{minipage}
		\label{figure:coreset_complement_acc_cifar10}
    	}
\subfigure{
\begin{minipage}[b]{0.22\textwidth}
    		\includegraphics[width=0.95\columnwidth]{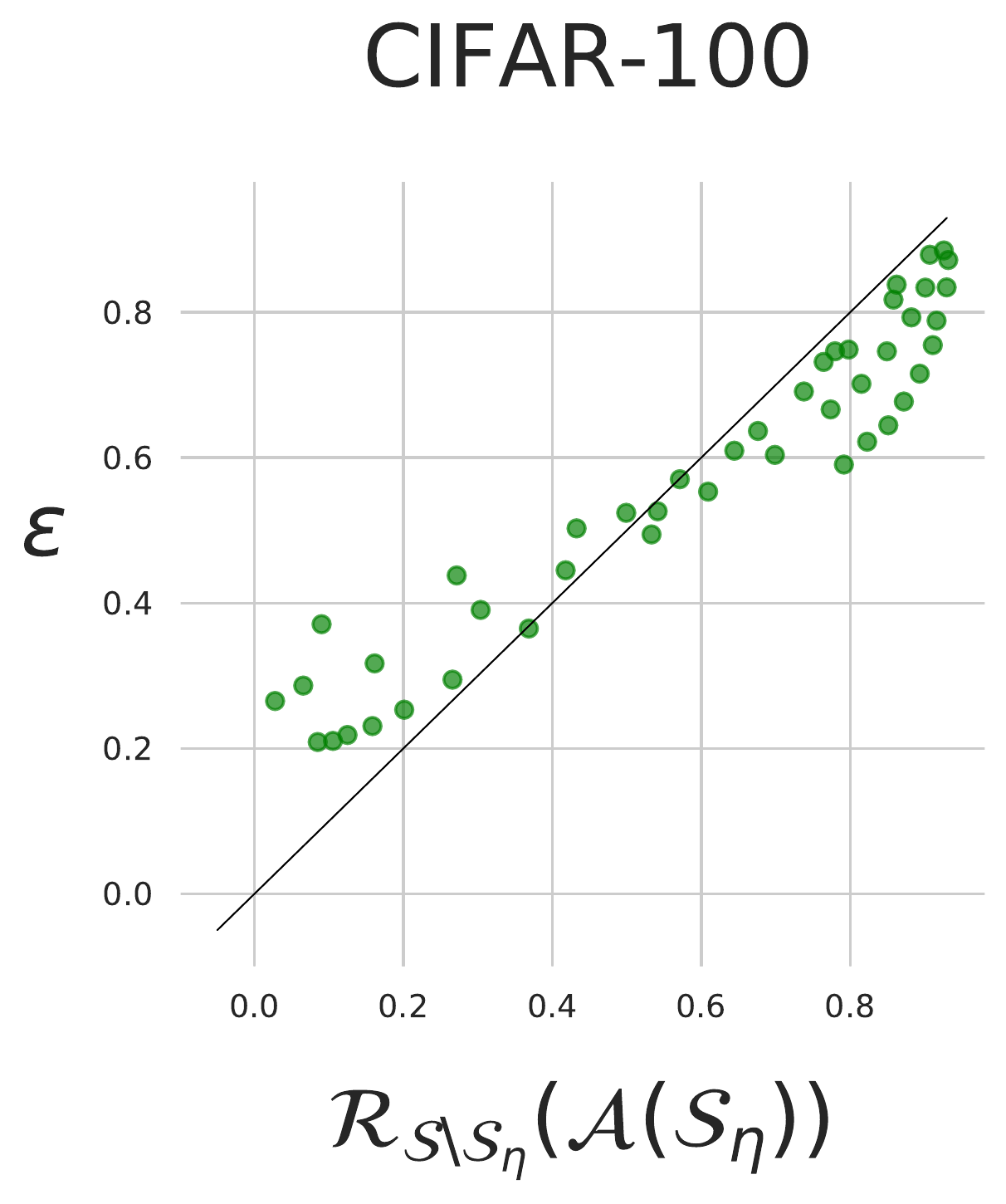}
    		\end{minipage}
		\label{figure:coreset_complement_acc_cifar100}   
    	}
\caption{Scatter of $\epsilon$ ($y$-axis) and $\mathcal{R}_{\mathcal{S}\backslash\mathcal{S}_\eta}(\mathcal{A}(\mathcal{S}_\eta))$ ($x$-axis) on CIFAR-10 and CIFAR-100.}
\label{figure:coreset_complement_acc}
\end{figure}

We also conduct experiments to show the correlation between $\mathcal{R}_{\mathcal{S}\backslash\mathcal{S}_\eta}(\mathcal{A}(\mathcal{S}_\eta))$ and $\epsilon$; see Figure \ref{figure:coreset_complement_acc}. From the plots we obtain an observation that $\mathcal{R}_{\mathcal{S}\backslash\mathcal{S}_\eta}(\mathcal{A}(\mathcal{S}_\eta))\leq \epsilon$ is stable when $\mathcal{R}_{\mathcal{S}\backslash\mathcal{S}_\eta}(\mathcal{A}(\mathcal{S}_\eta))$ is small (about less than $0.5$).

\begin{lemma}
\label{lemma:risk diff}
If the decision boundaries of $f_{\mathcal{A}(\mathcal{S})}$ possess a $(\epsilon, \eta)$-data DB variability, then we have
\begin{equation}
    \left| \mathcal{R}_\mathcal{D}(\mathcal{A}(\mathcal{S})) - \mathcal{R}_\mathcal{D}(\mathcal{A}(\mathcal{S}_\eta)) \right| \leq \epsilon. \nonumber
\end{equation}
\end{lemma}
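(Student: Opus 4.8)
The plan is to control the gap between the two expected risks by the expected disagreement of the two Gibbs classifiers, which is exactly $\epsilon$ by the definition of $(\epsilon,\eta)$-data DB variability. First I would exploit the fact that $\boldsymbol{\theta}\sim\mathcal{A}(\mathcal{S})$ and $\boldsymbol{\theta}^\prime\sim\mathcal{A}(\mathcal{S}_\eta)$ are drawn independently, so that inserting the spurious variable into an expectation of a term depending on only one of them leaves it unchanged. This lets me rewrite the difference of risks as a single joint expectation:
\[
\mathcal{R}_\mathcal{D}(\mathcal{A}(\mathcal{S})) - \mathcal{R}_\mathcal{D}(\mathcal{A}(\mathcal{S}_\eta)) = \mathbb{E}_{(\mathbf{x},y)\sim \mathcal{D}}\,\mathbb{E}_{\boldsymbol{\theta}\sim\mathcal{A}(\mathcal{S}),\,\boldsymbol{\theta}^\prime\sim\mathcal{A}(\mathcal{S}_\eta)}\!\left[\mathbb{I}\!\left(y\notin T(f_{\boldsymbol{\theta}},\mathbf{x})\right) - \mathbb{I}\!\left(y\notin T(f_{\boldsymbol{\theta}^\prime},\mathbf{x})\right)\right].
\]

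The crux is a pointwise bound relating the difference of $0$--$1$ losses to the disagreement indicator. I claim that for every fixed $(\mathbf{x},y)$, $\boldsymbol{\theta}$, and $\boldsymbol{\theta}^\prime$,
\[
\left|\mathbb{I}\!\left(y\notin T(f_{\boldsymbol{\theta}},\mathbf{x})\right) - \mathbb{I}\!\left(y\notin T(f_{\boldsymbol{\theta}^\prime},\mathbf{x})\right)\right| \le \mathbb{I}\!\left(T(f_{\boldsymbol{\theta}},\mathbf{x}) \neq T(f_{\boldsymbol{\theta}^\prime},\mathbf{x})\right).
\]
This follows from a two-case argument. If the predicted label sets coincide, then the events $\{y\notin T(f_{\boldsymbol{\theta}},\mathbf{x})\}$ and $\{y\notin T(f_{\boldsymbol{\theta}^\prime},\mathbf{x})\}$ are identical, so both sides vanish; if instead the label sets differ, the right-hand side equals $1$, which trivially dominates the left-hand side, since any difference of two indicators lies in $[-1,1]$.

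Finally I would take absolute values in the first display, move them inside the expectation via the triangle inequality for expectations (Jensen applied to $|\cdot|$), and invoke the pointwise bound to obtain
\[
\left|\mathcal{R}_\mathcal{D}(\mathcal{A}(\mathcal{S})) - \mathcal{R}_\mathcal{D}(\mathcal{A}(\mathcal{S}_\eta))\right| \le \mathbb{E}_{(\mathbf{x},y)\sim\mathcal{D}}\,\mathbb{E}_{\boldsymbol{\theta},\boldsymbol{\theta}^\prime}\!\left[\mathbb{I}\!\left(T(f_{\boldsymbol{\theta}},\mathbf{x}) \neq T(f_{\boldsymbol{\theta}^\prime},\mathbf{x})\right)\right].
\]
Taking $\mathcal{S}_\eta$ to be the minimizing subset in Definition~\ref{def:complexity of db} (the infimum is attained because $\mathcal{S}$ is finite, hence has finitely many $\eta$-subsets), the right-hand side equals $\epsilon$, which closes the argument.

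I expect the only genuinely delicate point to be the pointwise inequality, and in particular the boundary case where $T(f_{\boldsymbol{\theta}},\mathbf{x})$ fails to be a singleton. Since that inequality only ever needs the trivial upper bound $1$ whenever the two label sets disagree, ties on the decision boundary cause no difficulty, and no appeal to Assumption~\ref{assumption:null boundary} is required here; this is consistent with the lemma statement omitting that assumption.
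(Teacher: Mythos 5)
Your proposal is correct and follows essentially the same route as the paper's proof: rewrite the risk difference as a single joint expectation over $(\mathbf{x},y)$, $\boldsymbol{\theta}$, $\boldsymbol{\theta}^\prime$, move the absolute value inside by Jensen, and bound the pointwise difference of $0$--$1$ losses by the disagreement indicator $\mathbb{I}\left(T(f_{\boldsymbol{\theta}},\mathbf{x})\neq T(f_{\boldsymbol{\theta}^\prime},\mathbf{x})\right)$. Your explicit two-case justification of the pointwise bound and your remark that the infimum in Definition~\ref{def:complexity of db} is attained (since $\mathcal{S}$ has finitely many $\eta$-subsets) are small clarifications of steps the paper leaves implicit.
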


Lemma \ref{lemma:risk diff} shows that the difference between the expected risk of $\mathcal{A}(\mathcal{S})$ and $\mathcal{A}(\mathcal{S}_\eta)$ can be bounded by their difference in decision boundaries. Then, we continue to prove the generalization bound with data decision boundary variability.

\begin{theorem}[data DB variability-based upper bound on expected risk]
\label{thm:core}
If the decision boundaries of $f_{\mathcal{A}(\mathcal{S})}$ possess a $(\epsilon, \eta)$-data DB variability on the data generating distribution $\mathcal{D}$, and assume $\eta\leq 0.5$ and Assumption \ref{assumption:data db variabilty} holds, then, with the probability of at least $1-\delta$ over a sample of size $m$, we have
\begin{equation}
\label{eq:core}
    \mathcal{R}_\mathcal{D}(\mathcal{A}(\mathcal{S})) \leq \Omega + \sqrt{4\Omega\Delta} + 8\Delta + \epsilon,
\end{equation}
where
\begin{equation}
    \Omega=\epsilon + \sqrt{\frac{1}{2(1-\eta)m}\log\frac{1}{\delta}}, 
\end{equation}
\begin{equation}
    \Delta=\eta\log\frac{e}{\eta}+\frac{1}{m}\log\frac{2}{\delta}. 
\end{equation}
Moreover, for sufficient large $m$, we have
\begin{equation}
\label{eq: dataset variability complexity}
    \mathcal{R}_\mathcal{D}(\mathcal{A}(\mathcal{S})) \leq \mathcal{O}\left(\frac{1}{\sqrt{m}}+\epsilon+\eta\log\frac{1}{\eta}\right).
\end{equation}
\end{theorem}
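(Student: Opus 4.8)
The plan is to bound $\mathcal{R}_\mathcal{D}(\mathcal{A}(\mathcal{S}))$ by routing through the subset-trained classifier $\mathcal{A}(\mathcal{S}_\eta)$ and treating the held-out complement $\mathcal{S}\backslash\mathcal{S}_\eta$ as a validation set. Lemma~\ref{lemma:risk diff} already gives $\mathcal{R}_\mathcal{D}(\mathcal{A}(\mathcal{S}))\le \mathcal{R}_\mathcal{D}(\mathcal{A}(\mathcal{S}_\eta))+\epsilon$, so it suffices to control the population risk $\mathcal{R}_\mathcal{D}(\mathcal{A}(\mathcal{S}_\eta))$, while Lemma~\ref{lemma:complement set bound} controls its \emph{empirical} counterpart $\mathcal{R}_{\mathcal{S}\backslash\mathcal{S}_\eta}(\mathcal{A}(\mathcal{S}_\eta))\le\Omega$. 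The whole task is thus to pass from this empirical bound on the complement to a population bound, i.e.\ a generalization argument in which the complement $\mathcal{S}\backslash\mathcal{S}_\eta$ of size $(1-\eta)m$ plays the role of a test sample for $\mathcal{A}(\mathcal{S}_\eta)$.

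For a \emph{fixed} index set $I$ of size $\eta m$, the classifier $\mathcal{A}(\mathcal{S}_I)$ depends only on the points indexed by $I$, so the complement points are i.i.d.\ draws from $\mathcal{D}$ independent of $\mathcal{A}(\mathcal{S}_I)$; on this fixed partition they form a genuine i.i.d.\ test sample. I would apply a relative-deviation (multiplicative Chernoff) inequality of the form $\mathcal{R}_\mathcal{D}\le \hat{\mathcal{R}}+\sqrt{2\hat{\mathcal{R}}C}+4C$ with $C=\frac{\log(1/\delta')}{(1-\eta)m}$, rather than an additive Hoeffding bound. This choice is essential: when the empirical risk is small the complexity $C$ enters \emph{linearly} rather than as $\sqrt{C}$, which is precisely what yields the $\eta\log\frac1\eta$ rate instead of the weaker $\sqrt{\eta\log\frac1\eta}$ an additive bound would produce.

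The key obstacle is that the realized $\mathcal{S}_\eta$ is selected from $\mathcal{S}$ (via the infimum / coreset procedure), so it is \emph{not} independent of its complement and the single-partition argument does not apply directly. I would resolve this by a union bound of the relative-deviation inequality over all $\binom{m}{\eta m}$ candidate subsets $I$, taking $\delta'=(\delta/2)/\binom{m}{\eta m}$. Using $\binom{m}{\eta m}\le(e/\eta)^{\eta m}$ gives $\log(1/\delta')\le \eta m\log\frac e\eta+\log\frac2\delta$, and since $\eta\le\frac12$ implies $1-\eta\ge\frac12$ one obtains $C\le 2\Delta$. Substituting $C\le 2\Delta$ together with $\hat{\mathcal{R}}=\mathcal{R}_{\mathcal{S}\backslash\mathcal{S}_\eta}(\mathcal{A}(\mathcal{S}_\eta))\le\Omega$ into the (monotone) bound yields $\sqrt{2\hat{\mathcal{R}}C}\le\sqrt{4\Omega\Delta}$ and $4C\le 8\Delta$, hence $\mathcal{R}_\mathcal{D}(\mathcal{A}(\mathcal{S}_\eta))\le\Omega+\sqrt{4\Omega\Delta}+8\Delta$. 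Allocating $\delta/2$ to the complement-set lemma and $\delta/2$ to the subset union bound, combining them via a final union bound, and adding back $\epsilon$ from Lemma~\ref{lemma:risk diff} gives the stated high-probability inequality.

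For the asymptotic form I would simply collect orders: $\Omega=\epsilon+\mathcal{O}(1/\sqrt m)$ and, since $\eta\log\frac e\eta=\eta+\eta\log\frac1\eta$, $\Delta=\mathcal{O}(\eta\log\frac1\eta)$ for small $\eta$; the cross term $\sqrt{4\Omega\Delta}$ is dominated by $\Omega+\Delta$ by AM--GM, so the right-hand side collapses to $\mathcal{O}(\frac1{\sqrt m}+\epsilon+\eta\log\frac1\eta)$. The main difficulty throughout is the data-dependence of the coreset subset, handled by the subset union bound, in tandem with choosing the multiplicative rather than additive concentration so that the $\binom{m}{\eta m}$ complexity term contributes only $\eta\log\frac1\eta$ to the final rate.
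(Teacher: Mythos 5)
Your proposal is correct and follows essentially the same route as the paper: decompose via Lemma~\ref{lemma:risk diff}, bound $\mathcal{R}_{\mathcal{S}\backslash\mathcal{S}_\eta}(\mathcal{A}(\mathcal{S}_\eta))$ by $\Omega$ via Lemma~\ref{lemma:complement set bound}, and pass to the population risk of $\mathcal{A}(\mathcal{S}_\eta)$ by a union bound of the relative-deviation (multiplicative) inequality over all $\binom{m}{\eta m}$ subsets --- which is exactly the compression-style argument the paper imports as Lemmas~\ref{lemma:30.1} and~\ref{lemma:core lemma} (Lemma 30.1 and Theorem 30.2 of Shalev-Shwartz and Ben-David), with the same constants $C\le 2\Delta$, $\sqrt{4\Omega\Delta}$, and $8\Delta$. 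Your explicit $\delta/2$--$\delta/2$ allocation is in fact slightly cleaner than the paper's proof, which states the conclusion at confidence $1-2\delta$ and rescales only inside $\Delta$.
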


According to Assumption \ref{assumption:coverge}, when $m\rightarrow \infty$, $\eta\rightarrow 0$ and $\epsilon\rightarrow 0$, while according to Eq. \ref{eq: dataset variability complexity}, $\mathcal{R}_\mathcal{D}(\mathcal{A}(\mathcal{S}))\rightarrow 0$. Therefore, the generalization bound is asymptotically converged. Theorem \ref{thm:core} suggests that a smaller data DB variability, 
corresponds to a tighter upper bound on the expected risk, which theoretically verifies the relationship between the data DB variability and the generalization ability of neural networks. 

\section{Discussions}

This sections discusses how our findings would shed light on understanding other interesting phenomena.

\subsection{Algorithm DB variability and the entropy of decision boundaries}
\label{app:entropy}
If Assumption $\ref{assumption:null boundary}$ holds for all $\boldsymbol{\theta}\sim\mathbb{Q}$, $1-AV(f_\mathbb{Q},\mathcal{D})$ can be rewritten as
\begin{equation}
\mathbb{E}_{(\mathbf{x},y)\sim \mathcal{D}}\sum_{i=1}^k\mathbb{E}_{\boldsymbol{\theta}\sim \mathbb{Q}}^2\left[\mathbb{I}\left(T\left(f_{\boldsymbol{\theta}}, \mathbf{x}\right)=i\right)\right]. 
\end{equation}
The term $\sum_{i=1}^k\mathbb{E}_{\boldsymbol{\theta}\sim \mathbb{Q}}^2\left[\mathbb{I}(T\left(f_{\boldsymbol{\theta}}, \mathbf{x}\right)=i)\right]$ can be considered to measure the degree of prediction uncertainty for the given voxel $\mathbf{x}$ in the input space $\mathbb{R}^n$. If we leverage $-\log(\cdot)$ on the term $\sum_{i=1}^k\mathbb{E}_{\boldsymbol{\theta}\sim \mathbb{Q}}^2\left[\mathbb{I}(T\left(f_{\boldsymbol{\theta}}, \mathbf{x}\right)=i)\right]$, we have that
\begin{equation}
    -\log \sum_{i=1}^k\mathbb{E}_{\boldsymbol{\theta}\sim \mathbb{Q}}^2\left[\mathbb{I}(T\left(f_{\boldsymbol{\theta}}, \mathbf{x}\right)=i)\right],
\end{equation}
denotes the collision entropy of prediction made by the Gibbs classifier $f_\mathbb{Q}$ on $\mathbf{x}$. We can also replace the collision entropy with canonical Shannon entropy,
\begin{equation}
    -\sum_{i=1}^k \mathbb{E}_{\boldsymbol{\theta}\sim \mathbb{Q}}\left[\mathbb{I}(T\left(f_{\boldsymbol{\theta}}, \mathbf{x}\right)=i)\right]\log \mathbb{E}_{\boldsymbol{\theta}\sim \mathbb{Q}}\left[\mathbb{I}(T\left(f_{\boldsymbol{\theta}}, \mathbf{x}\right)=i)\right],
\end{equation}
in the future research. As such, the algorithm DB variability is closely related to the ``entropy of decision boundary'', and the uncanny generalization in neural networks might be further uncovered by investigating this low entropy of decision boundary.

\subsection{Data DB variability and the complexity of DBs}
\label{app:complexity of db}

According to the work by \citet{guan2020analysis}, a complex decision boundary has large curvatures and conjectured to indicate inferior generalization. Nevertheless, from the perspective of causality, we argue that the large curvatures or the non-linearity of DBs is {the result of hard classification tasks, not the cause}, and the primary factor in shaping a complex DB during the training procedure should be the significant non-linearity of the training data. If only the geometric properties of decision boundaries are analysed without investigating the data, the results might be incomplete and even misleading. Another obstacle for describing the complexity of DBs by its geometric properties is the huge dimensional input space, which makes the geometric properties of DBs hard to quantify and estimate. Therefore, defining the complexity of DBs based on its curvature is not rational and impractical.

Here, we consider the complexity of DBs from the perspective of the training set. During the training procedure, a small part of training examples, considered as ``support vectors'', play a more critical role in supervising the formation of decision boundaries and compelling the DB to be gradually more complicated. If the construction of decision boundaries relies on fewer ``support vectors'', the decision boundary should be simpler. In other words, if these ``support vectors'' are excluded from the training sample, the DB will be notably dissimilar when the network is retrained on the modified training set. Hence, the complexity of DBs can be also defined with the notion of the data DB variability: {\it with $(\epsilon,\eta)$-data decision boundary variability, the decision boundary of $f_{\mathcal{A}(\mathcal{S})}$ is said to possess a $(\epsilon, \eta)\textit{-complexity}$}.

By considering the data DB variability as the complexity of DB, 
many phenomenons {\it w.r.t.} generalization in deep learning can be easily understood: (1) difficult tasks generally have more complex decision boundaries, since their datum are more non-linear and contain more ``support vectors''; (2) in adversarial training, each data point is converted to a ``data ball'' with the radius of the adversarial perturbation and has more impact on forming the DBs. Hence, adversarial training contributes to a more complex decision boundary by enlarging the ``support vector set'', and thus causes the decline in generalization performance; (3) for data augmentation, generated images are also considered to obey the data generation distribution $\mathcal{D}$. Hence, data augmentation decreases the complexity of decision boundaries by greatly expanding the training set $\mathcal{S}$, while $|\mathcal{S}_\eta|$ has only a slight growth. 



\section{Experimental Implementation Details}

This section provides all the additional implementation details for our experiments.

\subsection{Model training}
\label{sec:model training}
We employ SGD to optimize all the models and the momentum factor is $0.9$. The weight decay factor is set to $5$e-$4$, and the learning rate is decayed by $0.2$ every $50$ epochs. Besides, basic data augmentation (crop and flip) \citep{Zagoruyko2016WRN} is adopted in both standard and adversarial training, and only the basic data augmentation is considered in our experiments and analysis. 

\textbf{Additional details in \ref{sec:real data}}
We train VGG-16, ResNet-18, Wide-ResNet-28 on CIFAR-10 and CIFAR-100. In the training procedure, the model is trained for $200$ epochs, in which the batch size is set to $128$, and the learning rate is initialized as $0.1$. There are three training strategies included in this experiment: standard training, non-data-augmentation training, and adversarial training. In the adversarial training, the radius of the adversarial perturbation is set as $10/255$ and $l_\infty$ distance is selected. The basic data augmentation (cropping and flipping) in the standard training and adversarial training is achieved by the following Pytorch code:
\begin{lstlisting}[language=Python]
transforms.RandomCrop(32, padding=4)
transforms.RandomHorizontalFlip()
\end{lstlisting}
The experiment is repeated for $10$ trials for each (dataset, architecture, training strategy) setting.

\textbf{Additional details in \ref{sec:training process}}
We repeatedly train $10$ ResNet-18 on CIFAR-10 and CIFAR-100, respectively, with different random seeds. In the training procedure, the model is trained for $200$ epochs, in which the batch size is set to $128$, and the learning rate is initialized as $0.1$ and $0.01$, respectively. Basic data augmentation is included during the training process.

\textbf{Additional details in \ref{sec:sample size}}
We randomly sample examples from the training set of CIFAR-10 and CIFAR-100 to form five datasets with different sizes of $[2000, 5000, 10000, 20000, 50000]$, respectively. $10$ ResNet-18 are trained for each dataset. In the training procedure, the model is trained for $200$ epochs, in which the batch size is set to $128$, and the learning rate is initialized as $0.1$. Basic data augmentation is included during the training process.

\textbf{Additional details in \ref{sec:label noise}}
We randomly change the labels of $20\%$ examples in the training set of CIFAR-10 and CIFAR-100. Then, $10$ ResNet-18 are optimize by SGD for $500$ epochs on the noise CIFAR-10 and CIFAR-100, respectively. the momentum factor is $0.9$, and the learning rate is $0.001$ and does not decay during the training process.

{

\textbf{Additional details in \ref{sec:model selection}} We conduct the model selection experiments with convolutional (CNN) and residual (ResNet) networks for CIFAR-10/100 from \citet{lotfi2022bayesian}. We use the same architectures:
\begin{itemize}
    \item The CNNs consist of up to 55 blocks of 3333 convolutions, followed by a ReLU activation function, and MaxPooling, except in the first layer. BatchNorm is replaced by the fixup initialization \citep{zhang2019fixup} as in \citet{immer2021scalable}. The width of the first channel varies from 22 to 3232 for both datasets. The last layer is a fully-connected layer to the class logit.
    \item ResNets of depths varying from 88 to 3232 are used for CIFAR-10 and from 2020 to 101101 for CIFAR-100. The width varies from 1616 to 4848 for CIFAR-10 and from 3232 to 6464 for CIFAR-100.
\end{itemize}

All models are trained for 100 epochs with initialised learning rate 0.01, and other settings are the same as them in \ref{sec:model training} including SGD optimizer, learning rate decay, and data augmentation. We used the Kronecker Laplace approximation to compute the averaging LML over 55 repeated trainings.
}

\textbf{Additional details in \ref{sec:consistency and complexity}}
We randomly sample examples from the training set of CIFAR-10 and CIFAR-100 to form five datasets with different sizes of [2000, 5000, 10000, 20000, 50000], respectively. For each dataset, we obtain 1010 $\eta$-subsets with different $\eta$ of [0.1, 0.2, 0.3, 0.4, 0.5, 0.6, 0.7, 0.8, 0.9, 1.0] via a coreset selection approach named {\it selection via proxy} \citep{coleman2020selection}. The related code can be downloaded from \url{https://github.com/stanford-futuredata/selection-via-proxy}. The ResNet-18 is repeatedly trained for 1010 trials to estimate the complexity of decision boundaries for each $\eta$-subset.

\section{Proofs}
The section collects detailed proofs of the results that are omitted in Section \ref{sec:theoretical evidence about algorithm variability} and \ref{sec:theoretical evidence about dataset variability}. To avoid technicalities, the measurability/integrability issues are ignored throughout this paper. Moreover, Fubini's theorem is assumed to be applicable for any integration {\it w.r.t.} multiple variables. In other words, the order of integrations is exchangeable.

\subsection{Proof of Theorem \ref{thm:lower bound}}
\label{app:proof of lower bound}
\begin{proof} If Assumption \ref{assumption:null boundary} holds for all $\boldsymbol{\theta}\sim \mathbb{Q}$, for any networks $f_{\boldsymbol{\theta}}$ and $f_{\boldsymbol{\theta}^\prime}$ we have
{\scriptsize
$$
    \mathbb{E}_{(\mathbf{x},y)\sim \mathcal{D}}\left[\mathbb{I}\left(y\in T\left(f_{\boldsymbol{\theta}}, \mathbf{x}\right)\right)\mathbb{I}\left(y\in T\left(f_{\boldsymbol{\theta}^\prime}, \mathbf{x}\right)\right) \mathbb{I}\left(T\left(f_{\boldsymbol{\theta}}, \mathbf{x}\right) \neq T\left(f_{\boldsymbol{\theta}^\prime}, \mathbf{x}\right)\right) \right] = 0
$$
}
Hence,
\begin{align}
    & AV(f_\mathbb{Q},\mathcal{D}) \nonumber\\
    &=\mathbb{E}_{(\mathbf{x},y)\sim \mathcal{D}}\mathbb{E}_{\boldsymbol{\theta},\boldsymbol{\theta}^\prime \sim \mathbb{Q}}\left[\mathbb{I}\left(T(f_{\boldsymbol{\theta}}, \mathbf{x}) \neq T(f_{\boldsymbol{\theta}^\prime}, \mathbf{x})\right)\right] \nonumber\\
    &=\mathbb{E}_{(\mathbf{x},y)\sim \mathcal{D}}\mathbb{E}_{\boldsymbol{\theta},\boldsymbol{\theta}^\prime \sim \mathbb{Q}}\left[\mathbb{I}\left(y\in T(f_{\boldsymbol{\theta}}, \mathbf{x})\right)\mathbb{I}\left(y\notin T(f_{\boldsymbol{\theta}^\prime}, \mathbf{x})\right)\right.\nonumber \\
    &\quad+\left. \mathbb{I}\left(y\notin T(f_{\boldsymbol{\theta}}, \mathbf{x})\right)\mathbb{I}\left(T(f_{\boldsymbol{\theta}}, \mathbf{x}) \neq T(f_{\boldsymbol{\theta}^\prime}, \mathbf{x})\right) \right] \nonumber\\
    &\leq \mathbb{E}_{(\mathbf{x},y)\sim \mathcal{D}}\mathbb{E}_{\boldsymbol{\theta},\boldsymbol{\theta}^\prime \sim \mathbb{Q}}\left[\mathbb{I}\left(y\in T(f_{\boldsymbol{\theta}}, \mathbf{x})\right)\mathbb{I}\left(y\notin T(f_{\boldsymbol{\theta}^\prime}, \mathbf{x})\right)\right] + \mathcal{R}_{\mathcal{D}}(\mathbb{Q})  \nonumber\\
    &=2\mathcal{R}_{\mathcal{D}}(\mathbb{Q})  - \mathbb{E}_{(\mathbf{x},y)\sim \mathcal{D}}\mathbb{E}_{\boldsymbol{\theta},\boldsymbol{\theta}^\prime \sim \mathbb{Q}}\left[\mathbb{I}\left(y\notin T(f_{\boldsymbol{\theta}}, \mathbf{x})\right)\mathbb{I}\left(y\notin T(f_{\boldsymbol{\theta}^\prime}, \mathbf{x})\right)\right] \nonumber\\
    &= 2\mathcal{R}_{\mathcal{D}}(\mathbb{Q}) - \mathbb{E}_{(\mathbf{x},y)\sim \mathcal{D}}\left[\mathbb{E}_{\boldsymbol{\theta} \sim \mathbb{Q}}^2\left[\mathbb{I}\left(y\notin T(f_{\boldsymbol{\theta}}, \mathbf{x})\right)\right]\right] \nonumber \\
    &\leq 2\mathcal{R}_{\mathcal{D}}(\mathbb{Q}) - \left[\mathbb{E}_{(\mathbf{x},y)\sim \mathcal{D}}\mathbb{E}_{\boldsymbol{\theta} \sim \mathbb{Q}}\left[\mathbb{I}\left(y\notin T(f_{\boldsymbol{\theta}}, \mathbf{x})\right)\right]\right]^2 \nonumber \\
    &\leq 2\mathcal{R}_{\mathcal{D}}(\mathbb{Q}) - \mathcal{R}_{\mathcal{D}}^2(\mathbb{Q})  \nonumber
\end{align}
Solving the inequality yields the desired bound and finishes the proof.
\end{proof}

\subsection{Proof of Theorem \ref{thm:lower bound for binary case}}
\begin{proof}
When the classification is binary, {\it i.e.}, $k=2$, with Assumption \ref{assumption:null boundary}, we have
\begin{align}
    & \mathbb{E}_{(\mathbf{x},y)\sim \mathcal{D}}\mathbb{E}_{\boldsymbol{\theta},\boldsymbol{\theta}^\prime \sim \mathbb{Q}}\left[\mathbb{I}\left(T\left(f_{\boldsymbol{\theta}}, \mathbf{x}\right) = T\left(f_{\boldsymbol{\theta}}, \mathbf{x}\right)\right)\right] \nonumber\\
    =& \mathbb{E}_{(\mathbf{x},y)\sim \mathcal{D}}\mathbb{E}_{\boldsymbol{\theta}\sim \mathbb{Q}}^2\left[\mathbb{I}\left(y\in T\left(f_{\boldsymbol{\theta}}, \mathbf{x}\right)\right)\right] \nonumber \nonumber\\
    &+ \mathbb{E}_{(\mathbf{x},y)\sim \mathcal{D}}\mathbb{E}_{\boldsymbol{\theta}\sim \mathbb{Q}}^2\left[\mathbb{I}\left(y\notin  T\left(f_{\boldsymbol{\theta}}, \mathbf{x}\right)\right)\right] \nonumber\\
    =& \mathbb{E}_{(\mathbf{x},y)\sim \mathcal{D}}\mathbb{E}_{\boldsymbol{\theta}\sim \mathbb{Q}}^2\left[\mathbb{I}\left(y\in T\left(f_{\boldsymbol{\theta}}, \mathbf{x}\right)\right)\right] \nonumber \nonumber\\
    &+ \mathbb{E}_{(\mathbf{x},y)\sim \mathcal{D}}\left[1- \mathbb{E}_{\boldsymbol{\theta}\sim \mathbb{Q}}\left[\mathbb{I}\left(y\in T\left(f_{\boldsymbol{\theta}}, \mathbf{x}\right)\right)\right]\right]^2 \nonumber\\
    =& 2\mathbb{E}_{(\mathbf{x},y)\sim \mathcal{D}}\mathbb{E}_{\boldsymbol{\theta}\sim \mathbb{Q}}^2\left[\mathbb{I}\left(y\in T\left(f_{\boldsymbol{\theta}}, \mathbf{x}\right)\right)\right] + 2\mathcal{R}_\mathcal{D}(\mathbb{Q}) - 1 \nonumber
\end{align}

Plugging in $\operatorname{Var}_{(\mathbf{x},y)\sim \mathcal{D}}\left[\mathbb{E}_{\boldsymbol{\theta}\sim \mathbb{Q}}\left[\mathbb{I}\left(y\in T\left(f_{\boldsymbol{\theta}}, \mathbf{x}\right)\right)\right]\right] = \mathbb{E}_{(\mathbf{x},y)\sim \mathcal{D}}\mathbb{E}_{\boldsymbol{\theta}\sim \mathbb{Q}}^2\left[\mathbb{I}\left(y\in T\left(f_{\boldsymbol{\theta}}, \mathbf{x}\right)\right)\right] - (1-\mathcal{R}_{\mathcal{D}}(\mathbb{Q}))^2$ yields
\begin{align}
    &\mathbb{E}_{(\mathbf{x},y)\sim \mathcal{D}}\mathbb{E}_{\boldsymbol{\theta},\boldsymbol{\theta}^\prime \sim \mathbb{Q}}\left[\mathbb{I}\left(T\left(f_{\boldsymbol{\theta}}, \mathbf{x}\right) = T\left(f_{\boldsymbol{\theta}}, \mathbf{x}\right)\right)\right] \nonumber\\
    =& 2\operatorname{Var}_{(\mathbf{x},y)\sim \mathcal{D}}\left[\mathbb{E}_{\boldsymbol{\theta}\sim \mathbb{Q}}\left[\mathbb{I}\left(y\in T\left(f_{\boldsymbol{\theta}}, \mathbf{x}\right)\right)\right]\right] \nonumber \\
    &+ \mathcal{R}_\mathcal{D}^2(\mathbb{Q}) + (1- \mathcal{R}_\mathcal{D}(\mathbb{Q}))^2 \nonumber\\
    \geq & \mathcal{R}_\mathcal{D}^2(\mathbb{Q}) + (1- \mathcal{R}_\mathcal{D}(\mathbb{Q}))^2 \nonumber
\end{align}
Plugging in $\mathbb{E}_{(\mathbf{x},y)\sim \mathcal{D}}\mathbb{E}_{\boldsymbol{\theta},\boldsymbol{\theta}^\prime \sim \mathbb{Q}}\left[\mathbb{I}\left(T(f_{\boldsymbol{\theta}}, \mathbf{x}) = T(f_{\boldsymbol{\theta}^\prime}, \mathbf{x})\right)\right] = 1- AV(f_\mathbb{Q}, \mathcal{D})$ and solving the inequality yield the desired inequality and finishes the proof of Theorem \ref{thm:lower bound for binary case}. 
\end{proof}

\subsection{Proof of Lemma \ref{lemma:complement set bound}}
\label{app:complement set bound}
\begin{proof}
According to Assumption \ref{assumption:data db variabilty} that for all $(\mathbf{x},y)\sim \mathcal{D}_1$, we have
\begin{small}
$$\mathbb{E}_{\boldsymbol{\theta}\sim \mathcal{A}(\mathcal{S}_\eta)}[\mathbb{I}(y\in T(f_{\boldsymbol{\theta}},\mathbf{x}))]=\max_{i\in [k]}\mathbb{E}_{\boldsymbol{\theta}\sim \mathcal{A}(\mathcal{S}_\eta)}[\mathbb{I}(i\in T(f_{\boldsymbol{\theta}},\mathbf{x}))],$$
\end{small}
then 
\begin{small}
\begin{align}
    &\mathcal{R}_{\mathcal{D}_1}(\mathcal{A}(\mathcal{S}_\eta)) \nonumber\\
    = &1 -  \mathbb{E}_{(\mathbf{x},y)\sim \mathcal{D}_1}\mathbb{E}_{\boldsymbol{\theta}\sim \mathcal{A}(\mathcal{S}_\eta)} \left[\mathbb{I}\left(y =  T\left(f_{\boldsymbol{\theta}}, \mathbf{x}\right)\right)\right] \nonumber\\
    =& 1 - \mathbb{E}_{\mathcal{D}_1}\sum_{i=1}^k\mathbb{E}_{\mathcal{A}(\mathcal{S})} \left[\mathbb{I}\left(i =  T\left(f_{\boldsymbol{\theta}}, \mathbf{x}\right)\right)\right] \mathbb{E}_{\mathcal{A}(\mathcal{S}_\eta)} \left[\mathbb{I}\left(y =  T\left(f_{\boldsymbol{\theta}}, \mathbf{x}\right)\right)\right] \nonumber\\
    \leq & 1 - \mathbb{E}_{\mathcal{D}_1}\sum_{i=1}^k\mathbb{E}_{\mathcal{A}(\mathcal{S})} \left[\mathbb{I}\left(i =  T\left(f_{\boldsymbol{\theta}}, \mathbf{x}\right)\right)\right] \mathbb{E}_{ \mathcal{A}(\mathcal{S}_\eta)} \left[\mathbb{I}\left(i =  T\left(f_{\boldsymbol{\theta}}, \mathbf{x}\right)\right)\right] \nonumber\\
    =&\mathbb{E}_{\mathcal{D}_1} \mathbb{E}_{\boldsymbol{\theta}\sim \mathcal{A}(\mathcal{S}), \boldsymbol{\theta}^\prime \sim \mathcal{A}(\mathcal{S}_\eta)} \left[\mathbb{I}\left(T(f_{\boldsymbol{\theta}}, \mathbf{x}) \neq T(f_{\boldsymbol{\theta}^\prime}, \mathbf{x})\right)\right] \nonumber\\
    \leq& \mathbb{E}_{\mathcal{D}} \mathbb{E}_{\boldsymbol{\theta}\sim \mathcal{A}(\mathcal{S}), \boldsymbol{\theta}^\prime \sim \mathcal{A}(\mathcal{S}_\eta)} \left[\mathbb{I}\left(T(f_{\boldsymbol{\theta}}, \mathbf{x}) \neq T(f_{\boldsymbol{\theta}^\prime}, \mathbf{x})\right)\right] \nonumber\\
    =&\epsilon. \nonumber
\end{align}
\end{small}
Because the examples in $\mathcal{S}\backslash\mathcal{S}_\eta$ are drawn from $\mathcal{D}_1$, by applying Hoeffding's Inequality, we have
\begin{equation}
\label{eq:hoe}
    \text{Pr}\left[ \mathcal{R}_{\mathcal{S}\backslash\mathcal{S}_\eta} - \mathcal{R}_{\mathcal{D}_1} \geq t \right] \leq \exp(-2(1-\eta)mt^2).
\end{equation}
Plug in $\delta=\exp(-2(1-\eta)mt^2)$ into Eq. \ref{eq:hoe}, thus, with the probability of at least $1-\delta$, we have
\begin{equation}
    \mathcal{R}_{\mathcal{S}\backslash\mathcal{S}_\eta}(\mathcal{A}(\mathcal{S}_\eta)) \leq \epsilon + \sqrt{\frac{1}{2(1-\eta)m}\log\frac{1}{\delta}}.  \nonumber
\end{equation}
\end{proof}

\subsection{Proof of Lemma \ref{lemma:risk diff}}
\label{app:proof lemma1}
\begin{proof}
From the definition of data DB variability, there exists a $\eta$-subset $\mathcal{S}_\eta$ s.t.
\begin{equation}
    \mathbb{E}_{(\mathbf{x},y)\sim \mathcal{D}} \mathbb{E}_{\boldsymbol{\theta}\sim \mathcal{A}(\mathcal{S}), \boldsymbol{\theta}^\prime \sim \mathcal{A}(\mathcal{S}_\eta)} \left[\mathbb{I}\left(T(f_{\boldsymbol{\theta}}, \mathbf{x}) \neq T(f_{\boldsymbol{\theta}^\prime}, \mathbf{x})\right)\right] = \epsilon. \nonumber
\end{equation}

Recall the $\text{LHS}=\left| \mathcal{R}_\mathcal{D}(\mathcal{A}(\mathcal{S})) - \mathcal{R}_\mathcal{D}(\mathcal{A}(\mathcal{S}_\eta)) \right|$ and denote $\mathbb{E}_{(\mathbf{x},y)\sim \mathcal{D}} \mathbb{E}_{\boldsymbol{\theta}\sim \mathcal{A}(\mathcal{S}), \boldsymbol{\theta}^\prime \sim \mathcal{A}(\mathcal{S}_\eta)}$ as $\mathbb{E}_{\mathcal{D}, \mathcal{A}(\mathcal{S}), \mathcal{A}(\mathcal{S}_\eta)}$ for simplicity, then
\begin{align}
    & \text{LHS}  \nonumber\\
    &=\left| \mathbb{E}_{(\mathbf{x}, y) \sim \mathcal{D}} \mathbb{E}_{\boldsymbol{\theta} \sim \mathcal{A}(\mathcal{S})} \left[\mathbb{I}\left(y\notin T(f_{\boldsymbol{\theta}}, \mathbf{x}) \right)\right] \right. \nonumber \\
    &\quad - \left.\mathbb{E}_{(\mathbf{x}, y) \sim \mathcal{D}} \mathbb{E}_{\boldsymbol{\theta} \sim \mathcal{A}(\mathcal{S}_\eta)} \left[\mathbb{I}\left(y\notin T(f_{\boldsymbol{\theta}}, \mathbf{x})\right)\right] \right| \nonumber\\
    &= \left| \mathbb{E}_{\mathcal{D}, \mathcal{A}(\mathcal{S}), \mathcal{A}(\mathcal{S}_\eta)}\left[\mathbb{I}\left(y\notin T(f_{\boldsymbol{\theta}}, \mathbf{x})\right) - \mathbb{I}\left(y\notin T(f_{\boldsymbol{\theta}^\prime}, \mathbf{x})\right)  \right] \right| \nonumber\\
    &\leq \mathbb{E}_{\mathcal{D}, \mathcal{A}(\mathcal{S}), \mathcal{A}(\mathcal{S}_\eta)}\left[\left|\mathbb{I}\left(y\notin T(f_{\boldsymbol{\theta}}, \mathbf{x})\right) - \mathbb{I}\left(y\notin T(f_{\boldsymbol{\theta}^\prime}, \mathbf{x})\right)\right|  \right] \nonumber\\
    &\leq \mathbb{E}_{\mathcal{D}, \mathcal{A}(\mathcal{S}), \mathcal{A}(\mathcal{S}_\eta)} \left[\mathbb{I}\left(T(f_{\boldsymbol{\theta}}, \mathbf{x}) \neq T(f_{\boldsymbol{\theta}^\prime}, \mathbf{x})\right)\right] = \epsilon \nonumber
\end{align}
The proof is completed.
\end{proof}

\subsection{Proof of Theorem \ref{thm:core}}
\label{app:proof core}
We first introduce Lemma \ref{lemma:30.1} and Lemma \ref{lemma:core lemma} as below.
\begin{lemma}[Lemma 30.1 in \citep{shalev2014understanding}]
\label{lemma:30.1}
Assume $T$ and $V$ are two datasets independently sampled from the data generating distribution $\mathcal{D}$, then, with the probability of at least $1-\delta$, we have
\begin{small}
\begin{equation}
    \mathcal{R}_\mathcal{D}(\mathcal{A}(T)) \leq \mathcal{R}_V(\mathcal{A}(T)) + \sqrt{\frac{2 \mathcal{R}_V(\mathcal{A}(T)) \log (1 / \delta)}{|V|}}+\frac{4 \log (1 / \delta)}{|V|}. \nonumber
\end{equation}
\end{small}

\end{lemma}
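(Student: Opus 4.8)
The statement is a relative (multiplicative) deviation bound, so the plan is to reduce it to a one-sided lower-tail concentration inequality for the held-out risk on $V$ and then invert the resulting implicit inequality. First I would condition on the training set $T$. Because $V$ is drawn independently of $T$, the output $\mathcal{A}(T)$ (equivalently the posterior $\mathbb{Q}=\mathcal{A}(T)$) is frozen once $T$ is fixed, so the per-example losses $Z_i$ of $\mathcal{A}(T)$ on the examples $(\mathbf{x}_i,y_i)\in V$ become i.i.d. random variables taking values in $[0,1]$ with common mean $p:=\mathcal{R}_\mathcal{D}(\mathcal{A}(T))$, and their average is exactly $\mathcal{R}_V(\mathcal{A}(T))$. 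This reduces the problem to bounding the true mean $p$ of an i.i.d. sample of size $|V|$ in terms of its empirical mean.

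Next I would apply the multiplicative Chernoff bound to the lower tail of $\tfrac{1}{|V|}\sum_i Z_i$. For $\gamma\in(0,1)$ this gives
\[
\Pr\!\left[\mathcal{R}_V(\mathcal{A}(T)) \le (1-\gamma)p\right] \le \exp\!\left(-\tfrac{1}{2}|V|\,p\,\gamma^2\right).
\]
Setting the right-hand side equal to $\delta$ fixes $\gamma=\sqrt{2\log(1/\delta)/(|V|p)}$, so that with probability at least $1-\delta$,
\[
\mathcal{R}_V(\mathcal{A}(T)) > p - \sqrt{\frac{2\,p\,\log(1/\delta)}{|V|}}.
\]
This is the key inequality; note it is \emph{implicit}, since $p$ appears on both sides.

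The final step is purely algebraic: solve the implicit inequality for $p$. Writing $b=\sqrt{2\log(1/\delta)/|V|}$ and $\hat p=\mathcal{R}_V(\mathcal{A}(T))$, the inequality reads $p-b\sqrt{p}-\hat p<0$, which is quadratic in $\sqrt{p}$; taking the admissible (positive) root yields $\sqrt{p}<\tfrac{1}{2}\bigl(b+\sqrt{b^2+4\hat p}\bigr)$. Squaring and bounding $\sqrt{b^2+4\hat p}\le b+2\sqrt{\hat p}$ gives $p<\hat p + b\sqrt{\hat p}+b^2$, i.e.
\[
\mathcal{R}_\mathcal{D}(\mathcal{A}(T)) \le \mathcal{R}_V(\mathcal{A}(T)) + \sqrt{\frac{2\,\mathcal{R}_V(\mathcal{A}(T))\,\log(1/\delta)}{|V|}} + \frac{2\log(1/\delta)}{|V|},
\]
and since $2\log(1/\delta)/|V|\le 4\log(1/\delta)/|V|$, the claimed bound with constant $4$ follows with room to spare.

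The main obstacle I anticipate is not the inversion but justifying the Chernoff step for the Gibbs classifier: the $Z_i$ are $[0,1]$-valued expectations over $\boldsymbol{\theta}\sim\mathcal{A}(T)$ rather than genuine $\{0,1\}$ Bernoulli variables, so the elementary multiplicative Chernoff bound does not apply verbatim. I would resolve this by invoking the variance-sensitive Bernstein/Bennett extension for bounded random variables, which preserves the crucial $\sqrt{p(\cdot)}$ scaling and hence the $\sqrt{\mathcal{R}_V(\mathcal{A}(T))}$ factor; alternatively, since this is Lemma~30.1 of \citet{shalev2014understanding}, I would simply cite it, as the present work uses it as a black box with $T=\mathcal{S}_\eta$ and $V=\mathcal{S}\backslash\mathcal{S}_\eta$.
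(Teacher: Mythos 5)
Your proof is correct, but note that the paper never proves this statement at all: it is imported verbatim as Lemma~30.1 of \citet{shalev2014understanding} and used as a black box (with $T=\mathcal{S}_\eta$ and $V=\mathcal{S}\backslash\mathcal{S}_\eta$) inside the proof of Theorem~\ref{thm:core}, exactly the fallback you mention in your last sentence. What you have reconstructed is essentially the standard argument underlying the cited result: condition on $T$, treat the per-example Gibbs losses on $V$ as i.i.d.\ $[0,1]$-valued variables with mean $p=\mathcal{R}_\mathcal{D}(\mathcal{A}(T))$, apply a relative-deviation lower-tail bound, and invert the implicit inequality in $\sqrt{p}$. Two refinements. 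First, the obstacle you flag at the end is not actually an obstacle: for $Z\in[0,1]$ with mean $p$, convexity of $z\mapsto e^{-\lambda z}$ gives $\mathbb{E}\left[e^{-\lambda Z}\right]\le 1-p\left(1-e^{-\lambda}\right)$, which is the Bernoulli$(p)$ moment generating function, so the multiplicative Chernoff lower tail applies verbatim to your $Z_i$ and no Bernstein/Bennett substitute is needed. Second, in the inversion the order of operations matters: squaring $\sqrt{p}<\frac{1}{2}\bigl(b+\sqrt{b^2+4\hat p}\bigr)$ \emph{first} and then using $\sqrt{b^2+4\hat p}\le b+2\sqrt{\hat p}$ yields $p<\hat p+b\sqrt{\hat p}+b^2$ as you state, whose middle term $\sqrt{2\hat p\log(1/\delta)/|V|}$ matches the lemma and whose last term $b^2=2\log(1/\delta)/|V|$ is dominated by the stated $4\log(1/\delta)/|V|$; bounding first and then squaring would instead give $\hat p+2b\sqrt{\hat p}+b^2$, whose middle term $\sqrt{8\hat p\log(1/\delta)/|V|}$ overshoots the claimed constant. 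You should also dispose of the regime where your $\gamma=\sqrt{2\log(1/\delta)/(|V|p)}\ge 1$, i.e.\ $p\le b^2$: there the conclusion $p\le\hat p+b\sqrt{\hat p}+b^2$ holds deterministically, so no probabilistic argument is needed. With these points made explicit, your derivation is a complete and correct proof of the lemma, in fact slightly sharper than the stated bound.
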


\begin{lemma}[Theorem 30.2 in \citep{shalev2014understanding}]
\label{lemma:core lemma}
Let $\mathcal{S}_\eta$ be a $\eta$-subset of the dataset $\mathcal{S}$, which is sampled from the data generation distribution $\mathcal{D}$ and the sample size $|\mathcal{S}|=m$. Let $\mathcal{S}\backslash\mathcal{S}_\eta=\mathcal{S}-\mathcal{S}_\eta$ and assume $\eta\leq 0.5$. Then, with the probability of at least $1-\delta$ over a sample of size $m$, we have
\begin{equation}
    \mathcal{R}_\mathcal{D}(\mathcal{A}(\mathcal{S}_\eta)) \leq \mathcal{R}_{\mathcal{S}\backslash\mathcal{S}_\eta}(\mathcal{A}(\mathcal{S}_\eta)) + \sqrt{4\mathcal{R}_{\mathcal{S}\backslash\mathcal{S}_\eta}(\mathcal{A}(\mathcal{S}_\eta))\Delta} + 8\Delta, \nonumber
\end{equation}
where
\begin{equation}
    \Delta=\eta\log\frac{e}{\eta}+\frac{1}{m}\log\frac{1}{\delta}. \nonumber
\end{equation}

\end{lemma}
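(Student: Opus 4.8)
The plan is to chain the three preceding lemmas and close with a union bound. First I would invoke the definition of $(\epsilon,\eta)$-data DB variability to fix an $\eta$-subset $\mathcal{S}_\eta\subset\mathcal{S}$ that attains (or approaches) the infimum $\epsilon$; this is the ``compression set'' to which the compression-style bound of Lemma \ref{lemma:core lemma} will be applied, with the complement $\mathcal{S}\backslash\mathcal{S}_\eta$ playing the role of a held-out validation set. The target quantity $\mathcal{R}_\mathcal{D}(\mathcal{A}(\mathcal{S}))$ will be reached from $\mathcal{R}_\mathcal{D}(\mathcal{A}(\mathcal{S}_\eta))$ through Lemma \ref{lemma:risk diff}, so the core of the argument is a high-probability upper bound on $\mathcal{R}_\mathcal{D}(\mathcal{A}(\mathcal{S}_\eta))$.

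Concretely, I would first apply Lemma \ref{lemma:complement set bound} (which uses Assumption \ref{assumption:data db variabilty}) to obtain $\mathcal{R}_{\mathcal{S}\backslash\mathcal{S}_\eta}(\mathcal{A}(\mathcal{S}_\eta))\le\Omega$, with $\Omega$ as in the statement. Next I would apply Lemma \ref{lemma:core lemma}, at the complementary confidence level, which yields $\mathcal{R}_\mathcal{D}(\mathcal{A}(\mathcal{S}_\eta))\le r+\sqrt{4r\Delta}+8\Delta$ with $r=\mathcal{R}_{\mathcal{S}\backslash\mathcal{S}_\eta}(\mathcal{A}(\mathcal{S}_\eta))$ and $\Delta=\eta\log\frac{e}{\eta}+\frac1m\log\frac2\delta$ (the $\log\frac2\delta$ arising precisely from splitting the confidence budget so the two lemma invocations together fail with probability at most $\delta$). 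Since the map $r\mapsto r+\sqrt{4r\Delta}+8\Delta$ is nondecreasing on $r\ge 0$, I can substitute the deterministic inequality $r\le\Omega$ to get $\mathcal{R}_\mathcal{D}(\mathcal{A}(\mathcal{S}_\eta))\le\Omega+\sqrt{4\Omega\Delta}+8\Delta$ on the intersection of the two good events.

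Combining this with Lemma \ref{lemma:risk diff}, namely $\mathcal{R}_\mathcal{D}(\mathcal{A}(\mathcal{S}))\le\mathcal{R}_\mathcal{D}(\mathcal{A}(\mathcal{S}_\eta))+\epsilon$, produces the claimed bound \eqref{eq:core}, valid with probability at least $1-\delta$ after the union bound. For the asymptotic form I would observe that $\Omega=\epsilon+\mathcal{O}(m^{-1/2})$ and $\Delta=\eta\log\frac1\eta+\eta+\mathcal{O}(m^{-1})=\mathcal{O}\!\left(\eta\log\frac1\eta\right)$ for small $\eta$, while the cross term satisfies $\sqrt{4\Omega\Delta}=2\sqrt{\Omega\Delta}\le\Omega+\Delta$ by AM--GM; hence every term on the right-hand side is absorbed into $\mathcal{O}\!\left(\frac{1}{\sqrt m}+\epsilon+\eta\log\frac1\eta\right)$.

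The main obstacle I anticipate is the probabilistic bookkeeping rather than any single estimate: I must track which event each lemma controls and how the confidence is split, so that the combined failure probability is exactly $\delta$ and the constants inside $\Omega$ and $\Delta$ are reconciled. A secondary subtlety is justifying the monotone substitution of $\Omega$ into Lemma \ref{lemma:core lemma}: this is legitimate only because its right-hand side is increasing in the held-out risk, so the bound $\mathcal{R}_{\mathcal{S}\backslash\mathcal{S}_\eta}(\mathcal{A}(\mathcal{S}_\eta))\le\Omega$ may be inserted without enlarging the failure event. I would also keep an eye on the requirement $\eta\le 0.5$, which is inherited directly from Lemma \ref{lemma:core lemma} and guarantees that the compression-set argument remains valid.
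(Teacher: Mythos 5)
Your proposal does not prove the statement at hand. The statement is Lemma \ref{lemma:core lemma} itself --- the sample-compression bound (Theorem 30.2 in Shalev-Shwartz and Ben-David) that controls $\mathcal{R}_\mathcal{D}(\mathcal{A}(\mathcal{S}_\eta))$ by the empirical risk on the held-out complement $\mathcal{S}\backslash\mathcal{S}_\eta$ --- whereas what you sketch is the paper's proof of Theorem \ref{thm:core}, which \emph{consumes} Lemma \ref{lemma:core lemma} as a black box. Your second paragraph explicitly ``applies Lemma \ref{lemma:core lemma}'' to obtain $\mathcal{R}_\mathcal{D}(\mathcal{A}(\mathcal{S}_\eta))\le r+\sqrt{4r\Delta}+8\Delta$; for the present statement this is circular, since that inequality is exactly what must be established. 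Two further symptoms confirm the mismatch: your $\Delta$ carries $\frac{1}{m}\log\frac{2}{\delta}$ (the confidence split of Theorem \ref{thm:core}) rather than the lemma's $\frac{1}{m}\log\frac{1}{\delta}$, and you describe the constraint $\eta\le 0.5$ as ``inherited directly from Lemma \ref{lemma:core lemma},'' i.e.\ as an assumption rather than something your argument uses. Lemmas \ref{lemma:complement set bound} and \ref{lemma:risk diff}, Assumption \ref{assumption:data db variabilty}, and the choice of a near-optimal $\eta$-subset all play no role in the lemma's proof.

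The missing content is a union bound over all candidate compression sets, combined with a relative-error (multiplicative) concentration bound. For each \emph{fixed} subset $\mathcal{S}_\eta\subseteq\mathcal{S}$ of size $\eta m$, the hypothesis $\mathcal{A}(\mathcal{S}_\eta)$ is statistically independent of the $(1-\eta)m$ examples in $\mathcal{S}\backslash\mathcal{S}_\eta$, so Lemma \ref{lemma:30.1} applies with $T=\mathcal{S}_\eta$ and $V=\mathcal{S}\backslash\mathcal{S}_\eta$, giving, with failure probability $\delta$,
\begin{equation*}
\mathcal{R}_\mathcal{D}(\mathcal{A}(\mathcal{S}_\eta)) \leq \mathcal{R}_{\mathcal{S}\backslash\mathcal{S}_\eta}(\mathcal{A}(\mathcal{S}_\eta)) + \sqrt{\frac{2\,\mathcal{R}_{\mathcal{S}\backslash\mathcal{S}_\eta}(\mathcal{A}(\mathcal{S}_\eta))\log(1/\delta)}{|\mathcal{S}\backslash\mathcal{S}_\eta|}}+\frac{4\log(1/\delta)}{|\mathcal{S}\backslash\mathcal{S}_\eta|}.
\end{equation*}
Because the $\eta$-subset in Definition \ref{def:complexity of db} is data-dependent, one must then union-bound over all $\binom{m}{\eta m}$ subsets, so the total failure probability is $\binom{m}{\eta m}\delta \le \left(\frac{e}{\eta}\right)^{\eta m}\delta$. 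Reparameterizing $\delta^\prime=\left(\frac{e}{\eta}\right)^{\eta m}\delta$, i.e.\ $\log\frac{1}{\delta}=\eta m\log\frac{e}{\eta}+\log\frac{1}{\delta^\prime}$, and using $\eta\le 0.5$ to get $|\mathcal{S}\backslash\mathcal{S}_\eta|=(1-\eta)m\ge \frac{m}{2}$, one obtains $\frac{\log(1/\delta)}{|\mathcal{S}\backslash\mathcal{S}_\eta|}\le 2\Delta$ with $\Delta=\eta\log\frac{e}{\eta}+\frac{1}{m}\log\frac{1}{\delta^\prime}$; this is precisely where the constants $4$ and $8$ in the lemma come from, and it is the step your proposal never performs. (Your AM--GM remark and the monotonicity observation are fine but belong to the downstream Theorem \ref{thm:core}, whose proof your sketch otherwise reproduces correctly.)
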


\begin{proof}[Proof of Lemma \ref{lemma:core lemma}]
\begin{align}
    \text{Pr}&\left[\exists \mathcal{S}_\eta \subseteq \mathcal{S} \text{ s.t. } \mathcal{R}_\mathcal{D}(\mathcal{A}(\mathcal{S}_\eta)) \leq \mathcal{R}_{\mathcal{S}\backslash\mathcal{S}_\eta}(\mathcal{A}(\mathcal{S}_\eta)) \right. \nonumber\\
    & \qquad \left. + \sqrt{\frac{2 \mathcal{R}_{\mathcal{S}\backslash\mathcal{S}_\eta}(\mathcal{A}(\mathcal{S}_\eta)) \log (1 / \delta)}{|\mathcal{S}\backslash\mathcal{S}_\eta|}}+\frac{4 \log (1 / \delta)}{|\mathcal{S}\backslash\mathcal{S}_\eta|} \right] \nonumber\\
    &\leq \sum_{\mathcal{S}_\eta \subseteq \mathcal{S}} \text{Pr}\left[\mathcal{R}_\mathcal{D}(\mathcal{A}(\mathcal{S}_\eta)) \leq \mathcal{R}_{\mathcal{S}\backslash\mathcal{S}_\eta}(\mathcal{A}(\mathcal{S}_\eta)) \right. \nonumber \nonumber\\
    & \qquad \left.+ \sqrt{\frac{2 \mathcal{R}_{\mathcal{S}\backslash\mathcal{S}_\eta}(\mathcal{A}(\mathcal{S}_\eta)) \log (1 / \delta)}{|\mathcal{S}\backslash\mathcal{S}_\eta|}}+\frac{4 \log (1 / \delta)}{|\mathcal{S}\backslash\mathcal{S}_\eta|}\right] \nonumber\\
    &= {m\choose \eta m}\delta \leq \left(\frac{e}{\eta}\right)^{\eta m}\delta \nonumber
\end{align}
Plug in $\delta^\prime=\left(\frac{e}{\eta}\right)^{\eta m}\delta$, and use the assumption $\eta \leq \frac{1}{2}$, which implies $|\mathcal{S}\backslash\mathcal{S}_\eta|\geq \frac{m}{2}$, then, with the probability of at least $1-\delta^\prime$, we have that
\begin{align}
    \mathcal{R}_\mathcal{D}(\mathcal{A}&(\mathcal{S}_\eta)) \nonumber \\
    &\leq \mathcal{R}_{\mathcal{S}\backslash\mathcal{S}_\eta}(\mathcal{A}(\mathcal{S}_\eta))\nonumber \\
    &\quad + \sqrt{4\mathcal{R}_{\mathcal{S}\backslash\mathcal{S}_\eta}(\mathcal{A}(\mathcal{S}_\eta))\left(\eta\log\frac{e}{\eta} +\frac{1}{m}\log\frac{1}{\delta^\prime}\right)} \nonumber \\
    &\quad + 8\left(\eta\log\frac{e}{\eta}+\frac{1}{m}\log\frac{1}{\delta^\prime}\right), \nonumber
\end{align}
which concludes the proof.
\end{proof}

With the above lemmas, we can derive the generalization bound based on the complexity of decision boundary.

\begin{proof}[Proof of Theorem \ref{thm:core}]

According to Lemma \ref{lemma:core lemma}, with the probability of at least $1-\delta$, we have
\begin{equation}
\mathcal{R}_{\mathcal{S}\backslash\mathcal{S}_\eta}(\mathcal{A}(\mathcal{S}_\eta)) \leq \epsilon + \sqrt{\frac{1}{2(1-\eta)m}\log\frac{1}{\delta}}. \nonumber
\end{equation}

Through combining this with Lemma \ref{lemma:core lemma}, with the probability of at least $1-2\delta$, we have
\begin{equation}
\label{eq:r}
    \mathcal{R}_\mathcal{D}(\mathcal{A}(\mathcal{S}_\eta)) \leq \Omega + \sqrt{4\Omega\Delta} + 8\Delta, 
\end{equation}
where
\begin{equation}
    \Omega=\epsilon + \sqrt{\frac{1}{2(1-\eta)m}\log\frac{1}{\delta}}, \nonumber
\end{equation}
\begin{equation}
    \Delta=\eta\log\frac{e}{\eta}+\frac{1}{m}\log\frac{1}{\delta}. \nonumber
\end{equation}
Plugging the equality of $\mathcal{R}_\mathcal{D}\left(\mathcal{A}\left(\mathcal{S}\right)\right)\leq \mathcal{R}_\mathcal{D}\left(\mathcal{A}\left(\mathcal{S}_\eta\right)\right) + \epsilon$ in Lemma \ref{lemma:risk diff} into Eq. \ref{eq:r} yields the desired inequality of Eq. \ref{eq:core}.

When $m$ is sufficient large, $\sqrt{4\Omega\Delta}$ can be dropped due to $\sqrt{4\Omega\Delta}\leq \Omega + \Delta$. Considering $\eta\leq 0.5$, $\Omega\leq \sqrt{\frac{1}{2m}\log\frac{1}{\delta}} + \epsilon=\mathcal{O}(\frac{1}{\sqrt{m}}+\epsilon)$. The term $\frac{1}{m}\log\frac{1}{\delta}$ in $\Delta$ can be dropped because it has a faster convergence speed compared to $\sqrt{\frac{1}{2m}\log\frac{1}{\delta}}$ in $\Omega$. Because $\log\frac{1}{\delta}$ is considered as a constant, we have
\begin{equation}
    \mathcal{R}_\mathcal{D}(\mathcal{A}(\mathcal{S})) \leq \mathcal{O}(\frac{1}{\sqrt{m}}+\epsilon+\eta\log\frac{1}{\eta}). \nonumber
\end{equation}
The proof of Theorem \ref{thm:core} is finished.
\end{proof}

\section{Conclusion}
In this paper, we empirically and theoretically explored the relationship between decision boundary variability and generalization in neural networks, through the notions of algorithm DB variability and data DB variability, respectively. A significant negative correlation between the decision boundary variability and generalization performance is observed in our experiments. As for the theoretical results, two lower bounds based on algorithm DB variability and an upper bound based on data DB variability are proposed, respectively, for the sake of enhancing our findings.


\small
\bibliographystyle{IEEEtranN}
\bibliography{consistency}

\begin{IEEEbiography}[{\includegraphics[width=1in,height=1.25in,clip,keepaspectratio]{shiye lei}}]{Shiye Lei}
received M.Phil in computer science from the University of Sydney in 2022. He is currently a PhD student in the School of Computer Science at the University of Sydney. His research interests include deep learning theory and the explainability of AI.
\end{IEEEbiography}

\begin{IEEEbiography}[{\includegraphics[width=1in,height=1.25in,clip,keepaspectratio]{fengxiang}}]{Fengxiang He}
is a Lecturer at Artificial Intelligence and its Applications Institute, School of Informatics, University of Edinburgh. He received BSc in statistics from University of Science and Technology of China, MPhil and PhD in computer science from the University of Sydney. He was an Algorithm Scientist at JD Explore Academy, JD.com, Inc. His research interest is trustworthy AI, including deep learning theory, privacy-preserving machine learning, decentralized learning, and algorithmic game theory. He is an Area Chair of UAI, AISTATS, and ACML.
\end{IEEEbiography}

\begin{IEEEbiography}[{\includegraphics[width=1in,height=1.25in,clip,keepaspectratio]{AMA_yuan_yancheng}}]{Yancheng Yuan}
is an Assistant Professor of Department of Applied Mathematics, The Hong Kong Polytechnic University. His research focuses on the optimization theory, algorithm design and software development, mathematical foundation of data science and data-driven applications. He has published in prestigious journals and conferences, including JMLR, SIOPT, IJAA, OMS, ICML, Neurips, and WWW.
\end{IEEEbiography}

\begin{IEEEbiography}[{\includegraphics[width=1in,height=1.25in,clip,keepaspectratio]{dacheng tao}}]{Dacheng Tao}
is Professor of Computer Science and Laureate Fellow at the School of Computer Science, the University of Sydney. He was a Senior Vice President of JD.com, Inc. and the inaugural Director of its JD Explore Academy. His research focuses on developing artificial intelligence and data science via statistics and mathematics, which has been cited over 90,000 times with hh-index 150. He is the recipient of the Australian Museum Eureka Prizes twice in 2015 and 2020, the IEEE ICDM Research Contributions Award in 2018, and the IEEE Computer Society McCluskey Technical Achievement Award in 2021. He is a Fellow of the Australian Academy of Science, the Royal Society of NSW, the World Academy of Sciences, IEEE, AAAS, and ACM.
\end{IEEEbiography}

\end{document}